\pdfoutput=1


\documentclass[preprint,12pt]{elsarticle}
\makeatletter
\def\ps@pprintTitle{%
 \let\@oddhead\@empty
 \let\@evenhead\@empty
 \def\@oddfoot{\centerline{\thepage}}%
 \let\@evenfoot\@oddfoot}
\makeatother




\usepackage{graphicx,amsmath,amssymb}
\usepackage{amsthm}
\usepackage[mathscr]{eucal}
\usepackage{color,hyperref}
\usepackage[normalem]{ulem}
\usepackage{lineno}

\graphicspath{{figs/}{figs/circle/},{figs/1Ddata/},{figs/clustering/},
{figs/2Dclusters/},{figs/examples/},{figs/lines/},{figs/planes/},
{figs/parabolas/}}

\newtheorem{proposition}{Proposition}
\newtheorem{lemma}{Lemma}
\newtheorem{theorem}{Theorem}
\newtheorem{corollary}{Corollary}

\theoremstyle{definition}
\newtheorem{definition}{Definition}
\newtheorem{remark}{Remark}

\theoremstyle{remark}
\newtheorem{example}{Example}

\newcommand{\dwassone}{\ensuremath{W_1}}

\newcommand{\dwassp}{\ensuremath{W_p}}
\newcommand{\dwassinf}{\ensuremath{W_\infty}}
\newcommand{\real}{\ensuremath{\mathbb{R}}}
\newcommand{\sphere}{\ensuremath{\mathbb{S}}}
\newcommand{\expect}[1]{\ensuremath{\mathbb{E}\left[#1\right]}}
\newcommand{\borel}{\ensuremath{\mathcal{P}}}
\newcommand{\joint}{\ensuremath{\Gamma}}
\newcommand{\inner}[2]{\left\langle #1,#2 \right\rangle}
\newcommand{\volume}{\ensuremath{\nu}}
\newcommand{\area}{\ensuremath{\omega}}


\begin{document}

\begin{frontmatter}



\title{The Shape of Data and Probability Measures}


\author[ad1]{Diego H. D\'{i}az Mart\'{i}nez}
\author[ad2]{Facundo M\'{e}moli}
\author[ad1]{Washington Mio}

\address[ad1]{Department of Mathematics, Florida State University, Tallahassee,
FL 32306-4510 USA}
\address[ad2]{Department of Mathematics, Ohio State University, Columbus, OH
43210-1174 USA}

\begin{abstract} We introduce the notion of multiscale covariance tensor fields (CTF) associated with Euclidean random variables as a gateway to the shape of their distributions. Multiscale CTFs quantify variation of the data about every point in the data landscape at all spatial scales, unlike the usual covariance tensor that only quantifies global variation about the mean. Empirical forms of localized covariance  previously have been used in data analysis and visualization, for example, in local principal component analysis, but we develop a framework for the systematic treatment of theoretical questions and mathematical analysis of computational models. We prove strong stability theorems with respect to the Wasserstein distance between probability measures, obtain consistency results for estimators, as well as bounds on the rate of convergence of empirical CTFs. These results show that CTFs are robust to sampling, noise and outliers. We provide numerous illustrations of how CTFs let us extract shape from data and also apply CTFs to manifold clustering, the problem of categorizing data points according to their noisy membership in a collection of possibly intersecting smooth submanifolds of Euclidean space. We prove that the proposed manifold clustering method is stable and carry out several experiments to illustrate the method. \end{abstract}

\begin{keyword}
shape of data \sep multiscale data analysis \sep covariance fields
\sep Fr\'{e}chet functions \sep manifold clustering
\end{keyword}

\end{frontmatter}


\section{Introduction} 
\label{S:intro}

Probing, analyzing and visualizing the shape of complex data are
challenges that are magnified by the intricate dependence of their
structural properties, as basic as dimensionality, on location
and scale (cf.\,\cite{ljm09}). As such, resolving and integrating the
geometry and topology
of data across scales are problems of foremost importance. In this
paper, we develop the notion of multiscale covariance tensor
fields (CTF) associated with Euclidean random variables and
show that many properties of the shape of their distributions
become accessible through CTFs, which provide
stable representations that can be estimated reliably from data. 

For a random vector $y \in \real^d$, scale dependence is
controlled by a kernel function $K(x,y,\sigma) \geqslant 0$,
where $x,y \in \real^d$ and $\sigma > 0$ is the scale parameter.
The idea is that from the standpoint of $x$, at scale $\sigma > 0$,
the kernel masks the distribution by attributing weight
$K(x,y,\sigma)$ to data located at $y$, creating a windowing
effect. More simply put, $K (x,y,\sigma)$ quantifies how well an
observer at $x$ sees data at $y$ at scale $\sigma$. Covariation
of the weighted data is measured relative to every point
$x \in \real^d$, not just about the mean as is common practice,
thus giving rise to a multiscale covariance field.  Special cases of these
covariance fields were introduced in \cite{mmm13}, targeting
applications to such problems as detection of local scales
and feature rich points in shapes.  Here we present a more systematic
treatment that includes a broader formulation of multiscale
CTFs, stability theorems that ensure that properties of probability
measures derived from multiscale CTFs are robust, as well
as consistency results and convergence rates for empirical
CTFs. We prove stability of CTFs with respect to the Wasserstein
distance between probability measures, a metric that is finding
uses in an ever expanding landscape of problems and whose
origins are in optimal transport theory \cite{villani03,villani09}.
Since Wasserstein distance metrizes weak convergence of
probability measures, we obtain a strong stability result that
ensures that if two probability distributions are similar
in a weak sense, then their multiscale CTFs are uniformly
close over the entire domain. Convergence rates are derived
from the stability theorems and results by Fournier and Guillin
\cite{fournier14} and Garc\'{i}a-Trillos and Slep\v{c}ev \cite{gts15} on
convergence of empirical measures. The standard covariance
tensor of a random vector $y \in \real^d$ quantifies covariation
of $y$ about the mean, but may be extended to a full covariance
field by considering covariation about arbitrary points.
Nonetheless, this field provides no information about the
organization of the data other than that
already contained in the covariance about the mean.
Thus, a localized formulation is essential for gaining additional
insight into the shape of data. 

The trace of a multiscale CTF is a scalar field that gives a
multiscale analogue of the classical Fr\'{e}chet
function $V(x) = \expect{\|y-x\|^2}$ of a random variable $y$
with finite second moment. The Fr\'{e}chet function provides a
more geometric interpretation of the mean as the unique
minimizer of $V$; that is, the point $\mu \in \real^d$ with respect
to which the spread of $y$ is minimal. Similarly, the local extrema
and other properties of the multiscale Fr\'{e}chet function
provide a wealth of information about the distribution of $y$.
In fact, we show that the distribution of any random vector
may be fully recovered from the multiscale Fr\'{e}chet
function associated with the Gaussian kernel.

Several variants of empirical localized or weighted covariance
previously have been used in data analysis, but we develop a framework
for the formulation and systematic treatment of such problems.
Allard et al. have developed a computational model termed geometric
multi-resolution analysis for multiscale data analysis based on covariance
localized to hierarchies of dyadic cubes \cite{allardetal12}. In computer
graphics, local principal component analysis (PCA) is commonly
used in the estimation of normals to surfaces
from point-cloud data \cite{berk-caelli} for surface reconstruction;
see also \cite{rusu09} and references therein. In
computer vision, tensor voting by Medioni et al. \cite{medioni00} has been
applied to multiple image analysis and processing tasks.
Brox et al. have used empirical covariance
weighted by the isotropic Gaussian kernel in non-parametric
density estimation targeting applications in motion tracking
\cite{brcs07}.   In the literature dealing with clustering, especially clustering  of multiple possibly intersecting manifolds, local PCA ideas have been used in the works of Kushnir et al.  \cite{kushnir2006fast}, Goldberg et al. \cite{goldberg2009multi}, Gong et al. \cite{gong2012robust}, Wang et al. \cite{wang2011spectral}, and in a series of papers by Arias-Castro and collaborators \cite{arias2011clustering,arias2011spectral,arias2013spectral}. 

\begin{figure}[h!]
\begin{center}
\begin{tabular}{ccc}
\begin{tabular}{c}
\includegraphics[width=0.22\linewidth]{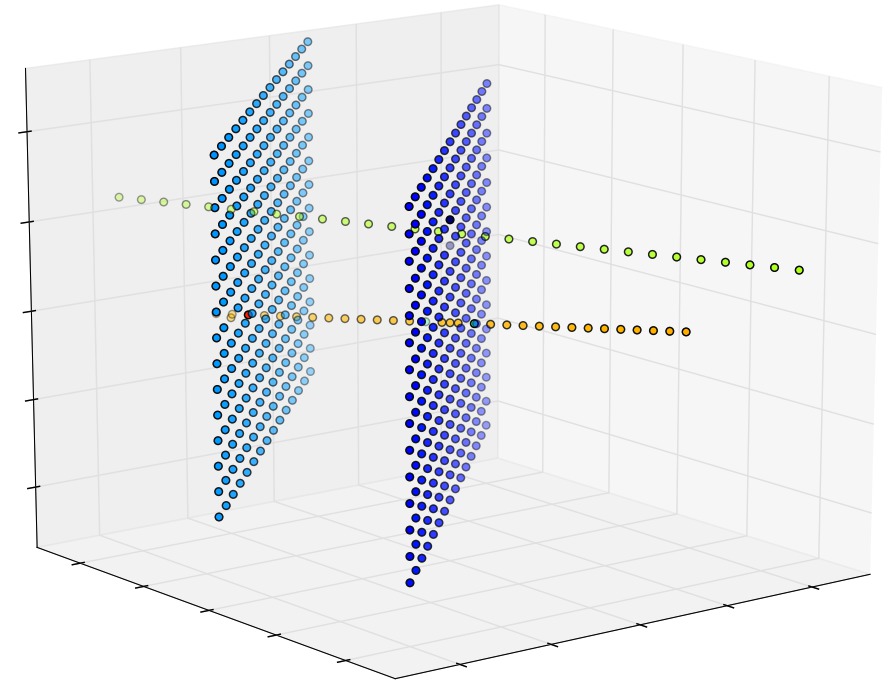} 
\end{tabular}
\quad & \quad
\begin{tabular}{c}
\includegraphics[width=0.2\linewidth]{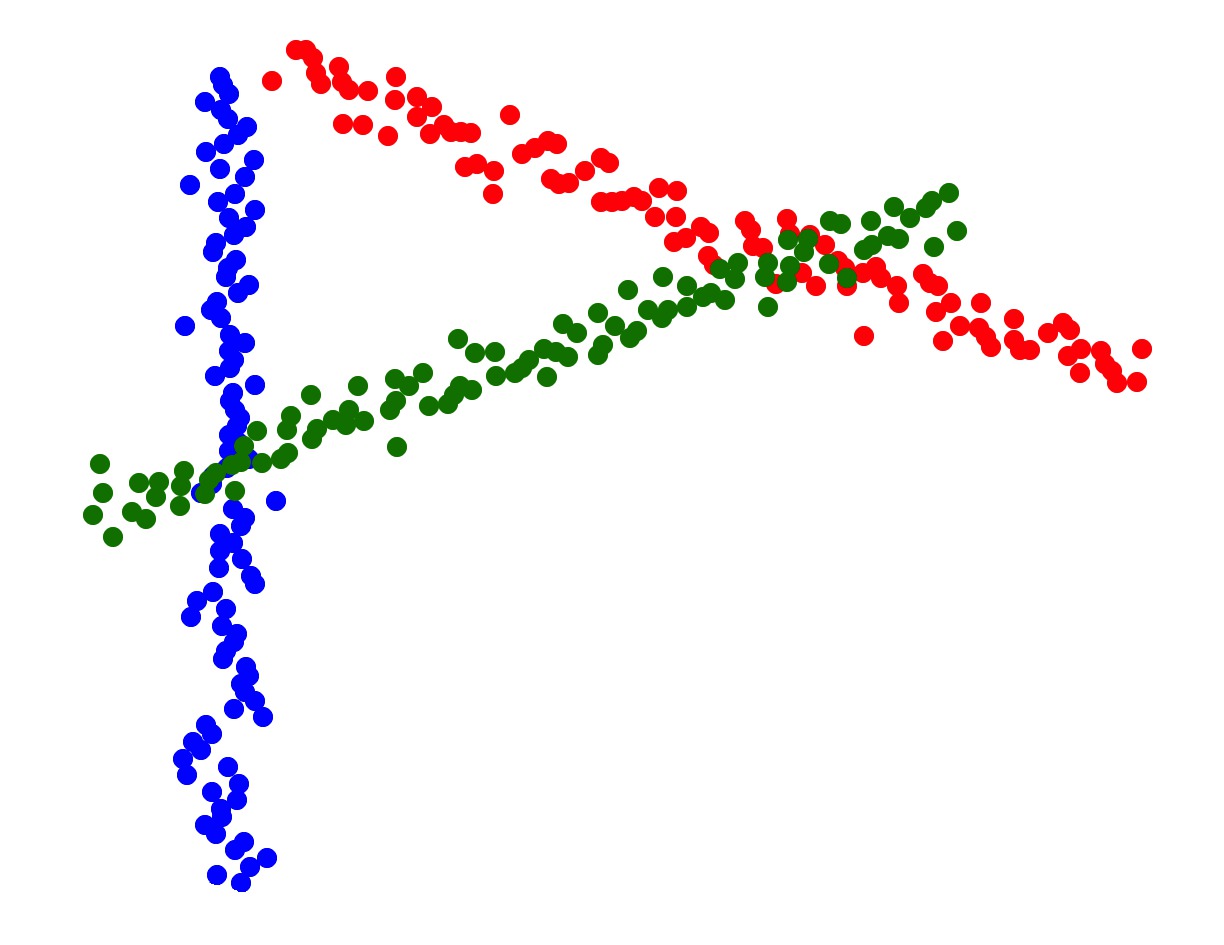}
\end{tabular}
\quad & \quad
\begin{tabular}{c}
\includegraphics[width=0.22\linewidth]{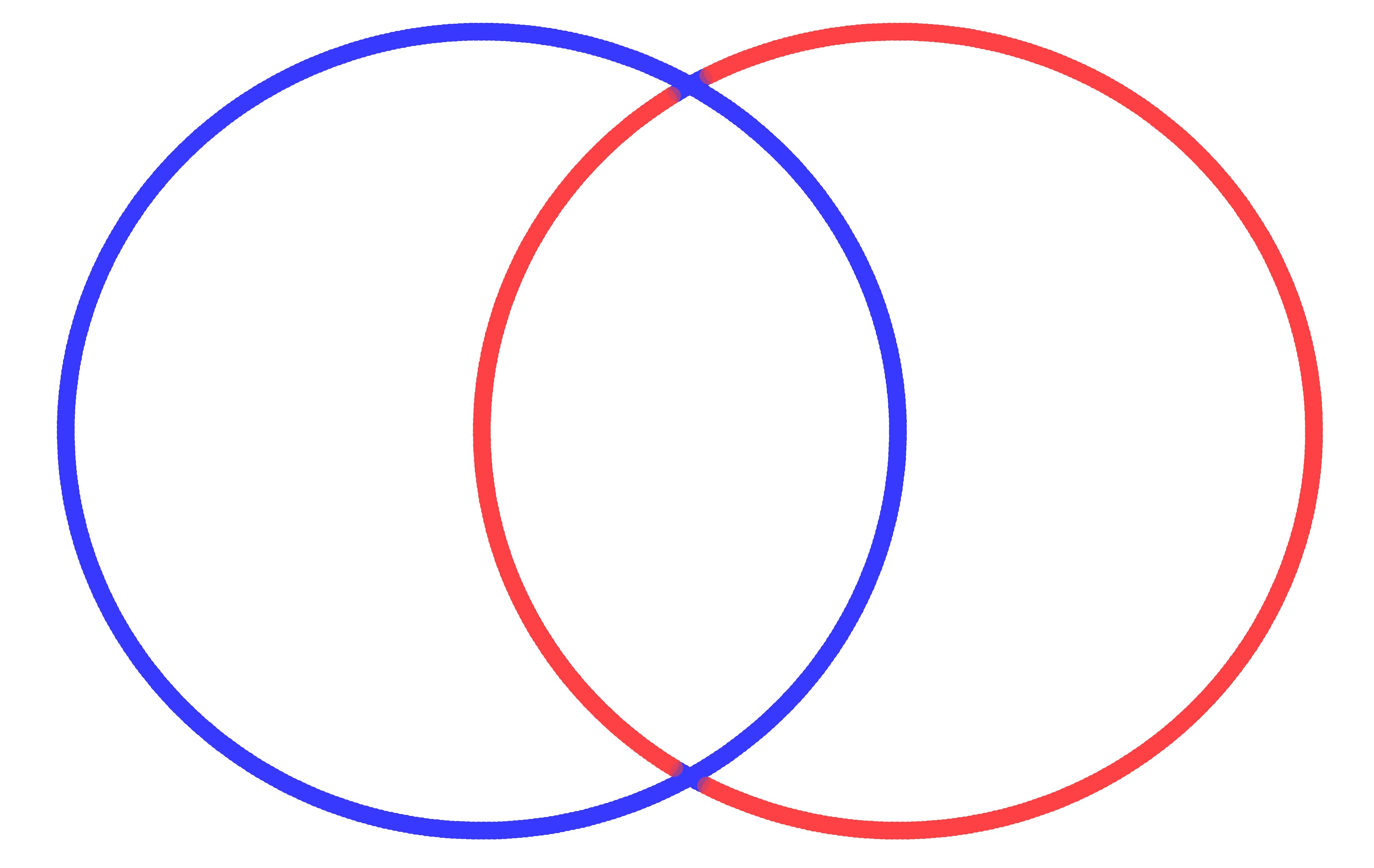} 
\end{tabular} \\
(a) \quad & \quad (b) \quad & \quad (c)
\end{tabular}
\end{center}
\caption{Examples of data clustered along intersecting
manifolds.}
\label{F:clusters}
\end{figure}
The special case of affine linear subspaces, known as subspace
clustering, has been addressed in the machine learning and
computer vision literature by many authors using a variety of
techniques (cf.\,\cite{vidal05, govindu05, lerman09, lerman11,
lerman12, liu13, elhamifar13,soltanolkotabi2014robust}). More general manifold clustering has
been considered in \cite{polito2001grouping,gionis2005dimension,pless05,kushnir2006fast,haro2006stratification,goldberg2009multi,gong2012robust,wang2011spectral,arias2013spectral,wang2014riemannian}. In our approach, we exploit the fact that
localized covariance tensors encode rich information about the
tangential structure of the submanifolds that underlie the data.
Combined with information about the (relative) positions
of the data points, they yield an effective data representation for
manifold clustering.  Although several different clustering techniques
could be applied to the ``tensorized'' data, we use the single linkage
hierarchical method because it produces provenly stable
dendrograms. In conjunction with the stability and consistency
results for covariance fields, this ensures that the manifold clustering
method is stable at all steps. Dendrogram stability is analyzed
in the framework of \cite{memoli10}. 

\paragraph{Contributions and Organization of the paper}

The paper includes several
illustrations and applications of
CTFs to data analysis. For example, to illustrate how geometric
information can be extracted from CTFs, we show that the
curvature of plane curves and the principal curvatures of
surfaces in $\real^3$
can be calculated from the spectrum of multiscale CTFs. 
Thus, multiscale covariance tensors give a way of extending
these infinitesimal measures of geometric complexity to
all scales and general probability distributions, not just
those supported on smooth submanifolds. We also apply
multiscale CTFs to manifold clustering, the problem of
clustering Euclidean data that are organized along a finite
union of possibly intersecting smooth
submanifolds. Fig.\,\ref{F:clusters} shows three such
examples.

The main goals of the paper are: (i) to establish the foundations
for analysis, visualization and management of data with
methods based on multiscale covariance tensor fields,
and (ii) to describe applications that characterize the
usefulness of CTFs in data analysis. In Section  \ref{S:covariance},
we formulate the notion of multiscale CTFs for a broad class of
kernels and give examples that illustrate how CTFs reveal
the geometry of data. In Section \ref{S:geometry}, we show that
the curvature of a plane curve and the principal curvatures
of a surface in $\real^3$ can be recovered from small-scale
covariance. Section \ref{S:stability}
is devoted to the main theoretical developments. We prove
stability and consistency theorems for multiscale covariance
tensor fields under mild regularity assumptions on the kernel,
and also analyze rates of convergence that are
important for applications in data analysis. Since some
discontinuous kernels are of practical interest, we also
investigate convergence results for such kernels,
including a pointwise central limit theorem. Multiscale
Fr\'{e}chet functions are discussed in Section \ref{S:frechet}
and manifold clustering in Section \ref{S:clustering}. We
close with a summary and some discussion.


\section{Covariance Tensor Fields} \label{S:covariance}

\subsection{Preliminaries}

To define covariance tensor fields, we introduce some notation.
Elements of the tensor product $\real^d \otimes \real^d$ may be 
identified with bilinear forms $B \colon \real^d \times \real^d \to \real$
through the Euclidean inner product. More precisely,
a pure tensor $x \otimes y$ corresponds to the bilinear form
\begin{equation} \label{E:bilinear}
x \otimes y \, (u,v) = \inner{x}{u} \cdot \inner{y}{v},
\end{equation}
$\forall u, v \in \real^d$, where $\inner{\,}{}$ denotes Euclidean
inner product. Bilinear forms associated with more general
elements of $\real^d \otimes \real^d$ can be
described by linear extension. In Euclidean coordinates, we abuse
notation and also write the coordinate vectors of $x,y \in \real^d$ as
$x$ and $y$. With this convention, letting $A$ be the
$d \times d$ matrix $A = xy^T$, we have
\begin{equation}
x \otimes y \, (u,v) = \inner{u}{A v} ,
\end{equation}
where the superscript $T$ denotes transposition.
In this manner, using Euclidean coordinates, an element of
$\real^d \otimes \real^d$ also can be identified with a $d \times d$
matrix by linear extension of the correspondence
$x \otimes y \leftrightarrow A$. Through these identifications,
we refer to an element $\Sigma \in \real^d \otimes \real^d$
 interchangeably as a tensor, a bilinear form or a matrix. We equip
$\real^d \otimes \real^d$ with the inner product defined on
pure tensors by
\begin{equation} \label{E:inner}
\inner{x_1 \otimes y_1}{x_2 \otimes y_2} =
\inner{x_1}{x_2} \inner{y_1}{y_2}
\end{equation}
and extended linearly to $\real^d \otimes \real^d$. Thus, the
corresponding norm satisfies
\begin{equation} \label{E:norm}
\|x \otimes y\| = \|x\| \|y\| \,,
\end{equation}
for any $x, y \in \real^d$. In matrix representation,
this is the Frobenius norm.

Throughout the paper, we view $\real^d$ as a measurable space
equipped with the Borel $\sigma$-algebra for the
Euclidean metric. Let $y$ be an $\real^d$-valued random
variable distributed according to the probability measure $\alpha$.
Suppose that $y$ has expected value $\expect{y} = \mu \in \real^d$
and finite second moment. As a motivation for the definition of 
multiscale CTFs, recall that the covariance
tensor of $y$ is defined as
\begin{equation}
\Sigma_\alpha (\mu) = \expect{(y-\mu) \otimes (y-\mu)} =
\int_{\real^d} (y-\mu) \otimes (y-\mu) \, \alpha (dy)
\in \real^d \otimes \real^d \,.
\end{equation}
In matrix notation,
\begin{equation}
\Sigma_\alpha (\mu) = \int_{\real^d} (y-\mu) (y-\mu)^T \, \alpha (dy) \,.
\end{equation}
The bilinear form associated with $\Sigma_\alpha (\mu)$
clearly is symmetric and positive semi-definite.  

Covariation of $y$ may be measured with respect
to any $x \in \real^d$, not just $\mu$. Thus,
$\Sigma_\alpha (\mu)$ may be extended to a global
covariance tensor field
$\Sigma_\alpha \colon \real^d \to \real^d \otimes \real^d$
given by
\begin{equation} \label{E:field}
\Sigma_\alpha (x) = \int_{\real^d} (y-x) \otimes (y-x) \, \alpha (dy) \,. 
\end{equation}
Note, however, that
\begin{equation}
\Sigma_\alpha (x) = \Sigma_\alpha (\mu) + (\mu-x) \otimes (\mu-x) \,,
\end{equation}
for any $x \in \real^d$. Thus, for $x \ne \mu$, $\Sigma_\alpha (x)$
does not reveal any information about the distribution of $y$
other than that already contained in $\Sigma_\alpha (\mu)$. 
In contrast, as we shall see below, multiscale
analogues are rich in information about the
shape of $\alpha$.


\subsection{Multiscale Covariance Tensor Fields}

We adopt the notation $\volume_d$ for the volume of the unit ball
in $\real^d$ and $\area_{d-1}$ for the ``surface area'' of the unit
sphere $\sphere^{d-1} \subset \real^d$, $d \geq 1$. Recall that 
$\area_{d-1} = 2 \pi^{d/2} / \Gamma(d/2)$, where $\Gamma(\cdot)$ is
the Gamma function, and $\area_{d-1} = d\, \volume_d$.
We make the convention that $\volume_0 = 1$.

Let $y$ be an $\real^d$-valued random variable with distribution
$\alpha$ and let $K$ be a multiscale kernel; that is, a measurable
function $K: \real^d \times \real^d \times (0, \infty) \to \real$
such that $K(x,y,\sigma) \geqslant 0$, for any $x,y \in \real^d$
and $\sigma > 0$.
\begin{definition} \label{D:ctf}
The {\em multiscale covariance tensor field} (CTF) of $y$ associated
with the kernel $K$ is the one-parameter family of tensor fields,
indexed by $\sigma \in (0, \infty)$, given by
\begin{equation} \label{E:ctf}
\Sigma_\alpha (x, \sigma) := 
\int_{\real^d} (y-x) \otimes (y-x) K(x,y,\sigma)\, \alpha (dy) \,,
\end{equation}
provided that the integral converges for each
$x \in \real^d$ and $\sigma > 0$.
\end{definition}
\begin{remark}
Note that $\Sigma_\alpha$ depends only on the probability measure
$\alpha$, not on $y$. For this reason, we refer to $\Sigma_\alpha$
interchangeably as the multiscale CTF of the random variable $y$
or the probability measure $\alpha$. 
\end{remark}
$\Sigma_\alpha (x, \sigma)$ measures the covariation of $y$
about $x$ with probability mass at $y$ weighted by $K (x,y, \sigma)$.
It is simple to verify that the bilinear form
$\Sigma_\alpha (x, \sigma)$ is symmetric and positive semi-definite.
Note that if $K$ is bounded for each $\sigma>0$, that is,
$\exists M_\sigma > 0$ such that $K(x,y,\sigma) \leq M_\sigma$,
$\forall x,y \in \real^d$, then $\Sigma_\alpha (x, \sigma)$ is
well defined for any random variable $y$ with finite second
moment. In particular if $K \equiv 1$, 
$\Sigma_\alpha (x, \sigma) = \Sigma_\alpha (x)$, $\forall x \in \real^d$.
However, as our primary goal is to study the organization of
data and random variables at scales ranging from local
to global, we consider kernels in $\real^d$ that satisfy additional
decay conditions as they produce a windowing effect. The
kernels are constructed as follows. 

\begin{definition} \label{D:kernel}
Let $d$ be a positive integer and $f \colon [0, \infty) \to \real$
a bounded and measurable function satisfying:
\begin{itemize}
\item[(a)] $f (r) \geqslant 0$, $\forall r \in [0, \infty)$;
\item[(b)]  $M_d = \int_0^\infty r^{\frac{d}{2}-1} f(r) \,dr < \infty$;
\item[(c)] There is $C > 0$ such that $r f(r) \leq C$,
$\forall r \in [0, \infty)$.
\end{itemize}
The multiscale kernel $K \colon \real^d \times \real^d \times (0, \infty) \to \real$
associated with $f$ is defined as
\begin{equation} \label{E:kernel}
K(x,y,\sigma) := \frac{1}{C_d (\sigma)} \,
f \left(\frac{\Vert y-x\Vert^2}{\sigma^2}\right) ,
\end{equation}
where $C_d (\sigma) = \frac{1}{2} \sigma^d M_d \,\area_{d-1}$. 
\end{definition}

Condition (b) in the definition implies that the normalizing
constant $C_d (\sigma)$ is well defined. The normalization
is adopted so that $\int K(x,y,\sigma) \, dy = 1$,
$\forall x \in \real^d$ and $\forall \sigma > 0$.
Condition (c) guarantees that the integral in \eqref{E:ctf}
is convergent for any probability measure $\alpha$. Henceforth,
for convenience, we assume that $\sup f =1$. This is not restrictive
since scaling $f$ does not change the kernel $K$ because of
the normalization.

Whereas we investigate properties of multiscale CTFs in a
more general setting, our examples and
experiments focus on two special kernels:
\begin{itemize}
\item[(i)] The isotropic Gaussian kernel
\begin{equation}
G(x,y, \sigma) = \frac{1}{(2 \pi \sigma^2)^{d/2}} 
\exp \left( - \frac{\|y-x\|^2}{2 \sigma^2} \right) ,
\end{equation}
which is associated with the function $f(x) = e^{-x/2}$;
\item[(ii)] The truncation kernel
\begin{equation}
T(x,y,\sigma) = \frac{1}{\sigma^d \volume_d} \,
\chi \left(\frac{\|y-x\|^2}{\sigma^2} \right)
\end{equation}
associated with the characteristic function
$\chi \colon [0, \infty) \to \real$  of the unit interval $\left[ 0,1\right]$. 
In measuring covariation of random variables about $x$, the
kernel $T$ attributes a uniform weight to mass at points
within the closed ball of radius $\sigma$ centered at $x$
and weight zero to mass elsewhere.
\end{itemize}


%

\begin{remark} \label{R:iso}
The kernel $K$ defined in \eqref{E:kernel} is
homogeneous and isotropic;
that is, for any isometry $\varphi \colon \real^d \to \real^d$,
$K (\varphi (x), \varphi (y), \sigma) = K (x,y,\sigma)$,
$\forall x,y \in \real^d$ and $\sigma > 0$. Moreover,
if we write $\varphi (x) = U x + b$, with $U \in O(d)$ and
$b \in \real^d$, then 
\begin{equation}
U \, \Sigma_{\alpha}(x,\sigma) \, U^T =
\Sigma_{\varphi_\ast (\alpha)}(\varphi(x),\sigma),
\end{equation}
for any $(x, \sigma) \in \real^d \times (0, \infty)$. Here
$O(d)$ is the group of $d \times d$ orthogonal matrices and
$\varphi_\ast (\alpha)$ is the pushforward of $\alpha$ under
$\varphi$. 
\end{remark}

\begin{remark} \label{R:support}
Multiscale covariance tensor fields can be defined for
any positive Borel measure $\alpha$ that satisfies
\begin{equation}
\int_{\real^d} \|z\|^2 f (\|z\|^2) \, \alpha(dz) < \infty \,,
\end{equation}
not just for probability measures. In particular, if $f$ has
compact support, covariance fields
are defined for any locally finite Borel measure
$\alpha$; that is, measures for which every point $p \in \real^d$
has an open neighborhood $U_p$ such that $\alpha (U_p) < \infty$.
\end{remark}

We conclude this section with examples that support
our contention that multiscale covariance tensor fields are
rich in information about the shape of data. 

%
%

\begin{example} \label{E:dimensionality}
This example shows that the spectrum of multiscale
covariance tensors allow us to estimate the dimensionality of data
in a scale dependent manner. We consider the data points
$y_1, \ldots, y_n$ in $\real^2$, shown in Figure \ref{F:dimension},
and calculate $\Sigma_{\alpha_n}$ centered at one of the data
points for the Gaussian kernel at scales $\sigma = 0.1$ and
$\sigma = 2$. Here $\alpha_n$ denotes the empirical measure
$n^{-1} \sum_{i=1}^n \delta_{y_i}$. The covariance tensors are
depicted as ellipses whose principal axes are in the direction of
the eigenvectors of the covariance matrix and principal radii
are proportional to $\sqrt{\lambda_1}$ and $\sqrt{\lambda_2}$,
where $0 \leq \lambda_1 \leq \lambda_2$ are the eigenvalues of
the covariance. At scale $\sigma = 0.1$,
$\lambda_1/\lambda_2 = 0.908$, showing that the covariance
tensor is nearly isotropic, indicating that the ``dimension'' of the
data is 2. At $\sigma = 2$, the ratio of the eigenvalues is
$0.025$, giving a highly anisotropic covariance tensor, from
which we infer that the dimension is 1.
\begin{figure}[h!]
\begin{center}
\begin{tabular}{cc}
\begin{tabular}{c}
\includegraphics[width=0.42\linewidth]{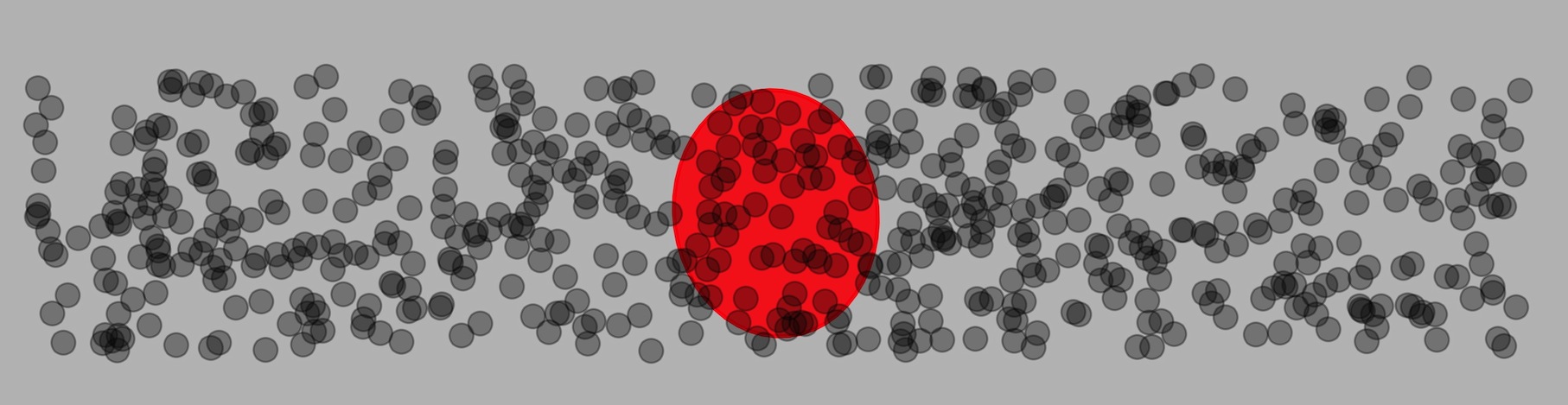} 
\end{tabular}
\qquad  & \qquad
\begin{tabular}{c}
\includegraphics[width=0.4\linewidth]{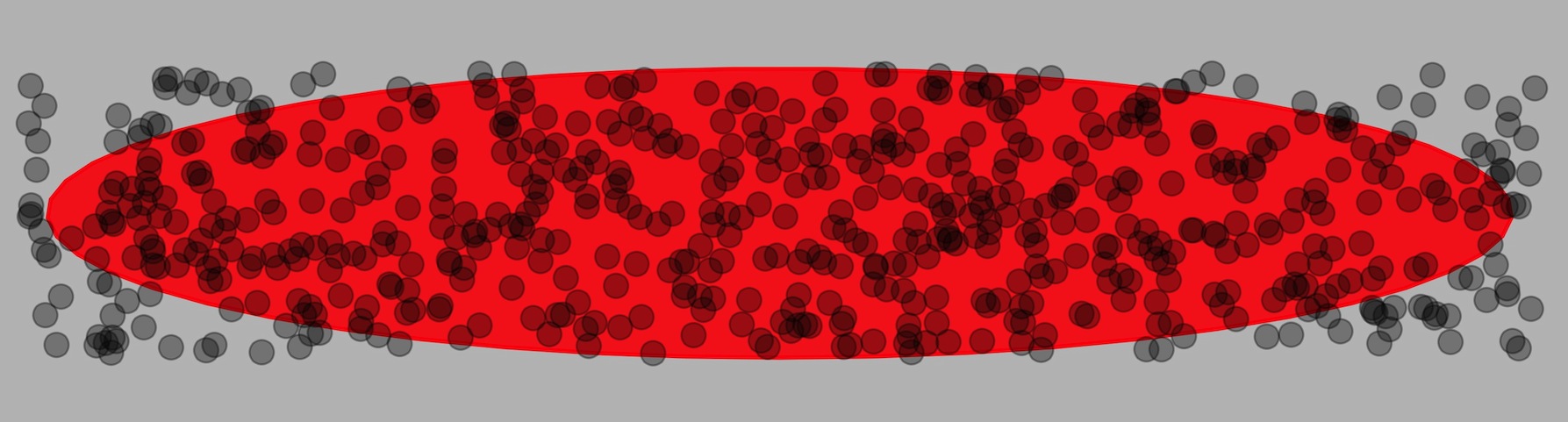}
\end{tabular} \\
$\sigma = 0.1$ \qquad  & \qquad $\sigma = 2$
\end{tabular}
\end{center}
\caption{Estimating data dimensionality at different scales
through multiscale covariance.}
\label{F:dimension}
\end{figure}

\end{example}

\begin{example}[A linear subspace of $\real^d$]
\label{E:subspace}

Let $v_1, \ldots, v_r \in \real^d$, $1 \leq r \leq d$,
be orthonormal vectors and consider the subspace
$H = <v_1, \ldots, v_r >$ that they span. Let $\alpha$
denote the singular measure supported on $H$ induced
by the volume form on $H$. The measure $\alpha$ clearly
is locally finite. We calculate multiscale
covariance fields at points $x \in H$ to show that $H$
may be recovered from $\Sigma_\alpha (x, \sigma)$.
By Remark \ref{R:iso}, we may assume that $x = 0$.
A calculation shows that for the  Gaussian kernel,
\begin{equation}
\Sigma_\alpha (0, \sigma) =
\frac{r}{(\sqrt{2\pi})^{d-r }\sigma^{d-r-2}} \sum_{i=1}^r \,
v_i \otimes v_i \,.
\end{equation}
For the truncation kernel, 
\begin{equation}
\Sigma_\alpha (0, \sigma) = \lambda_r \sum_{i=1}^r \,
v_i \otimes v_i \,,
\end{equation}
where
\begin{equation}
\lambda_r = \frac{1}{\sigma^{d-r-2}}
\frac{\volume_{r-1}}{\volume_d} \int_{-\pi/2}^{\pi/2}
\sin^2 \theta \cos^r \theta \, d\theta\,.
\end{equation}
For $r=1$, this expression simplifies to
$\lambda_1 = 2/(3 \sigma^{d-3} \volume_d)$.
Thus, for both kernels, the orthogonal complement of $H$ is
the null space of $\Sigma_\alpha (0, \sigma)$ and $H$
is the eigenspace associated with the positive
eigenvalue $\lambda_r$.
\end{example}

\begin{example}[Wedge of $n$ segments]
\label{E:wedge}
Consider the wedge (one-point union) $W$ of $n$ segments
$L_1, \ldots, L_n$ in $\real^d$ attached at the origin, as depicted in
Fig.\,\ref{F:wedge}. Each segment $L_i$ is determined by its
length $\ell_i > 0$ and a unit direction vector $v_i$. We assume
that $v_i \ne v_j$, for any $1 \leq i < j \leq n$.
Let $\alpha$ be the singular measure on $\real^d$ that is
supported on $W$ and agrees with the measure induced by
arc length on each segment $L_i$. We consider the multiscale
covariance field of $\alpha$ associated with the truncation
kernel. For $x \in L_i$, $x \ne 0$, as in the case $r=1$ in
Example \ref{E:subspace}, we have that  $\Sigma_\alpha (x, \sigma)
= (2/3 \sigma^{d-3} \volume_d) \, v_i \otimes v_i$ at small enough
scales. Thus, $\Sigma_\alpha (x, \sigma)$ has rank one. However,
at the origin,
\begin{equation}
\Sigma_\alpha (0, \sigma) = \frac{1}{3 \sigma^d \volume_d}
\sum_{i=1}^n \left( \min(\sigma, \ell_i)\right)^3 
v_i \otimes v_i \,.
\end{equation}
for any $\sigma > 0$. Thus, for $\sigma \leq
\min \{\ell_i, 1\leq i \leq n\}$,
\begin{equation}
\Sigma_\alpha (0, \sigma) = \frac{1}{3 \sigma^{d-3} \volume_d}
\sum_{i=1}^n v_i \otimes v_i \,.
\end{equation}
\begin{figure}[h!]
\begin{center}
\begin{tabular}{cc}
\begin{tabular}{c}
\includegraphics[width=0.25\linewidth]{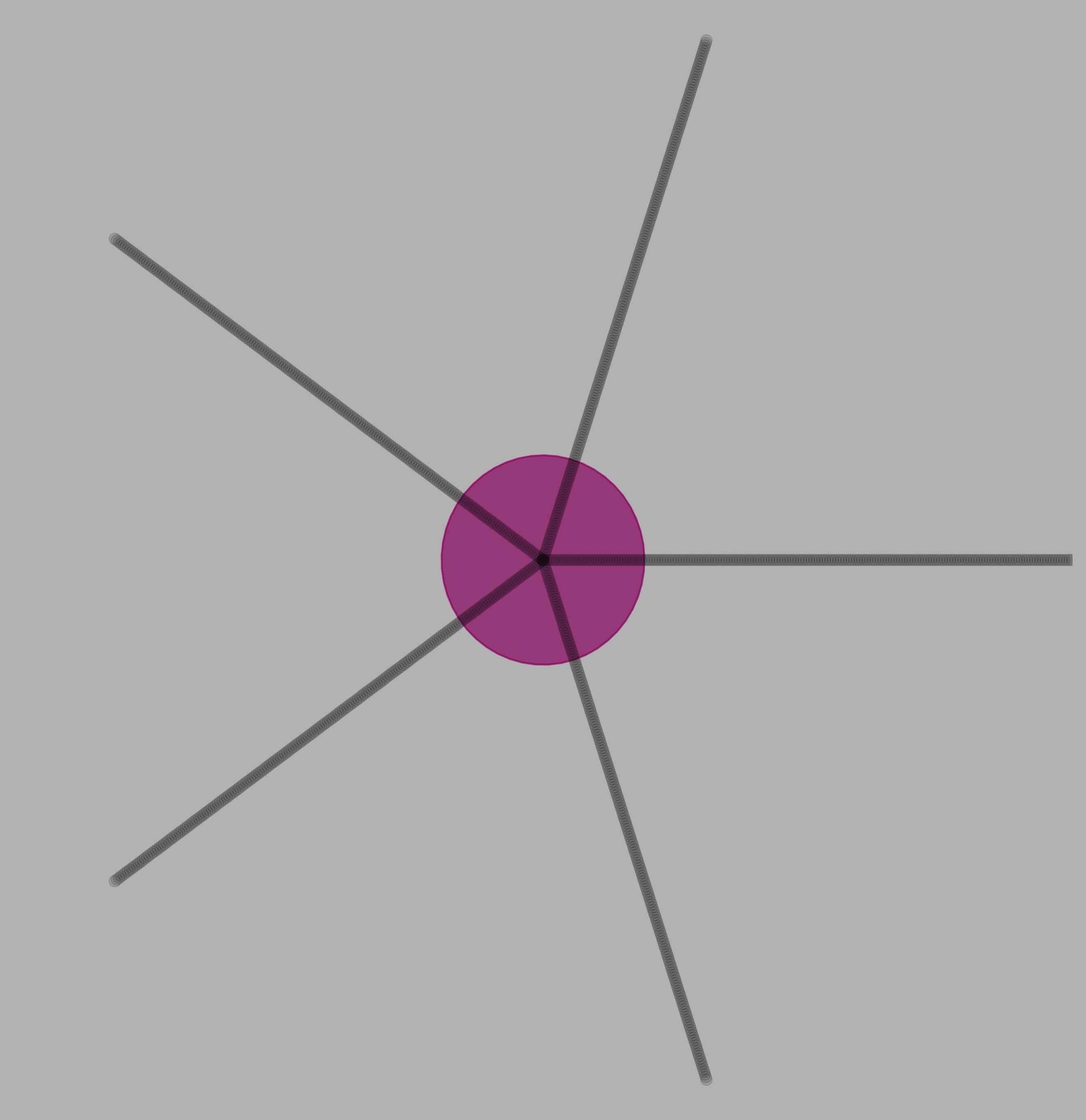} 
\end{tabular}
\qquad \quad & \quad \qquad
\begin{tabular}{c}
\includegraphics[width=0.25\linewidth]{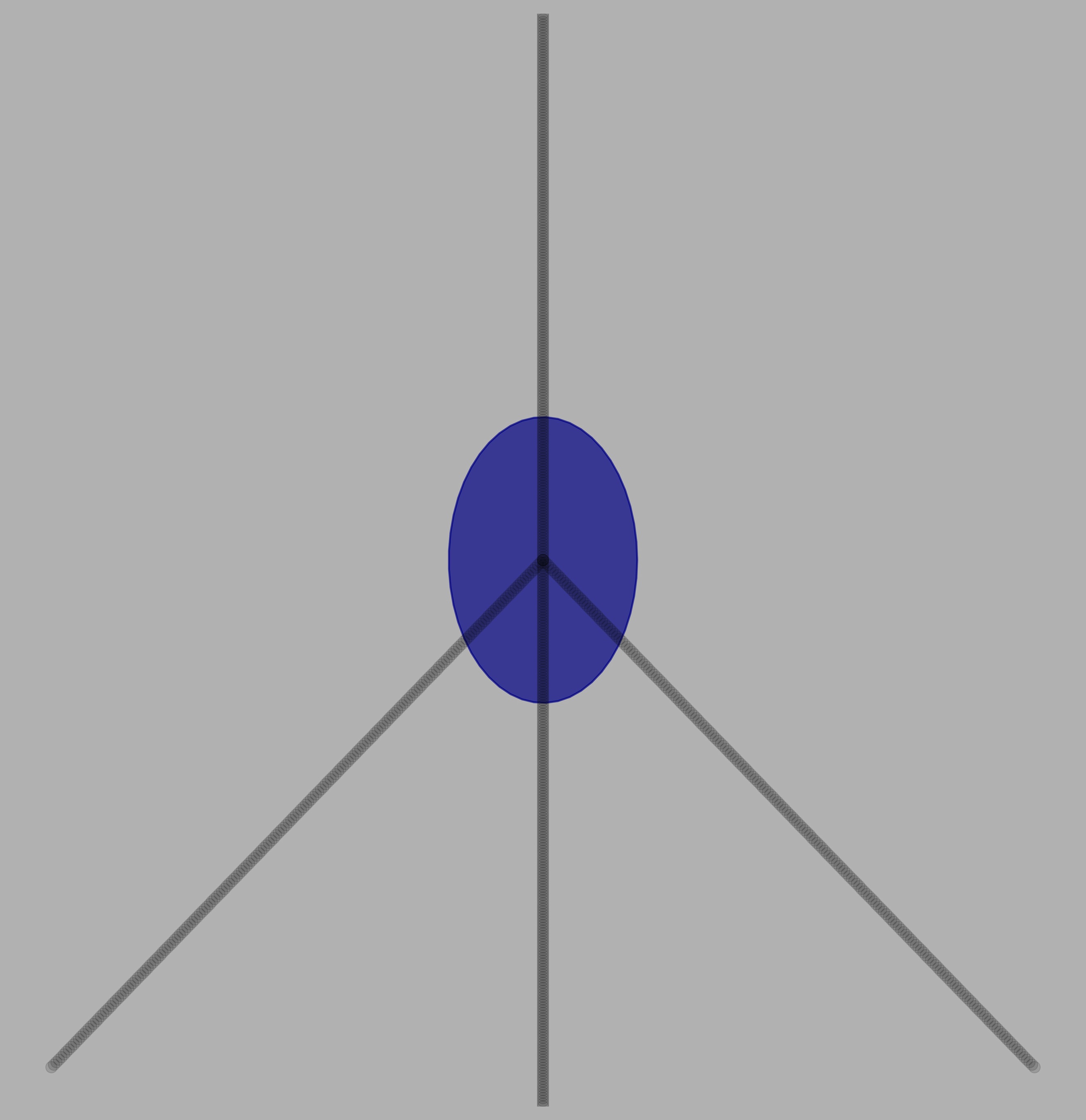}
\end{tabular}
\end{tabular}
\end{center}
\caption{Covariance at the one-point union of line segments.}
\label{F:wedge}
\end{figure}

\end{example}


\section{Geometry of Curves and Surfaces} \label{S:geometry}

In this section, we show how multiscale CTFs associated with the
truncation kernel extract precise local geometric information from plane
curves and surfaces in $\real^3$. 

\subsection{Plane Curves} \label{S:curves}

\begin{example} \label{E:circle}
We begin with the special case of a circle.
Let $C_R \subset \real^2$ be the circle of radius $R$ centered
at the origin in $\real^2$ and $\alpha$ the singular measure
supported on $C_R$ induced by arc length. For any $x \in \real^2$,
we denote $r = \|x\|$. If  $x$ is such that $\left| r - R \right|>\sigma$ then
${\Sigma}_\alpha (x,\sigma) = 0.$ Assume that $x \in \real^2$ and $0 < \sigma < R$
are such that $r \in[R-\sigma,R+\sigma]$. In this case, in the coordinate
system given by the directions $n = x/\|x\|$ and $t = n^\perp$, a calculation
shows that $\Sigma_\alpha (x,\sigma)$ is diagonal with entries
\begin{equation}
\begin{split}
\lambda_n (x,\sigma) &= \frac{1}{\pi \sigma^2} \left[ R{\phi
\left(R^2+2 r^2\right)+R^2 (R \cos \phi - 4r)\sin \phi  }\right] \\
\lambda_t (x,\sigma) &= {\frac{R^3}{\pi \sigma^2} (\phi -\sin \phi  \cos \phi )} \,,
\end{split}
\end{equation}
where $\phi = \arccos\left(\frac{R^2 + r^2-\sigma^2}{2rR}\right)$.
Thus, the normal and tangential vectors, $n$ and $t$, are
eigenvectors with eigenvalues $\lambda_n$ and $\lambda_t$,
respectively. Fig.\,\ref{F:circle} shows the eigenvalues as
functions of $r$, $0.9 \leq r \leq 1.1$, for $\sigma = 0.1$.
\begin{figure}[h!]
\begin{center}
\includegraphics[width=0.35\linewidth]{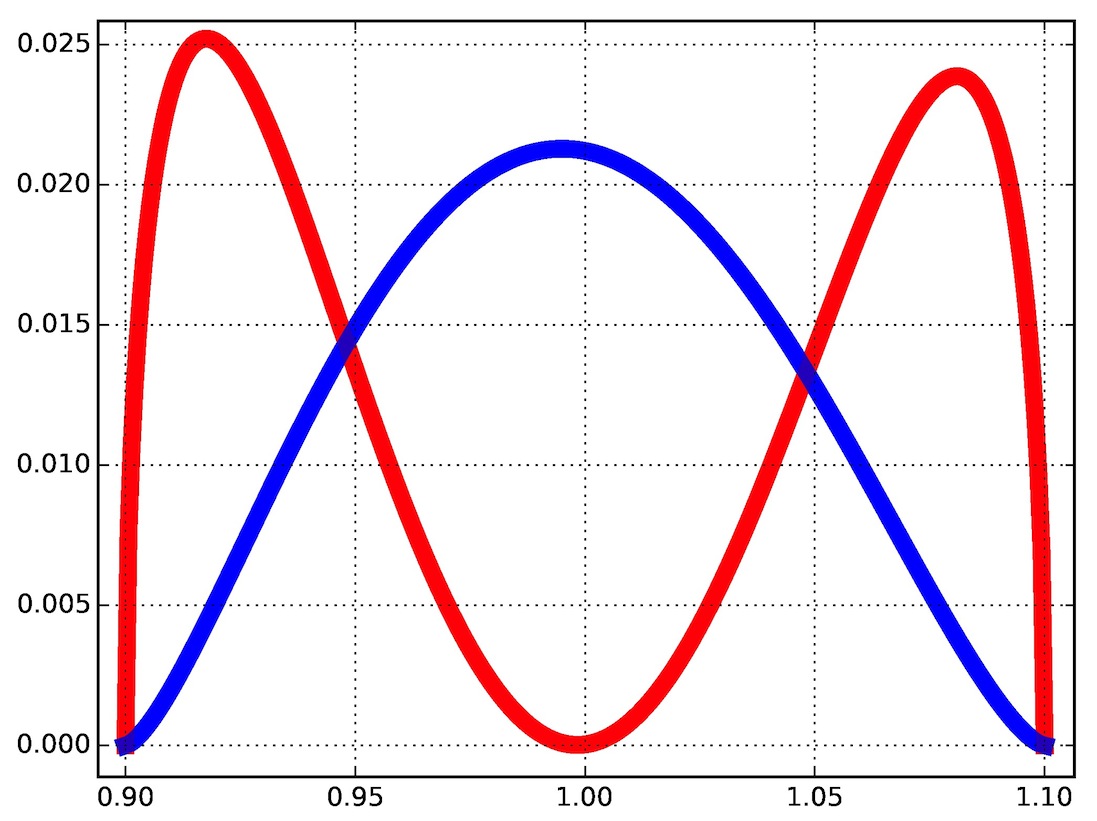} 
\end{center}
\caption{Tangential (blue) and normal (red) eigenvalues
as a function of $r$, $0.9 \leq r \leq 1.1$, at $\sigma = 0.1$,
of the multiscale CTF associated with
the truncation kernel for the singular measure induced
by arc length, supported on the unit circle in $\real^2$.}
\label{F:circle}
\end{figure}

\end{example}
 
 
Now we consider a general smooth curve $C \subset \real^2$,
that is, a 1-dimensional, smooth, properly embedded submanifold of
$\real^2$. Let $\alpha$ be the singular measure  on $\real^2$
supported on $C$ and induced by arc length. This measure is locally
finite because the embedding is proper. We calculate the small-scale
covariance at points on $C$ for the truncation kernel and show
that the curvature can be recovered from the eigenvalues of
$\Sigma_\alpha$. Let $x \in C$ be fixed. The arc-length
parametrization of $C$ near $x$ may be written as
\begin{equation}
X(s) = s-\frac{\kappa ^2 s^3}{6}+ O(s^4)
\quad \text{and} \quad
Y(s) = \frac{\kappa  s^2}{2}+\frac{\kappa _s s^3}{6}+O(s^4) \,,
\end{equation}
where $X(s)$ and $Y(s)$ are coordinates along the tangent
and normal to $C$ at $x$, respectively \cite{giblin}. Here, the
curvature $\kappa$ and its derivative $\kappa_s$ are evaluated
at $x$. A calculation yields:
\begin{proposition} \label{P:curves}
Let $\sigma>0$ be small. If $C$ is a smooth plane curve
and $x \in C$, then in the coordinates specified above we have
\begin{equation}
{\Sigma}_\alpha (x, \sigma)=\left(
\begin{matrix}
 \frac{2\sigma}{3\pi}-\frac{\kappa^2 \sigma^3}{20 \pi}+O(\sigma^4) \quad & 
 \frac{\kappa_s \sigma^3}{15 \pi}+O (\sigma^4 ) \\
 \frac{\kappa _s \sigma^3}{15 \pi}+O (\sigma^4 ) & \frac{\kappa^2 \sigma^3}{10 \pi}+O (\sigma^4)
\end{matrix}
\right).
\end{equation}
\end{proposition}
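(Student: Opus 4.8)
\section*{Proof proposal}

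The plan is to reduce the tensor integral to a one-dimensional integral along the curve and then expand in $\sigma$. Since $\alpha$ is the arc-length measure and $s \mapsto (X(s), Y(s))$ is a unit-speed parametrization with $x$ located at $s = 0$, we have $y - x = (X(s), Y(s))$ and $\alpha(dy) = ds$. For the truncation kernel in $\real^2$, $T(x,y,\sigma) = \frac{1}{\pi\sigma^2}\,\chi(\|y-x\|^2/\sigma^2)$, so the kernel merely restricts the domain to $\{s : X(s)^2 + Y(s)^2 \leq \sigma^2\}$ and assigns the constant weight $1/(\pi\sigma^2)$ there. Thus each entry of $\Sigma_\alpha(x,\sigma)$ becomes $\frac{1}{\pi\sigma^2}\int_{s_-}^{s_+} p(s)\, ds$, where $p(s)$ is one of $X(s)^2$, $X(s)Y(s)$, $Y(s)^2$ and $[s_-, s_+]$ is the arc-length interval cut out by the ball of radius $\sigma$ centered at $x$.

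First I would substitute the given expansions of $X$ and $Y$ to write each integrand as a polynomial in $s$ with controlled remainder, namely $X(s)^2 = s^2 - \frac{\kappa^2}{3}s^4 + O(s^5)$, $Y(s)^2 = \frac{\kappa^2}{4}s^4 + O(s^5)$, and $X(s)Y(s) = \frac{\kappa}{2}s^3 + \frac{\kappa_s}{6}s^4 + O(s^5)$. Because the final matrix is required only through order $\sigma^3$ and carries a prefactor $1/(\pi\sigma^2)$, I need each integral through order $\sigma^5$. The crucial step is then to determine the endpoints $s_\pm$: expanding $X(s)^2 + Y(s)^2 = s^2 - \frac{\kappa^2}{12}s^4 + O(s^5)$ and inverting the equation $X(s)^2 + Y(s)^2 = \sigma^2$ as a power series yields $s_\pm = \pm\sigma \pm \frac{\kappa^2}{24}\sigma^3 + O(\sigma^4)$. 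With these endpoints in hand I would integrate each polynomial over $[s_-, s_+]$, expand in $\sigma$ through order $\sigma^5$, divide by $\pi\sigma^2$, and read off the four entries.

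The main obstacle is precisely this domain correction. The endpoints deviate from $\pm\sigma$ at order $\sigma^3$, reflecting the fact that curvature bends the curve so that an arc of length slightly greater than $2\sigma$ fits inside the ball of radius $\sigma$. This extra length feeds into the tangential variance at exactly the order being computed: replacing $[s_-, s_+]$ by the naive symmetric interval $[-\sigma, \sigma]$ gives the correct leading terms but the wrong $\sigma^3$ coefficient in the $(1,1)$ entry, producing $-\frac{2\kappa^2}{15\pi}\sigma^3$ rather than the correct $-\frac{\kappa^2}{20\pi}\sigma^3$. I would therefore emphasize that only the tangential entry demands the refined limits.

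The remaining entries are comparatively benign. The normal $(2,2)$ and off-diagonal $(1,2)$ entries are governed by the even parts $\frac{\kappa^2}{4}s^4$ and $\frac{\kappa_s}{6}s^4$ of their integrands; the odd part $\frac{\kappa}{2}s^3$ of $X(s)Y(s)$ integrates to zero because the endpoint correction is symmetric to the order needed, and the $O(\sigma^3)$ shift of the limits perturbs the $s^4$ integrands only at order $\sigma^7$, hence beyond our precision. Consequently these two entries follow from integrating over the symmetric interval $[-\sigma,\sigma]$, yielding $\frac{\kappa^2}{10\pi}\sigma^3$ and $\frac{\kappa_s}{15\pi}\sigma^3$ respectively. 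Finally, the symmetry and positive semidefiniteness of $\Sigma_\alpha(x,\sigma)$ are automatic from the tensor-product form $(y-x)\otimes(y-x)$ of the integrand, which serves as a consistency check on the computation.
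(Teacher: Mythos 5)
Your proposal is correct and takes essentially the same approach as the paper: the paper presents this result as the outcome of ``a calculation'' and writes out the details only for the surface analogue (Proposition \ref{P:surface}), where the curvature-corrected integration limit $\rho_\sigma(\phi) = \sigma - \tfrac{1}{8}\bigl(\kappa_1\cos^2\phi + \kappa_2\sin^2\phi\bigr)^2\sigma^3 + O(\sigma^4)$ plays exactly the role of your endpoints $s_\pm = \pm\sigma \pm \tfrac{\kappa^2}{24}\sigma^3 + O(\sigma^4)$. Your expansions and integrals check out, and you correctly isolate the one subtle point --- that the endpoint correction enters the $(1,1)$ entry at precisely order $\sigma^3$ (turning the naive $-\tfrac{2\kappa^2}{15\pi}\sigma^3$ into the correct $-\tfrac{\kappa^2}{20\pi}\sigma^3$) while leaving the other entries untouched to the stated order.
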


Proposition \ref{P:curves} implies that, for $\sigma > 0$ small,
the eigenvalues of $\Sigma_\alpha$ are
\begin{equation}
\lambda_1 = \frac{2\sigma}{3 \pi} - \frac{\kappa^2 \sigma^3}{20 \pi}
+O(\sigma^4) 
\quad \text{and} \quad
\lambda_2 = \frac{\kappa^2 \sigma^3}{10 \pi}+O(\sigma^4) \,,
\end{equation}
so that
\begin{equation}
\text{tr} \, \Sigma_\alpha  (x, \sigma) = \frac{2 \sigma}{3 \pi} +
\frac{\kappa^2 \sigma^3}{20 \pi} + O(\sigma^4) \,.
\end{equation}
Thus, the curvature at $x \in C$ may be recovered, up to a sign, as
\begin{equation}
\kappa = \pm \lim_{\sigma \to 0} \frac{\sqrt{20 \pi}}{\sigma^{3/2}}
\left( \text{tr} \, \Sigma_\alpha  (x, \sigma) - \frac{2 \sigma}{3 \pi} \right)^{1/2}.
\end{equation}

\subsection{Surfaces in $\real^3$}

\begin{example}
Let $S_R$ be the sphere of radius $R$ centered at the origin
in $\real^3$. For $x \in \real^3$, we let $r = \|x\|$. If $x$ is such
that $\left|r - R \right| > \sigma$,
then ${\Sigma_\alpha}(x,\sigma)=0.$ Assume that $x \neq (0,0,0)$
and $\sigma>0$ are such that $r \in [R-\sigma, R + \sigma]$.
In the coordinate system given by the vector
$n = x/ \|x\|$, and any orthonormal basis $\{t_1, t_2\}$
of the orthogonal complement $n^\perp$, a direct calculation
shows that ${\Sigma_\alpha}(x,\sigma)$ is a $3\times 3$ diagonal
matrix with entries
\begin{equation}
\begin{split}
\lambda_{t_1}(x,\sigma) &= \lambda_{t_2}(x,\sigma) = 
\frac{R^4}{\sigma^3} \sin ^4\left(\frac{\phi }{2}\right) (\cos \phi + 2) \\
\lambda_n (x,\sigma) &= \frac{R^2}{2 \sigma^3}(1-\cos \phi)
\left( R^2+R \cos \phi (R \cos \phi +R-3 r) \right) \\
&+ \frac{R^2}{2 \sigma^3}(1-\cos \phi) \left(- 3 R r+3 r^2\right)
\end{split}
\end{equation}
where $\phi = \arccos\left(\frac{R^2+ r^2-\sigma^2}{2R r}\right)$.
In particular, this means that $\lambda_n$ is the eingenvalue
corresponding to the eigenvector $n$ along the normal direction
to the sphere at $x/\|x\|$,
and $\{t_1, t_2\}$ span the eigenspace along the tangent directions
with eigenvalue $\lambda_{t_1} = \lambda_{t_2}$. Fig.\,\ref{F:S2}
shows a plot of the eigenvalues as a function of $r$,
$0.9 \leq r \leq 1,1$, for $\sigma = 0.1$.
\begin{figure}[h!]
\begin{center}
\includegraphics[width=0.35\linewidth]{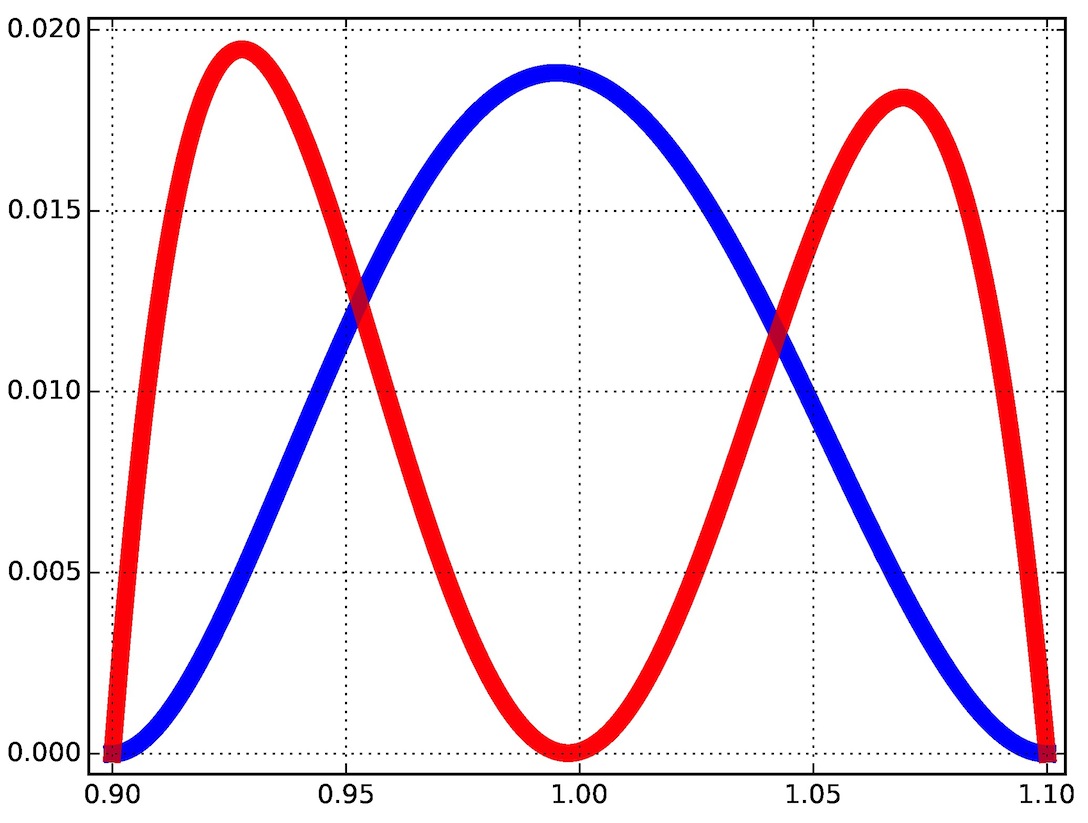} 
\end{center}
\caption{Tangential (blue) and normal eigenvalues (red)
of the multiscale CTF associated with the truncation kernel as
a function of $r$, $0.9 \leq r \leq 1.1$, at $\sigma = 0.1$,
for the singular measure induced by surface area, supported
on the unit sphere in $\real^3$.}
\label{F:S2}
\end{figure}

\end{example}

Now we consider a general smooth compact surface
$S\subset \real^3$. Let $\alpha$ be the singular measure
on $\real^3$ supported on $S$ and induced by the area
measure on $S$. We calculate the
small-scale covariance at points on $S$ for the truncation kernel
and show that the principal curvatures may indeed be recovered
from the spectrum of ${\Sigma}$.
Given a non-umbilic point $p \in S$, one can choose a Cartesian
coordinate system centered at $p$ so that the $x$-axis is along
the direction of maximal curvature at $p$, the $y$-axis is along the
direction of minimal curvature at $p$, and the $z$-axis is along
the normal to $S$ at $p$. 

\begin{proposition} \label{P:surface}
Let $\sigma>0$ be small, $p \in S$ be non-umbilic, and
$\alpha$ be the surface area measure on $S$. In the coordinate
system described above, the covariance tensor for the truncation
kernel is given by
\begin{equation}
\Sigma_\alpha (p, \sigma) = \begin{bmatrix}
A_{t_1} &  O (\sigma^4 ) & O(\sigma^5) \\
O(\sigma^4) &  A_{t_2}& O(\sigma^5) \\
O(\sigma^5) &  O(\sigma^5) & A_n
\end{bmatrix},
\end{equation}
where 

\begin{equation}
\begin{split}
A_{t_1} &= \frac{3\sigma}{16} + \frac{1}{256}
(-3\kappa_1^2 - 6\kappa_1 \kappa_2+\kappa_2^2) \sigma^3
+ O(\sigma^4),\\
A_{t_2} &= \frac{3\sigma}{16} + \frac{1}{256}
(\kappa_1^2 - 6\kappa_1 \kappa_2 - 3\kappa_2^2) \sigma^3
+ O(\sigma^4) ,\\
A_n &= \frac{3 \kappa_1^2 + 2 \kappa_1 \kappa_2 +
3 \kappa_2^2}{128} \sigma^3 + O(\sigma^4),
\end{split}
\end{equation} and
$\kappa_1 > \kappa_2$ are the principal curvatures
of $S$ at $p$.
\end{proposition}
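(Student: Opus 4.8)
The plan is to reduce the statement to an explicit integral over a small surface patch and then extract its asymptotic expansion in $\sigma$ by rescaling, exactly as in the one-dimensional computation behind Proposition \ref{P:curves}. Because the truncation kernel satisfies $T(p,y,\sigma)=0$ whenever $\|y-p\|>\sigma$, only the portion of $S$ inside the ball $B(p,\sigma)$ contributes, so that
\begin{equation}
\Sigma_\alpha(p,\sigma) = \frac{1}{\sigma^3\volume_3}\int_{S\cap B(p,\sigma)}(y-p)\otimes(y-p)\,dA(y),
\end{equation}
with $dA$ the area measure on $S$. For $\sigma$ small this is a purely local computation near $p$, so I would represent $S$ as a Monge graph in the adapted coordinates: writing $(u,v)$ for the tangential coordinates along the principal directions and $z=h(u,v)$ for the normal coordinate, one has $h(u,v)=\tfrac12(\kappa_1 u^2+\kappa_2 v^2)+\tfrac16(b_1 u^3+3b_2 u^2 v+3b_3 uv^2+b_4 v^3)+O(4)$, with $y-p=(u,v,h(u,v))$ and area element $dA=\sqrt{1+h_u^2+h_v^2}\,du\,dv$. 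The six independent entries of $(y-p)\otimes(y-p)$ are then the products $u^2,\,uv,\,v^2,\,uh,\,vh,\,h^2$.

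Next I would substitute $u=\sigma a$, $v=\sigma b$, which converts the shrinking domain into a fixed region close to the unit disk $D_0=\{a^2+b^2\le 1\}$. Concretely, $h=\tfrac{\sigma^2}{2}(\kappa_1 a^2+\kappa_2 b^2)+O(\sigma^3)$, the area element becomes $\sqrt{1+h_u^2+h_v^2}=1+\tfrac{\sigma^2}{2}(\kappa_1^2 a^2+\kappa_2^2 b^2)+O(\sigma^3)$, and the constraint $u^2+v^2+h^2\le\sigma^2$ becomes $a^2+b^2\le 1-\tfrac{\sigma^2}{4}(\kappa_1 a^2+\kappa_2 b^2)^2+O(\sigma^3)$, exhibiting the domain as an $O(\sigma^2)$ perturbation of $D_0$. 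After factoring out the powers of $\sigma$ from the prefactor $\sigma^{-3}$, the rescaling Jacobian, and the integrand, each entry becomes a fixed-region integral whose expansion in $\sigma$ I would compute order by order. For the $(1,1)$ entry this gives $\frac{\sigma}{\volume_3}\int_{D_0}a^2\,da\,db$ at leading order; since $\int_{D_0}a^2=\tfrac{\pi}{4}$ and $\volume_3=\tfrac{4\pi}{3}$ this is exactly $\tfrac{3\sigma}{16}$, matching $A_{t_1}$, and the analogous computation handles $A_{t_2}$ and $A_n$.

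The $\sigma^3$ corrections in $A_{t_1},A_{t_2},A_n$ then come from collecting, at the next order, the three coupled sources of $\sigma$-dependence: the quadratic correction to the area element, the contribution of $h$ to the integrand (leading for $A_n$), and the $O(\sigma^2)$ deformation of the domain, the last of which I would linearize as an integral over $\partial D_0$. Throughout, the symmetry of $D_0$ under $a\mapsto -a$ and $b\mapsto -b$ is the workhorse: it annihilates the leading contributions to the off-diagonal entries and selects exactly which monomials survive, isolating the curvature combinations $\kappa_1^2,\kappa_1\kappa_2,\kappa_2^2$ in the diagonal corrections. The precise orders claimed for the off-diagonal entries must be read off from the first non-vanishing integral, which forces one to retain the cubic coefficients $b_i$ of the Monge expansion and to check the parity cancellations that govern at which power of $\sigma$ each coupling first appears.

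I expect the main obstacle to be precisely this bookkeeping of the competing higher-order contributions. The tangent--tangent coupling $(1,2)$ and the tangent--normal couplings $(1,3),(2,3)$ each vanish to leading order by parity, but their true orders depend delicately on how the cubic jet $(b_1,\dots,b_4)$ interacts with the quadratic area-element correction and with the deformed boundary $\partial D_0$; verifying that these residual integrals land in the stated orders (rather than a lower one) is the crux. The remaining technical point is to secure uniform control of the $O(\sigma^4)$ remainders, so that the expansions are genuinely asymptotic and not merely formal, which I would obtain from the boundedness of the higher Monge coefficients on a fixed neighborhood of $p$ together with the compact support of the kernel.
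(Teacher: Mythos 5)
Your proposal is correct and takes essentially the same approach as the paper's proof: both reduce $\Sigma_\alpha(p,\sigma)$ to the integral of $(y-p)\otimes(y-p)$ over the patch $S\cap B(p,\sigma)$, parametrize that patch as a graph over the tangent plane in principal-direction coordinates, expand the area element, correct for the $O(\sigma^3)$ deformation of the patch boundary, and integrate term by term. The paper does this in cylindrical coordinates with a $\phi$-dependent radial cutoff $\rho_\sigma(\phi)=\sigma-\tfrac{1}{8}\bigl(\kappa_1\cos^2\phi+\kappa_2\sin^2\phi\bigr)^2\sigma^3+O(\sigma^4)$, whereas you rescale to a fixed perturbed disk---a cosmetic difference---and your explicit retention of the cubic Monge coefficients and the parity bookkeeping for the off-diagonal orders is, if anything, more detailed than the paper's remark that the other entries follow by ``similar steps.''
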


It follows from this result that, for $\sigma>0$ small,
\begin{equation}
\begin{split}
\text{tr} \, \Sigma_\alpha (p, \sigma) &=
\frac{3}{16}\sigma+ \frac{1}{64}(\kappa_1-\kappa_2)^2\sigma^3+O(\sigma^4)
\ \text{and} \\
\text{det} \, \Sigma_\alpha (p, \sigma) &=\big({3\kappa_1^2+2\kappa_1\kappa_2+3\kappa_2^2\big)}\frac{\pi^2}{2048}\sigma^{11} + O(\sigma^{12}) \,.
\end{split}
\end{equation}
As a consequence, $\kappa_1$ and $\kappa_2$ can be recovered from
the spectrum of ${\Sigma}_\alpha(p,\sigma)$ as a function of $\sigma$.
Indeed, from the small scale asymptotics of the trace and determinant
of $\Sigma_\alpha (p, \sigma)$, we can extract the values of
$(\kappa_1-\kappa_2)^2$ and
$3\kappa_1^2+2\kappa_1\kappa_2+3\kappa_2^2$ from which we
can determine the values of $\kappa_1$ and $\kappa_2$.

\begin{proof}[Proof of Proposition \ref{P:surface}]
Using cylindrical coordinates in the chosen reference system,
we can parametrize the patch $S\cap B(p,\sigma)$
as $(\rho\cos\phi,\rho\sin\phi,z(\rho,\phi))$, for $\phi\in[0,2\pi]$, 
$\rho\in[0,\rho_\sigma(\phi)]$, where $\rho_\sigma(\phi)
= \sigma-\frac{1}{8}\big(\kappa_1(\cos\phi)^2
+ \kappa_2(\sin\phi)^2\big)^2\sigma^3+O(\sigma^4)$, and $z(\rho,\phi)
= \frac{\rho^2}{2}\big(\kappa_1(\cos\phi)^2
+ \kappa_2(\sin\phi)^2\big)+O(\sigma^3).$ 
The area element on the patch is given by
\begin{equation}
dA = \left( \rho+\frac{\rho^3}{2}(\kappa_1^2(\cos\phi)^2 +
\kappa_2^2(\sin\phi)^2)+O(\rho^5) \right) d\rho\,d\phi.
\end{equation}
Now we have all the ingredients needed to compute
$\Sigma_\alpha (p,\sigma)$. For example, to calculate the
$(1,1)$-entry, we express $\iint_{S\cap B(0,\sigma)}x^2\,dA$ as
\begin{equation}
\int_0^{2\pi}\int_0^{\rho_\sigma(\phi)}
\left[ \rho ^3 \cos ^2 \phi + \frac{\rho ^5 \cos ^2 \phi}{2}
\left(\kappa _2^2 \sin ^2 \phi +\kappa _1^2 \cos ^2 \phi \right)
+ O (\rho ^6) \right] d\rho\, d\phi\,,
\end{equation}
which after a simple but tedious calculation yields the
desired result. The computation of other
entries of the matrix follows similar steps.
\end{proof}


\section{Stability and Consistency} \label{S:stability}

For each $p \in [1, \infty)$, let $\borel_p (\real^d)$ denote the collection
of all Borel probability measures $\alpha$ on $\real^d$ whose
$p$th moment $M_p (\alpha) = \int \|z\|^p \alpha (dz)$ is finite. We
adopt the notation $m_p (\alpha) = M^{1/p}_p (\alpha)$. For $p=\infty$,
we let $\borel_\infty(\real^d)$ be the collection of all Borel
probability measures on $\real^d$ with bounded support and
$m_\infty(\alpha)= \sup \{\|z\|,\,z\in \mathrm{supp}\,[\alpha]\}$.
By Jensen's inequality, if $1 \leq q \leq p \leq \infty$,
then $\borel_p (\real^d) \subset \borel_q (\real^d)$ and
$m_q (\alpha) \leq m_p (\alpha)$,
for any $\alpha \in \borel_p (\real^d)$.
\begin{definition} \label{D:moments}
For $p \in [1, \infty]$ and $\lambda>0$, we define
$\borel_p^\lambda(\real^d) \subset \borel_p (\real^d)$ as the
subset of all $\alpha \in \borel_p (\real^d)$ such that
$\alpha(A)\leq \lambda \,\mathcal{L}(A)$, for all measurable
sets $A$, where $\mathcal{L}$ stands for Lebesgue measure.
\end{definition}

\begin{example}
If $\alpha \in \borel_p (\real^d)$ is absolutely continuous with respect
to the Lebesgue measure with density function
$f \in \mathbb{L}^\infty (\real^d)$ satisfying $\|f\|_\infty \leq \lambda$,  
then $\alpha \in \borel_p^\lambda (\real^d)$.
\end{example}

Let us recall the definition of the $p$-Wasserstein distance
$\dwassp (\alpha,\beta)$ between
$\alpha, \beta \in \borel_p (\real^d)$. Let $\joint (\alpha,\beta)$ be the
collection of all couplings of $\alpha$ and $\beta$; that is, probability
measures $\mu$ on $\real^d\times\real^d$
such that $(\pi_1)_\ast \mu=\alpha$ and $(\pi_2)_\ast \mu = \beta$,
where $\pi_1, \pi_2 \colon \real^d \times \real^d \to \real^d$ denote
projections onto the first and second components, respectively. 
\begin{definition}
For $p \in [1, \infty)$, the {\em $p$-Wasserstein distance}
between $\alpha, \beta \in \borel_p (\real^d)$ is given by
\[
\dwassp (\alpha,\beta) := \inf_{\mu \in \joint (\alpha,\beta)}
 \left(\iint\| z_1 - z_2 \|^p \mu(dz_1 \times dz_2) \right)^{1/p},
\]
and the $\infty$-Wasserstein distance between $\alpha, \beta
\in \borel_\infty (\real^d)$ by
\[
\dwassinf (\alpha,\beta) := \inf_{\mu \in \joint (\alpha,\beta)}
\sup\left\{\|z_1-z_2\|,\,(z_1,z_2)\in\mathrm{supp} \, [\mu]\right\}.
\]
\end{definition}


\begin{remark} \label{R:wasserstein} 
\hfill
\begin{itemize}
\item[(i)] For any $\alpha, \beta \in \borel_p (\real^d)$, $p\in[1,\infty]$,
there exists a coupling that realizes the infimum in the definition
of $W_p (\alpha,\beta)$ (cf. \cite{givens84}).

\item[(ii)] It is a standard result that, for each $p \in [1, \infty)$, $\dwassp$
defines a metric on $\borel_p (\real^d)$ that is compatible with
weak convergence of probability measures \cite{villani03}.

\item[(iii)] If $\varphi \colon \real^d \to \real^d$ is an isometry, then
$\dwassp (\alpha, \beta) = 
\dwassp (\varphi_\ast (\alpha), \varphi_\ast (\beta))$,
for any $\alpha, \beta \in \borel_p (\real^d)$.

\end{itemize}
\end{remark}

\subsection{Smooth Kernels}

\begin{theorem}[Stability for Smooth Kernels] \label{T:stab}
Let $f \colon [0, \infty) \to \real$ be as in Definition \ref{D:kernel}
with multiscale kernel $K$. Suppose that $f$ is differentiable
and there exists a constant $A_1>0$ such that
$r^{3/2} \, |f'(r)| \leq A_1$, $\forall r \geq 0$. Then,
there is a constant $A_f >0$, that depends only on $f$, such
that
\[
\sup_{x\in\real^d} \left\| \Sigma_\alpha (x, \sigma) -
\Sigma_\beta (x, \sigma) \right\| \leq 
\frac{\sigma A_f}{C_d(\sigma)} \,
\dwassone (\alpha,\beta),
\]
for any $\alpha,\beta\in\borel_1 (\real^d)$ and any $\sigma>0$.
Here $\| \cdot \|$ is the norm associated
with the inner product defined in \eqref{E:inner}.
\end{theorem}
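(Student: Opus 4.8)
The plan is to reduce the tensor-valued difference to a single integrand that is Lipschitz in its argument, uniformly in the base point $x$, and then transport the estimate to $\dwassone$ through an optimal coupling. First I would absorb the normalization by writing $\Sigma_\alpha (x,\sigma) = C_d(\sigma)^{-1}\int_{\real^d}\Phi_{x,\sigma}(y)\,\alpha(dy)$, where $\Phi_{x,\sigma}(y) := (y-x)\otimes(y-x)\,f(\|y-x\|^2/\sigma^2)$ takes values in $\real^d\otimes\real^d$. Since condition (c) of Definition \ref{D:kernel} makes $\|\Phi_{x,\sigma}(y)\| = \|y-x\|^2 f(\|y-x\|^2/\sigma^2)\leq \sigma^2 C$ bounded, the integral converges for every probability measure, so both sides are well defined on $\borel_1(\real^d)$.

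Next I invoke Remark \ref{R:wasserstein}(i) to choose an optimal coupling $\mu\in\joint(\alpha,\beta)$ for $\dwassone$. Expressing the difference of integrals against this coupling gives
\[
\Sigma_\alpha(x,\sigma) - \Sigma_\beta(x,\sigma) = \frac{1}{C_d(\sigma)}\iint\bigl[\Phi_{x,\sigma}(z_1)-\Phi_{x,\sigma}(z_2)\bigr]\,\mu(dz_1\times dz_2),
\]
and the triangle inequality for the Bochner integral yields
\[
\sup_{x}\|\Sigma_\alpha(x,\sigma)-\Sigma_\beta(x,\sigma)\| \leq \frac{1}{C_d(\sigma)}\iint\sup_{x}\|\Phi_{x,\sigma}(z_1)-\Phi_{x,\sigma}(z_2)\|\,\mu(dz_1\times dz_2).
\]
Everything then reduces to a uniform-in-$x$ Lipschitz estimate $\|\Phi_{x,\sigma}(z_1)-\Phi_{x,\sigma}(z_2)\|\leq \sigma A_f\,\|z_1-z_2\|$; combined with $\iint\|z_1-z_2\|\,d\mu = \dwassone(\alpha,\beta)$ this is exactly the claim.

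The heart of the argument is that Lipschitz estimate, which I would obtain from a derivative bound. Setting $u=y-x$ and $\Psi_\sigma(u):=(u\otimes u)\,f(\|u\|^2/\sigma^2)$ (so $\Phi_{x,\sigma}(y)=\Psi_\sigma(y-x)$ is translation-covariant, which makes the uniformity in $x$ automatic), the product and chain rules give
\[
D\Psi_\sigma(u)[h] = \frac{2}{\sigma^2}\,f'\!\left(\tfrac{\|u\|^2}{\sigma^2}\right)\inner{u}{h}\,(u\otimes u) + f\!\left(\tfrac{\|u\|^2}{\sigma^2}\right)(h\otimes u + u\otimes h).
\]
Using $\|u\otimes u\|=\|u\|^2$ and $\|h\otimes u\|=\|u\|\,\|h\|$ from \eqref{E:norm}, and substituting $r=\|u\|^2/\sigma^2$ so that $\|u\|=\sigma\sqrt r$, the operator norm of $D\Psi_\sigma(u)$ is bounded by $2\sigma\,r^{3/2}|f'(r)| + 2\sigma\sqrt r\,f(r)$. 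The first term is controlled by the hypothesis $r^{3/2}|f'(r)|\leq A_1$; the second by splitting at $r=1$, using $\sup f=1$ for $r\leq 1$ and condition (c), $rf(r)\leq C$, for $r\geq 1$, which gives $\sqrt r\,f(r)\leq\max\{1,C\}$. Hence $\|D\Psi_\sigma(u)\|_{\mathrm{op}}\leq\sigma A_f$ with $A_f:=2(A_1+\max\{1,C\})$, a constant depending only on $f$. Since $f$ is differentiable and $u\mapsto\|u\|^2$ is smooth, $\Psi_\sigma$ is $C^1$, so integrating the derivative bound along the segment joining $z_1-x$ and $z_2-x$ gives the Lipschitz inequality.

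The main obstacle is the uniform control of the derivative across two competing regimes: small $\|u\|$, where $u\otimes u$ vanishes but $f$ is merely bounded, and large $\|u\|$, where $f$ decays but $\|u\|^3$ grows. Both are resolved by the precise scaling hypotheses—$r^{3/2}|f'(r)|$ and $rf(r)$ bounded—which are calibrated exactly so that the resulting constant $A_f$ is independent of $\sigma$ and a single factor of $\sigma$ (rather than a higher power) emerges. A minor point to check is differentiability at $u=0$, which holds because $\|\cdot\|^2$ is smooth and $f$ is differentiable at $0$, so no separate treatment of the origin is required.
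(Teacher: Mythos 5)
Your proposal is correct and follows essentially the same route as the paper: the core is the uniform-in-$x$ Lipschitz estimate for the tensor-valued integrand $(y-x)\otimes(y-x)K(x,y,\sigma)$ (the paper's Lemma \ref{L:kernel}, proved by the same derivative bound along a segment, with the same splitting into the $r^{3/2}|f'(r)|$ and $\sqrt{r}f(r)$ terms), followed by transporting the estimate through a coupling. The only differences are cosmetic: you invoke an optimal coupling via Remark \ref{R:wasserstein}(i) where the paper uses $\eta$-approximate couplings and lets $\eta \downarrow \dwassone(\alpha,\beta)$, and you make explicit (via the split at $r=1$) the bound $\sqrt{r}f(r)\leq\max\{1,C\}$ that the paper asserts from condition (c).
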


Theorem \ref{T:stab} shows that multiscale covariance fields
yield a robust representation of probability measures that make
their geometric properties more readily accessible, as illustrated in
our examples. In Section \ref{S:frechet}, we show that not only
is $\Sigma_\alpha (\cdot, \sigma)$ stable, but all the information
contained in the probability measure $\alpha$ is
fully absorbed into the multiscale CTF associated with the
Gaussian kernel. In fact, $\alpha$ may
be recovered from the multiscale scalar field given by
$V_\alpha (x, \sigma) = \mathrm{tr} \, \Sigma_\alpha (x, \sigma)$,
$x \in \real^d$ and $\sigma > 0$.

The following lemma will be used in the proof of the stability
theorem for smooth kernels. To simplify notation, we define
$K_\sigma \colon \real^d \to \real$ by
\begin{equation} \label{E:center}
K_\sigma (z) = K (z, 0, \sigma) = 
\frac{1}{C_d (\sigma)} \,
f \left(\frac{\Vert z\Vert^2}{\sigma^2}\right)
\end{equation}
and $Q_\sigma \colon \real^d \to \real^d \otimes \real^d$ by
\begin{equation}
Q_\sigma (z) = (z \otimes z) K_\sigma (z) \,.
\end{equation}

\begin{lemma}\label{L:kernel}

Let  $f$ be as in Definition \ref{D:kernel} and suppose
that $f$ is differentiable and there is a constant $A_1 >0$
such that  $r^{3/2} \, |f'(r)| \leq A_1$, $\forall r \geq 0$.
Then, there is a constant $A_f > 0$, that depends only on $f$,
such that
\[
\left\| Q_\sigma(z_1) - Q_\sigma(z_2) \right\| \leq
\frac{A_f  \sigma}{C_d(\sigma)} \|z_1 - z_2\| ,
\]
for any $z_1, z_2 \in \real^d$ and all $\sigma > 0$. 
\end{lemma}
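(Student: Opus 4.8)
The plan is to reduce the statement to a scale-free Lipschitz estimate and then recover the $\sigma$-dependence by a scaling argument. First I would introduce the scale-free map $h\colon\real^d\to\real^d\otimes\real^d$, $h(w)=(w\otimes w)\,f(\|w\|^2)$, and observe that substituting $w=z/\sigma$ yields the identity $Q_\sigma(z)=\tfrac{\sigma^2}{C_d(\sigma)}\,h(z/\sigma)$, since $z\otimes z=\sigma^2\,(w\otimes w)$ and $\|z\|^2/\sigma^2=\|w\|^2$. Consequently, if $h$ is $L$-Lipschitz with $L$ depending only on $f$, then
\[
\|Q_\sigma(z_1)-Q_\sigma(z_2)\|=\frac{\sigma^2}{C_d(\sigma)}\,\|h(z_1/\sigma)-h(z_2/\sigma)\|\le \frac{\sigma^2}{C_d(\sigma)}\,L\,\frac{\|z_1-z_2\|}{\sigma}=\frac{\sigma L}{C_d(\sigma)}\,\|z_1-z_2\|,
\]
which is exactly the claimed bound with $A_f=L$. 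So the whole problem collapses to proving that $h$ is globally Lipschitz with constant depending only on $f$.

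To bound the Lipschitz constant of $h$, I would estimate the operator norm of its Jacobian uniformly over $\real^d$ and invoke the mean value inequality on the convex domain $\real^d$. Writing $r=\|w\|^2$, a direct computation of the directional derivative of $h$ in a unit direction $u$ gives
\[
Dh(w)[u]=\big(u\otimes w+w\otimes u\big)\,f(r)+2\,(w\otimes w)\,f'(r)\,\inner{w}{u}.
\]
Using the norm identity $\|a\otimes b\|=\|a\|\,\|b\|$ from \eqref{E:norm}, the triangle inequality, and $|\inner{w}{u}|\le\|w\|$, this yields the pointwise bound $\|Dh(w)[u]\|\le 2\|w\|\,|f(r)|+2\|w\|^3\,|f'(r)|$, valid for every $w$ and every unit $u$.

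The heart of the argument is to show that both terms are bounded on all of $\real^d$ by constants depending only on $f$; this is precisely where the hypotheses enter and is the step I expect to require the most care, since $f'$ may blow up near $0$ and one must simultaneously control the regimes $\|w\|\to 0$ and $\|w\|\to\infty$. For the first term I would use that $f$ is bounded with $\sup f=1$ when $\|w\|\le 1$, and condition (c), $r\,f(r)\le C$, when $\|w\|\ge 1$, to get $2\|w\|\,f(\|w\|^2)\le\max(2,2C)$. For the second term, writing $\|w\|^3\,|f'(\|w\|^2)|=(\|w\|^2)^{3/2}|f'(\|w\|^2)|$, the standing hypothesis $r^{3/2}|f'(r)|\le A_1$ bounds it by $A_1$ uniformly; notice that the factor $\|w\|^3$ is exactly what tames the possible singularity of $f'$ at the origin. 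Combining, $\|Dh(w)[u]\|\le \max(2,2C)+2A_1=:A_f$ for every $w$ and every unit $u$, whence $h$ is $A_f$-Lipschitz and the lemma follows.

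The only technical point worth flagging is that differentiability of $f$ (rather than $C^1$) already suffices for the mean value inequality: for a segment $\gamma$ joining $w_1,w_2$ and a fixed tensor $v$, the scalar map $t\mapsto\inner{h(\gamma(t))}{v}$ is differentiable with derivative $\inner{Dh(\gamma(t))[w_1-w_2]}{v}$, and integrating over $[0,1]$ together with the uniform Jacobian bound gives $\inner{h(w_1)-h(w_2)}{v}\le A_f\|w_1-w_2\|$; taking the supremum over unit $v$ recovers the norm estimate. This makes the scaling identity and the Jacobian bound the two load-bearing steps, with the explicit constant $A_f$ manifestly a function of $C$ and $A_1$ alone.
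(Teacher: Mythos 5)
Your proposal is correct and is essentially the paper's own argument: the paper likewise bounds the derivative of $z \mapsto (z\otimes z)K_\sigma(z)$ along the segment joining $z_2$ and $z_1$, using exactly the same two ingredients — $\sqrt{r}\,f(r)\leq A_2$ (from $\sup f=1$ and condition (c)) and $r^{3/2}|f'(r)|\leq A_1$ — and then integrates in $t$, arriving at $A_f=2(A_1+A_2)$. Your reduction to the scale-free map $h(w)=(w\otimes w)\,f(\|w\|^2)$ is only an organizational difference, since the paper performs the same normalization inline by rewriting its derivative bound as $\frac{2\sigma}{C_d(\sigma)}\left(\frac{\|z\|}{\sigma}f\!\left(\frac{\|z\|^2}{\sigma^2}\right)+\frac{\|z\|^3}{\sigma^3}\left|f'\!\left(\frac{\|z\|^2}{\sigma^2}\right)\right|\right)\|z_1-z_2\|$.
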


\begin{proof}
Let $z(t) = t z_1 + (1-t) z_2$, $0 \leq t \leq 1$. Then,
\begin{equation} \label{E:estimate1}
\begin{split}
Q_\sigma(z_1) - Q_\sigma(z_2) =
\int_0^1 \frac{d}{dt} Q_\sigma(z(t)) \,dt \,.
\end{split}
\end{equation}
Since
\begin{equation}
\begin{split}
\frac{d}{dt} Q_\sigma(z) &= \frac{(z_1-z_2) \otimes z}{C_d (\sigma)}
f \left( \frac{\|z\|^2}{\sigma^2} \right) +
\frac{z \otimes (z_1-z_2)}{C_d (\sigma)}
f \left( \frac{\|z\|^2}{\sigma^2} \right) \\
&+ (z \otimes z) \frac{2}{\sigma^2 C_d(\sigma)} 
f' \left(\frac{\Vert z\Vert^2}{\sigma^2}\right) \left( z \cdot
(z_1-z_2) \right) \,,
\end{split}
\end{equation}
it follows that
\begin{equation} \label{E:estimate2}
\begin{split}
\left\|  \frac{d}{dt} Q_\sigma(z) \right\| & \leq
\frac{ 2 \|z\|}{C_d (\sigma)} f \left( \frac{\|z\|^2}{\sigma^2} \right) 
\|z_1 - z_2\| \\
&+ \frac{2 \|z\|^3}{\sigma^2 C_d(\sigma)}
\left| f' \left(\frac{\Vert z\Vert^2}{\sigma^2}\right) \right| \|z_1 - z_2\| \\
&= \frac{ 2 \sigma}{C_d (\sigma)} \frac{\|z\|}{\sigma}
f \left( \frac{\|z\|^2}{\sigma^2} \right)  \|z_1 - z_2\| \\
&+ \frac{2 \sigma}{C_d(\sigma)} \frac{\|z\|^3}{\sigma^3}
\left| f' \left(\frac{\Vert z\Vert^2}{\sigma^2}\right) \right|
\|z_1 - z_2\| \,.
\end{split}
\end{equation}
Since $f$ is smooth, condition (c) of Definition \ref{D:kernel}
ensures that there is a constant  $A_2 > 0$ such that
$\sqrt{r} f(r) \leq A_2$, $\forall r \geq 0$. Moreover, by hypothesis
$r^{3/2} |f' (r)| \leq A_1$. Thus, \eqref{E:estimate2} implies that
\begin{equation} \label{E:estimate3}
\left\|  \frac{d}{dt} Q_\sigma(z) \right\| \leq
\frac{\sigma A_f}{C_d(\sigma)} \|z_2 - z_1\|\,,
\end{equation}
where $A_f = 2(A_1 + A_2)$. The lemma follows from
\eqref{E:estimate1} and \eqref{E:estimate3}.
\end{proof}

\begin{proof}[Proof of Theorem \ref{T:stab}]
Without loss of generality, we may assume that $x=0$.
We express the covariance fields as
\begin{equation}
\Sigma_\alpha(x,\sigma)=\int Q_\sigma(z_1)\,\alpha(dz_1)
\quad \text{and} \quad
\Sigma_\beta(x,\sigma) = \int Q_\sigma(z_2)\,\beta(dz_2).
\end{equation}
Given $\eta > 0$ satisfying $\dwassone (\alpha,\beta) < \eta$,
let  $\mu \in \joint (\alpha,\beta)$ be a coupling  such that
\begin{equation} \label{E:wass}
\iint\|z_1 - z_2\| \mu (dz_1 \times dz_2) < \eta \,.
\end{equation}
We may write
\begin{equation} 
\begin{split}
\Sigma_\alpha(x,\sigma) =\int Q_\sigma(z_1)\,\mu (dz_1 \times dz_2)
\ \ \text{and} \ \ 
\Sigma_\beta (x,\sigma) =\int Q_\sigma(z_2)\,\mu (dz_1 \times dz_2) \,.
\end{split}
\end{equation}
Therefore,
\begin{equation} \label{E:cestimate0}
\left\| \Sigma_\alpha (x,\sigma) - \Sigma_\beta (x,\sigma) \right\| 
\leq \iint \|Q_\sigma( z_1) - Q_\sigma( z_2)\| \,\mu(dz_1\times dz_2) \,.\end{equation}
Lemma \ref{L:kernel} and \eqref{E:cestimate0} imply that
\begin{equation} \label{E:cestimate1}
\begin{split}
\left\| \Sigma_\alpha (x,\sigma) - \Sigma_\beta (x,\sigma) \right\| 
&\leq \frac{\sigma A_f}{C_d(\sigma)}
\iint \|z_1- z_2\| \,\mu(dz_1\times dz_2) \\
&\leq \frac{\sigma A_f}{C_d(\sigma)}  \eta \,.
\end{split}
\end{equation}
Since \eqref{E:cestimate1} holds for any
$\eta > \dwassone (\alpha,\beta)$,
we can conclude that
\begin{equation} 
\left\| \Sigma_\alpha(x,\sigma) - \Sigma_\beta(x,\sigma) \right\|
\leq \frac{\sigma A_f}{C_d(\sigma)}
\dwassone (\alpha, \beta) \,,
\end{equation}
as claimed.
\end{proof}

In what follows, given random vectors $y_i\in \real^d$,
$i \in \mathbb{N}$, we let $\alpha_n =
\sum_{i=1}^n \delta_{y_i}/n$.

\begin{corollary}[Consistency for Smooth Kernels]
\label{C:consistency}
Let $K$ be a multiscale kernel as in Theorem \ref{T:stab}
and $\sigma > 0$. If $\alpha \in \mathcal{P}_1 (\real^d)$ and
$y_i \in \real^d$, $i \in \mathbb{N}$,
are i.i.d. random variables with distribution $\alpha$, then
\[
\sup_{x \in \real^d}
\left\| \Sigma_{\alpha_n}(x,\sigma)
-\Sigma_\alpha(x,\sigma) \right\|
\xrightarrow{n\uparrow\infty} 0
\]
almost surely.
\end{corollary}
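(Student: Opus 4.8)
The plan is to reduce the entire statement to the stability estimate of Theorem \ref{T:stab} together with the almost sure convergence of empirical measures in the \dwassone\ metric. For every realization of the sample, $\alpha_n = \sum_{i=1}^n \delta_{y_i}/n$ is a finitely supported measure and hence lies in $\borel_1(\real^d)$, so Theorem \ref{T:stab} applies with $\beta = \alpha_n$. Since $\sigma$ is fixed, the prefactor $\sigma A_f / C_d(\sigma)$ is a constant and the theorem gives
\[
\sup_{x\in\real^d}\left\| \Sigma_{\alpha_n}(x,\sigma) - \Sigma_\alpha(x,\sigma) \right\| \leq \frac{\sigma A_f}{C_d(\sigma)}\, \dwassone(\alpha_n,\alpha).
\]
Thus the claim collapses to showing that $\dwassone(\alpha_n,\alpha) \to 0$ almost surely as $n \uparrow \infty$.

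To establish this convergence I would invoke the standard characterization that, for $p\in[1,\infty)$, convergence in \dwassp\ is equivalent to weak convergence together with convergence of the $p$th moments (cf.\,\cite{villani03}). For $p=1$ there are two ingredients. First, the strong law of large numbers applied to bounded continuous test functions (equivalently, Varadarajan's theorem on empirical measures) shows that $\alpha_n \to \alpha$ weakly almost surely. Second, because $\alpha\in\borel_1(\real^d)$, the function $z\mapsto\|z\|$ is $\alpha$-integrable, so the strong law gives $m_1(\alpha_n) = n^{-1}\sum_{i=1}^n \|y_i\| \to m_1(\alpha)$ almost surely, i.e.\ the first moments converge. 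Together these two facts yield $\dwassone(\alpha_n,\alpha)\to 0$ almost surely; alternatively, this is immediate from the quantitative rates of Fournier and Guillin \cite{fournier14}. Combining with the displayed inequality, the right-hand side tends to $0$ almost surely, and since it dominates the supremum norm, the conclusion follows.

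The main obstacle, such as it is, lies entirely in the almost sure \dwassone-convergence of the empirical measures; every other step is an immediate consequence of Theorem \ref{T:stab}. The one subtlety worth flagging is that weak convergence by itself does not control \dwassone: one genuinely needs convergence of first moments as well, which is exactly where the hypothesis $\alpha\in\borel_1(\real^d)$ is used and cannot be dropped. Because the bound is uniform in $x$, no separate argument is needed to upgrade pointwise control to the supremum.
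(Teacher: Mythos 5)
Your proof is correct, and it follows the same reduction as the paper: apply Theorem \ref{T:stab} with $\beta = \alpha_n$ to bound the supremum uniformly by $\frac{\sigma A_f}{C_d(\sigma)}\,\dwassone(\alpha_n,\alpha)$, then show $\dwassone(\alpha_n,\alpha)\to 0$ almost surely. Where you differ is in how that last convergence is justified, and your version is the more careful one. The paper's proof invokes ``the fact that $\dwassone$ metrizes weak convergence of probability measures in $\borel_1(\real^d)$'' together with Varadarajan's theorem; taken literally, this is imprecise, because weak convergence alone does not control $\dwassone$: for instance, $\mu_n = (1-n^{-1})\delta_0 + n^{-1}\delta_{n^2 e_1}$ converges weakly to $\delta_0$ while $\dwassone(\mu_n,\delta_0) = n \to \infty$. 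The correct statement, which you use, is that $\dwassone$-convergence is equivalent to weak convergence \emph{plus} convergence of first moments, and you supply the second ingredient explicitly --- $m_1(\alpha_n)\to m_1(\alpha)$ almost surely by the strong law of large numbers applied to the integrable variables $\|y_i\|$ --- so that on the intersection of the two full-measure events (weak convergence via Varadarajan, moment convergence via the SLLN) one genuinely gets $\dwassone(\alpha_n,\alpha)\to 0$. Your closing remark that this is exactly where the hypothesis $\alpha\in\borel_1(\real^d)$ enters and cannot be dropped is on point. In short: same decomposition, same key stability estimate, but your write-up closes a small, standard, yet genuine gap that the paper's one-line citation leaves open.
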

\begin{proof}
Theorem \ref{T:stab} implies that
\begin{equation}
\sup_{x \in \real^d} \left\| \Sigma_{\alpha_n}(x,\sigma) - 
\Sigma_\alpha(x,\sigma) \right\| \leq
\frac{\sigma A_f}{C_d(\sigma)}
\, \dwassone (\alpha_n,\alpha).
\end{equation}
The conclusion follows from the fact that $\dwassone$ metrizes weak
convergence of probability measures in $\borel_1 (\real^d)$
and Varadarajan's Theorem \cite{dudley} about convergence of
empirical measures on Polish spaces that ensures that
$\alpha_n$ converges weakly to $\alpha$ almost surely.
\end{proof}
Corollary \ref{C:consistency} guarantees the asymptotic consistency
of empirical CTFs. However, in applications, it is important to have
estimates of the rate of convergence, which we derive
from the stability theorem and a result of Fournier and
Guillin \cite[Theorem 1]{fournier14}.

\begin{theorem}[Fournier and Guillin, \cite{fournier14}]
Let $\alpha \in \borel_s (\real^d)$, where $s > 1$. If $y_1, \ldots, y_n$
are i.i.d. random variables with distribution $\alpha$ and $p \in [1,s)$,
then there exists a constant $b> 0$ that depends only on
$p,s$ and $d$ such that
\begin{equation*}
\begin{split}
\mathbb{E} \left[W_p (\alpha,\alpha_n)\right] & \leq \\
b \, m_s^p (\alpha) \, 
&\cdot \begin{cases}
n^{-\frac{s-p}{s}} + n^{-\frac{1}{2}} & \text{if $p>d/2$ and $s\neq 2p$;} \\
n^{-{\frac{s-p}{s}}} + n^{-\frac{1}{2}}\log(1+n) &
\text{if $p=d/2$ and $s\neq 2p$;} \\
n^{-\frac{s-p}{s}} +n^{-\frac{p}{d}} & \mbox{if $p\in[1,d/2)$ and $s\neq d/(d-p)$,}
\end{cases}
\end{split}
\end{equation*}
for any $n \geq 1$.
\end{theorem}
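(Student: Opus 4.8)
This is a result of Fournier and Guillin, so I only outline the structure of their argument; the displayed rates are those they establish for $\expect{\dwassp(\alpha,\alpha_n)^p}$, to which I specialize the sketch (recall $m_s^p(\alpha)=M_s(\alpha)^{p/s}$). The plan is to reduce the estimate to (i) a purely deterministic multiscale transport inequality, (ii) an elementary variance bound for the empirical mass of a fixed set, and (iii) an optimization over the resolution scale, and finally to treat the unbounded domain through a decomposition into radial shells controlled by the $s$-th moment.

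First I would fix a bounded cube of side $R$ together with its nested dyadic partitions $\mathcal{D}_\ell$ into $2^{d\ell}$ subcubes of side $R\,2^{-\ell}$, and prove the deterministic block inequality
\[
\dwassp(\mu,\nu)^p \,\leq\, C_{d,p}\,R^p \sum_{\ell \geq 0} 2^{-p\ell}
\sum_{Q \in \mathcal{D}_\ell} \left| \mu(Q) - \nu(Q) \right|
\]
for probability measures $\mu,\nu$ carried by the cube. This is obtained by constructing a transport plan recursively: descending the tree of dyadic cubes, at level $\ell$ one moves between sibling cubes exactly the mass needed to reconcile the discrepancies $|\mu(Q)-\nu(Q)|$, at a cost per unit mass of order $R\,2^{-\ell}$ (the diameter at that level), and telescoping these costs produces the weighted sum.

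Next I would set $\nu=\alpha_n$ and pass to expectations. The only stochastic input is that $n\,\alpha_n(Q)$ is binomial with mean $n\,\alpha(Q)$, so Cauchy--Schwarz gives $\expect{|\alpha_n(Q)-\alpha(Q)|}\leq \sqrt{\alpha(Q)/n}$, while trivially $\expect{|\alpha_n(Q)-\alpha(Q)|}\leq 2\,\alpha(Q)$. Summing over the at most $2^{d\ell}$ cubes meeting the support, bounding the first estimate by Cauchy--Schwarz ($\sum_Q\sqrt{\alpha(Q)}\leq 2^{d\ell/2}$) and the second by $\sum_Q\alpha(Q)\leq 1$, one finds that the expected level-$\ell$ discrepancy is at most $\min(2^{d\ell/2}n^{-1/2},\,1)$. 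Feeding this into the block inequality, the expected cost on a bounded region is controlled by $\sum_{\ell}2^{-p\ell}\min(2^{d\ell/2}n^{-1/2},1)$, whose behaviour is governed by the crossover level $\ell^\ast$ with $2^{\ell^\ast}\approx n^{1/d}$. Splitting the series at $\ell^\ast$ reproduces exactly the three regimes: the coarse scales dominate and give $n^{-1/2}$ when $p>d/2$; all scales contribute comparably and give $n^{-p/d}$ when $p<d/2$; and the borderline $p=d/2$ accrues the factor $\log(1+n)$ from the $\ell^\ast\approx\log_2 n$ equal terms.

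Finally, to pass from a bounded cube to all of $\real^d$ and to bring in the moment hypothesis, I would decompose $\real^d$ into dyadic shells $\{2^k\leq\|z\|<2^{k+1}\}$, apply the bounded estimate on each shell with $R\approx 2^k$, and use the $s$-th moment to control both the tail mass $\alpha(\{\|z\|\geq R\})\lesssim M_s(\alpha)\,R^{-s}$ and the shell diameters. Optimizing the truncation radius — equivalently, balancing the shell contributions $\sum_k 2^{kp}(\cdots)$ against the constraint $\sum_k 2^{ks}\,\alpha(\text{shell }k)\lesssim M_s(\alpha)$ via H\"older — produces the additional term $n^{-(s-p)/s}$ and the prefactor $m_s^p(\alpha)$. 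The main obstacle is precisely this last step: organizing the simultaneous spatial (dyadic) and radial (shell) decompositions so that the dimension-dependent sampling fluctuation inside a bounded region and the moment-controlled tail are balanced to yield the sharp exponents in all three regimes, while keeping the constant dependent only on $p$, $s$ and $d$. By comparison, the deterministic block inequality and the per-cube variance bound are routine.
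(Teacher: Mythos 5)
This theorem is quoted by the paper from Fournier and Guillin \cite{fournier14} without proof, so there is no internal argument to compare against; the relevant benchmark is the proof in the cited reference, and your sketch faithfully reproduces its structure: the deterministic dyadic block inequality bounding the transport cost by weighted cube discrepancies, the bound $\mathbb{E}\left|\alpha_n(Q)-\alpha(Q)\right|\leq\min\left(\sqrt{\alpha(Q)/n},\,2\alpha(Q)\right)$, the crossover at $2^{\ell^\ast}\approx n^{1/d}$ producing the three regimes (with the logarithm at $p=d/2$), and the radial shell decomposition in which the $s$-th moment controls the tails and yields the $n^{-(s-p)/s}$ term and the prefactor $m_s^p(\alpha)$. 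You also correctly flag the one point of friction between the source and the statement as transcribed here: Fournier and Guillin bound $\mathbb{E}\left[W_p^p(\alpha,\alpha_n)\right]$ rather than $\mathbb{E}\left[W_p(\alpha,\alpha_n)\right]$, which is immaterial for the paper's only use of the result (Corollary~\ref{C:rates}, which takes $p=1$, where the two quantities coincide).
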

\begin{corollary} \label{C:rates}
Let $f$ be as in Theorem \ref{T:stab} and $\sigma > 0$.
Suppose that $\alpha \in \borel_3 (\real^d)$ and
$y_i$, $i \in \mathbb{N}$, are i.i.d. random
variables with distribution $\alpha$.  Then, there is
a constant $b> 0$, that depends only on $d$,
such that
\[
\begin{split}
\mathbb{E} &\left[ \, \sup_{x\in\real^d} \left\| \Sigma_{\alpha_n} (x, \sigma)
- \Sigma_\alpha (x, \sigma) \right\| \right] \leq \\
&\frac{\sigma A_f b}{C_d (\sigma)} m_3 (\alpha) \cdot 
\begin{cases}
n^{-\frac{2}{3}} + n^{-\frac{1}{2}} & \text{if $d=1$;}\\
n^{-{\frac{2}{3}}} + n^{-\frac{1}{2}}\log(1+n) & \text{if $d=2$;}\\
n^{-\frac{2}{3}} +n^{-\frac{1}{d}} & \text{if $d\geq 3$.}
\end{cases}
\end{split}
\]
\end{corollary}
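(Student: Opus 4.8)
The plan is to obtain Corollary~\ref{C:rates} as a direct composition of two ingredients already available: the stability bound of Theorem~\ref{T:stab}, applied with $\beta=\alpha_n$, and the empirical-measure convergence rates of Fournier and Guillin, specialized to the exponents $p=1$ and $s=3$.

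First I would observe that since $\alpha\in\borel_3(\real^d)\subset\borel_1(\real^d)$, both $\Sigma_\alpha(\cdot,\sigma)$ and $\Sigma_{\alpha_n}(\cdot,\sigma)$ are well defined and Theorem~\ref{T:stab} applies verbatim with $\beta=\alpha_n$, yielding the $\omega$-pointwise estimate
\[
\sup_{x\in\real^d}\left\|\Sigma_{\alpha_n}(x,\sigma)-\Sigma_\alpha(x,\sigma)\right\|\leq\frac{\sigma A_f}{C_d(\sigma)}\,\dwassone(\alpha,\alpha_n).
\]
Taking expectations on both sides (the right-hand side is a measurable function of the sample, and the inequality holds surely) reduces the whole problem to bounding $\mathbb{E}\!\left[\dwassone(\alpha,\alpha_n)\right]$ and absorbing the deterministic prefactor $\sigma A_f/C_d(\sigma)$.

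Next I would invoke the Fournier--Guillin theorem with $p=1$ and $s=3$. The hypotheses are met: $s=3>1$, $\alpha\in\borel_s(\real^d)$, and $p=1\in[1,s)$. The theorem then gives $\mathbb{E}[\dwassone(\alpha,\alpha_n)]\leq b\,m_3(\alpha)\,(\cdots)$, where the constant $b$ a priori depends on $p,s,d$; since $p=1$ and $s=3$ are now fixed, $b$ depends only on $d$, and the moment factor becomes $m_s^p(\alpha)=m_3(\alpha)$, as claimed. The three regimes of the cited theorem, $p>d/2$, $p=d/2$, and $p\in[1,d/2)$, translate under $p=1$ into $d=1$, $d=2$, and $d\geq 3$ respectively, matching the three cases of the corollary. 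Substituting $s-p=2$, $s=3$, and $p/d=1/d$ converts the exponents $-(s-p)/s$ and $-p/d$ into $-2/3$ and $-1/d$, producing exactly the stated rates.

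The only point demanding care is checking the nondegeneracy side conditions of Fournier--Guillin. For $d=1,2$ one needs $s\neq 2p$, i.e.\ $3\neq 2$, which holds; for $d\geq 3$ one needs $s\neq d/(d-p)=d/(d-1)$, and since $d/(d-1)\in(1,3/2]$ for all $d\geq 3$, it never equals $s=3$, so this condition holds as well. Hence every case of the cited theorem applies cleanly, and no genuine obstacle remains beyond this bookkeeping: combining the two displayed bounds gives the asserted inequality with the prefactor $\sigma A_f b/C_d(\sigma)$ and the case-dependent rates.
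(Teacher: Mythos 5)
Your proposal is correct and follows essentially the same route as the paper's own proof: apply Theorem \ref{T:stab} with $\beta=\alpha_n$, take expectations, and invoke the Fournier--Guillin bound with $p=1$ and $s=3$, so that the constant depends only on $d$ and the three regimes $p>d/2$, $p=d/2$, $p\in[1,d/2)$ become $d=1$, $d=2$, $d\geq 3$. Your explicit verification of the nondegeneracy conditions $s\neq 2p$ and $s\neq d/(d-1)$ is a welcome bit of bookkeeping that the paper leaves implicit, but it does not change the argument.
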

\begin{proof}
Since $\borel_3 (\real^d) \subseteq \borel_1 (\real^d)$,
$\alpha$ is also an element of $\borel_1 (\real^d)$. The
conclusion follows by invoking Theorem \ref{T:stab} and the
result of Fournier and Guillin with $p=1$ and $s=3$. The
constant $b$ depends only on $d$ because we are
fixing $p$ and $s$.
\end{proof}


Theorem \ref{T:stab} ensures that multiscale covariance fields
are stable. However, the results do not apply
to some discontinuous kernels of practical interest. Nonetheless,
we prove a stability theorem for the truncation kernel, as well as
pointwise convergence results for more general kernels.
\subsection{The Truncation Kernel}

We begin our discussion of covariance fields associated
with the truncation kernel with a stability theorem with
respect to the $\infty$-Wasserstein metric. In preparation for
the proof of the theorem, we introduce some notation.
For $0 \leq a< b$, let
\begin{equation} \label{E:annulus}
R_d (a,b) = \{y \in \real^d \, \colon \, a < \|y\| \leq b \} \,.
\end{equation}
and
\begin{equation} \label{E:moment}
s_d (a,b) = \int_{R_d (a,b)} \|y\|^2 \, dy 
\end{equation}
be its radial moment of inertia.
We will use the fact that for any $B \geq b$, the inequality
\begin{equation} \label{E:inertia}
s_d (a,b) \leq (b-a) \, \frac{\area_{d-1}}{d+2} \, \frac{B^{d+2}}{B-a}
\end{equation}
holds. Indeed,
\begin{equation} 
\begin{split}
s_d(a,b) &= \frac{\area_{d-1}}{d+2} \left(b^{d+2} - a^{d+2} \right) =
\frac{\area_{d-1}}{d+2} a^{d+2} \left((b/a)^{d+2}-1\right) \\
&=\frac{\area_{d-1}}{d+2}a^{d+2} \frac{(b-a)}{a}
\left(1+ \left(\frac{b}{a}\right) + \cdots + \left(\frac{b}{a}\right)^{d+1}\right) \\
&\leq \frac{b-a}{B-a} \, \frac{\area_{d-1}}{d+2} a^{d+2}
\frac{B-a}{a} \left(1+ \left(\frac{B}{a}\right) + \cdots +
\left(\frac{B}{a}\right)^{d+1}\right) \\
&\leq (b-a) \, \frac{\area_{d-1}}{(d+2)} \frac{B^{d+2} - a^{d+2}}{B-a}
\leq (b-a) \, \frac{\area_{d-1}}{d+2} \, \frac{B^{d+2}}{B-a} \,.
\end{split}
\end{equation}

Let $\borel_\infty (\real^d)$ and $\borel^\lambda_\infty (\real^d)$
be as in Definition \ref{D:moments}.
\begin{theorem}[Stability for the Truncation Kernel] \label{T:stab-trunc}
Let $\sigma>0$ and $\lambda>0$. Suppose $\Omega \subset \real^d$ is
a compact set and let $c > 0$ satisfy $\mathrm{diam} (\Omega) \leq c$.
There is a constant $A = A(\sigma, d, c) > 0$ such that if
$\alpha \in \borel_\infty^\lambda (\real^d)$ and $\beta \in 
\borel_\infty (\real^d)$ have their supports contained in $\Omega$,
then the multiscale covariance tensor fields of $\alpha$ and
$\beta$ associated with the truncation kernel $T$ satisfy
\begin{equation*}
\sup_{x\in\real^d} 
\|\Sigma_\alpha (x, \sigma) - \Sigma_\beta (x, \sigma)\| \leq
\lambda A \dwassinf (\alpha, \beta) \,.
\end{equation*}
\end{theorem}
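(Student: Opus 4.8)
The plan is to reduce to the case $x=0$ and then control the integrand $\|Q_\sigma(z_1)-Q_\sigma(z_2)\|$ against an optimal $\infty$-coupling, isolating the discontinuity of $T$ on the sphere $\{\|z\|=\sigma\}$. First I would reduce to $x=0$. For fixed $x$, the translation $\tau(z)=z-x$ is an isometry, so by Remark \ref{R:iso} one has $\Sigma_\alpha(x,\sigma)=\Sigma_{\tau_\ast\alpha}(0,\sigma)$ and likewise for $\beta$; moreover $\tau_\ast\alpha\in\borel_\infty^\lambda(\real^d)$, the supports stay inside a translate of $\Omega$ of the same diameter, and $\dwassinf(\tau_\ast\alpha,\tau_\ast\beta)=\dwassinf(\alpha,\beta)$ by Remark \ref{R:wasserstein}(iii). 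Hence it suffices to bound $\|\Sigma_\alpha(0,\sigma)-\Sigma_\beta(0,\sigma)\|$ with constants independent of $x$; the supremum over $x$ then follows. Writing $Q_\sigma(z)=(z\otimes z)K_\sigma(z)=\frac{1}{\sigma^d\volume_d}(z\otimes z)$ for $\|z\|\le\sigma$ and $Q_\sigma(z)=0$ otherwise, and choosing (Remark \ref{R:wasserstein}(i)) an optimal coupling $\mu\in\joint(\alpha,\beta)$ with $\|z_1-z_2\|\le\delta:=\dwassinf(\alpha,\beta)$ throughout $\mathrm{supp}\,\mu$, I would start from
\[
\|\Sigma_\alpha(0,\sigma)-\Sigma_\beta(0,\sigma)\|\le\iint\|Q_\sigma(z_1)-Q_\sigma(z_2)\|\,\mu(dz_1\times dz_2).
\]

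Next I would split the domain according to whether $z_1,z_2$ lie in the closed ball $\{\|z\|\le\sigma\}$. When both lie outside, $Q_\sigma(z_1)=Q_\sigma(z_2)=0$ and there is no contribution. When both lie inside, $Q_\sigma$ is smooth and the estimate $\|z_1\otimes z_1-z_2\otimes z_2\|\le(\|z_1\|+\|z_2\|)\|z_1-z_2\|\le 2\sigma\|z_1-z_2\|$ gives a contribution at most $\frac{2\sigma}{\sigma^d\volume_d}\,\delta$, exactly as in the smooth case. The substantive difficulty is the mixed region, where one of $z_1,z_2$ is inside and the other outside: here $Q_\sigma$ jumps, so no Lipschitz bound is available and the pointwise difference is only $O(\sigma^{2-d})$.

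The key observation that tames the mixed region is that it is confined to a thin shell about the sphere $\{\|z\|=\sigma\}$. Indeed, if $z_1,z_2$ straddle the sphere and $\|z_1-z_2\|\le\delta$, then the point of the pair with larger norm lies in $R_d(\sigma,\sigma+\delta)$ and the one with smaller norm in $R_d(\sigma-\delta,\sigma)$; in particular the $\alpha$-point $z_1$ always lies in $R_d(\sigma-\delta,\sigma+\delta)$. Since the inside point has the smaller norm, the integrand equals $\|z_{\mathrm{in}}\|^2/(\sigma^d\volume_d)$ and is bounded by $\|z_1\|^2/(\sigma^d\volume_d)$ in both orientations, so projecting onto the first marginal I would obtain
\[
\iint_{\mathrm{mixed}}\|Q_\sigma(z_1)-Q_\sigma(z_2)\|\,\mu\le\frac{1}{\sigma^d\volume_d}\int_{R_d(\sigma-\delta,\sigma+\delta)}\|z\|^2\,\alpha(dz).
\]
Now the density bound $\alpha\le\lambda\,\mathcal L$ converts this into $\frac{\lambda}{\sigma^d\volume_d}\,s_d(\sigma-\delta,\sigma+\delta)$, and the moment-of-inertia inequality \eqref{E:inertia} (with $B=\sigma+c$, using $\delta\le\mathrm{diam}(\Omega)\le c$) bounds $s_d(\sigma-\delta,\sigma+\delta)$ by a constant multiple of $\delta$ depending only on $\sigma,d,c$. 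This is precisely the step for which both hypotheses are indispensable: the $\infty$-Wasserstein metric (not $\dwassone$) guarantees that every transported pair stays within $\delta$, confining the jump to a $\delta$-shell, while the density bound on $\alpha$ controls the mass in that shell; the asymmetry between $\alpha$ and $\beta$ causes no trouble because one always projects onto the $\alpha$-coordinate and bounds the inside norm by the outside ($\alpha$-point) norm.

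Finally I would collect the three contributions. The mixed and both-inside terms are each bounded by a constant times $\delta$; the lone $\lambda$-free term (both inside) is absorbed by noting that $\alpha$ is a probability measure supported in $\Omega$ with $\alpha\le\lambda\,\mathcal L$, whence $\lambda\ge 1/\mathcal L(\Omega)\ge 1/(\volume_d c^d)$, so every constant may be rewritten as $\lambda$ times a quantity depending only on $\sigma,d,c$. This yields $\|\Sigma_\alpha(0,\sigma)-\Sigma_\beta(0,\sigma)\|\le\lambda A\,\dwassinf(\alpha,\beta)$ with $A=A(\sigma,d,c)$, and the reduction above promotes it to the stated uniform bound over $x\in\real^d$. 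I expect the mixed-region estimate to be the main obstacle, since it is where the discontinuity of $T$ must be absorbed by the interplay of the $\infty$-Wasserstein control and the density constraint.
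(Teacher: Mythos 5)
Your proof is correct, and it reaches the bound through a genuinely different decomposition than the paper's, while relying on the same two hard estimates: the Lipschitz-type bound $\|z_1\otimes z_1-z_2\otimes z_2\|\le(\|z_1\|+\|z_2\|)\|z_1-z_2\|$ inside the ball, and the shell-moment inequality \eqref{E:inertia} with $B=\sigma+c$. The paper does not partition the coupling integral into both-in/both-out/mixed; instead it inserts an auxiliary inflated-scale tensor and writes $\Sigma_\alpha(x,\sigma)-\Sigma_\beta(x,\sigma)=T_1+T_2$, where $T_1=\Sigma_\alpha(x,\sigma)-\frac{(\sigma+\eta)^d}{\sigma^d}\Sigma_\alpha(x,\sigma+\eta)$. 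The inflation is designed so that in the coupling term $T_2$ the orientation $y_1\notin B_{\sigma+\eta}$, $y_2\in B_\sigma$ is disjoint from $\mathrm{supp}\,[\mu]$, leaving only one mixed orientation to estimate; the price is the extra annulus term $T_1$, which is then bounded by exactly the density-plus-inertia argument you apply to your mixed region. You avoid the inflation entirely by treating both orientations of the straddling pairs symmetrically, observing that the inside point's norm is dominated by the $\alpha$-point's norm and marginalizing onto $\alpha$ over the shell; this works and is arguably cleaner, though you should write the shell as $R_d((\sigma-\delta)^+,\sigma+\delta)$ since $\delta>\sigma$ is possible (the paper takes the same care with $(\sigma-\eta)^+$). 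The other substantive difference is the $\lambda$-free both-inside term: the paper manufactures its factor of $\lambda$ by applying the density bound to the ball mass, $\alpha(B_{\sigma+\eta})\le\lambda(\sigma+c)^d\volume_d$, whereas you bound the mass by $1$ and recover $\lambda$ from $1=\alpha(\Omega)\le\lambda\mathcal{L}(\Omega)\le\lambda\volume_d c^d$; both are valid, and your observation that the hypotheses force $\lambda\ge 1/(\volume_d c^d)$ is a legitimate alternative the paper does not use. In sum, the mechanism is the same — the optimal $\dwassinf$-coupling confines the kernel's discontinuity to an $O(\delta)$-shell whose $\alpha$-moment the density bound controls — but your bookkeeping is more direct (three regions, no auxiliary scale), while the paper's inflation trick trades the symmetric case analysis for one extra, easily bounded term.
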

\begin{proof}
Without loss of generality, we may assume that $x \in \real^d$
is the origin. We abbreviate $\eta = W_\infty (\alpha, \beta)$
and let $\mu \in \Gamma (\beta, \alpha)$ be a coupling that realizes
$\eta$. Clearly, $\eta \leq \text{diam} (\Omega)\leq c$.  

Using the notation
introduced in \eqref{E:annulus}, let $A_{\sigma, \eta} =
R_d (\sigma, \sigma + \eta)$. We write
\begin{equation} \label{E:diff1}
\begin{split}
\Sigma_\alpha (x, \sigma) - \Sigma_\beta (x, \sigma) &= 
\underbrace{
\Sigma_\alpha (x, \sigma) - \frac{(\sigma + \eta)^d}{\sigma^d}\,
\Sigma_\alpha (x, \sigma + \eta) 
}_{T_1} \\
&+ 
\underbrace{
\frac{(\sigma + \eta)^d}{\sigma^d}\, \Sigma_\alpha (x, \sigma + \eta)
- \Sigma_\beta (x, \sigma)
}_{T_2} \,.
\end{split}
\end{equation}
Using the fact that $\alpha \in \borel^\lambda_\infty (\real^d)$,
we bound the norm of $T_1$  as follows:
\begin{equation} \label{E:diff2}
\begin{split}
\left\|T_1 \right\| &=
\frac{1}{\sigma^d \volume_d}
\Big\| \int_{A_{\sigma, \eta}} y \otimes y \, d\alpha (y) \Big\|
\leq \frac{1}{\sigma^d \volume_d} 
\int_{A_{\sigma, \eta}} \| y\|^2 \, d\alpha (y) \\
&\leq \frac{\lambda}{\sigma^d \volume_d}
\int_{A_{\sigma, \eta}} \| y\|^2 \, d y 
= \frac{\lambda}{\sigma^d \volume_d}
s_d (\sigma, \sigma + \eta) \,.
\end{split}
\end{equation}
Using \eqref{E:inertia} with $a = \sigma$, $b = \sigma + \eta$
and $B = \sigma + c$, we have that
\begin{equation} \label{E:bound3}
s_d(\sigma, \sigma + \eta)\leq \eta \,\frac{\area_{d-1}}{d+2}
\frac{(\sigma + c)^{d+2}}{c} = \eta \, \volume_d \,
\frac{d}{d+2} \frac{(\sigma + c)^{d+2}}{c} \,.
\end{equation}
Thus,
\begin{equation} \label{E:diff3}
\left\| T_1 \right\| \leq
\frac{\lambda d}{d+2} \frac{(\sigma + c)^{d+2}}{c \sigma^d}
\, \dwassinf (\alpha, \beta) \,.
\end{equation}
Now we examine $T_2$. Let $\mathbb{I} \colon \real^d \to \real$ the
characteristic function of the closed ball of radius 1 centered
at the origin. Using the coupling $\mu$, we write
\begin{equation} \label{E:diff4}
\begin{split}
T_2  &= \frac{1}{\sigma^d \volume_d} \iint_{\real^d \times \real^d}
\left[ (y_1 \otimes y_1) \,
\mathbb{I} \left(\frac{y_1}{\sigma + \eta}\right)
- (y_2 \otimes y_2) \, \mathbb{I} \left(\frac{y_2}{\sigma}\right) \right]
\mu (dy_1 \times dy_2) \,.
\end{split}
\end{equation}
Note that the integrand in \eqref{E:diff4} vanishes on
$(\real^d \setminus B_{\sigma + \eta}) \times
(\real^d \setminus B_{\sigma})$ and
the integral also vanishes over $\left(\real^d \setminus 
B_{\sigma + \eta} \right) \times B_\sigma$ because this
subdomain is disjoint from $\text{supp} \,[\mu]$. Combining
these remarks with
\begin{equation}
y_1 \otimes y_1 - y_2 \otimes y_2 = (y_1 - y_2) \otimes y_1 +
y_2 \otimes (y_1- y_2) \,,
\end{equation}
we may rewrite \eqref{E:diff4} as
\begin{equation} \label{E:diff5}
\begin{split}
T_2  &= \frac{1}{\sigma^d \volume_d}
\iint_{B_{\sigma + \eta} \times B_\sigma} \left[ (y_1 \otimes y_1) \,
- (y_2 \otimes y_2) \right] \mu (dy_1 \times dy_2) \\
&+ \frac{1}{\sigma^d \volume_d}
\iint_{B_{\sigma + \eta} \times (\real^d \setminus B_\sigma)} 
y_1 \otimes y_1 \, \mu (dy_1 \times dy_2) \\
&= \frac{1}{\sigma^d \volume_d}
\iint_{B_{\sigma + \eta} \times B_\sigma}
(y_1-y_2) \otimes y_1 \, \mu (dy_1 \times dy_2) \\
&+ \frac{1}{\sigma^d \volume_d}
\iint_{B_{\sigma + \eta} \times B_\sigma}
y_2 \otimes (y_1-y_2) \, \mu (dy_1 \times dy_2) \\
&+ \frac{1}{\sigma^d \volume_d}
\iint_{B_{\sigma + \eta} \times A_{\sigma, 2\eta}} 
y_1 \otimes y_1 \, \mu (dy_1 \times dy_2) \,.
\end{split}
\end{equation}
For the last equality, we used again the fact that
\begin{equation} \label{E:support}
(y_1, y_2) \in \text{supp}\,[\mu] \Longrightarrow
\|y_1 - y_2 \|\leq \eta \,.
\end{equation}
From \eqref{E:diff5} and \eqref{E:support}, using the facts
that $\|y_1\| \leq \sigma + \eta$ for $y_1 \in B_{\sigma+\eta}$
and $\|y_2\| \leq \sigma$ for $y_2 \in B_\sigma$,
we may conclude that
\begin{equation} \label{E:diff6}
\begin{split}
\|T_2\| &\leq \frac{\sigma + \eta}{\sigma^d \volume_d}
\iint_{B_{\sigma + \eta} \times B_\sigma}
\|y_1-y_2\|\, \mu (dy_1 \times dy_2) \\
&+ \frac{\sigma}{\sigma^d \volume_d}
\iint_{B_{\sigma + \eta} \times B_\sigma}
\|y_1-y_2\|\, \mu (dy_1 \times dy_2) \\
&+ \frac{1}{\sigma^d \volume_d}
\iint_{B_{\sigma + \eta} \times A_{\sigma, 2\eta}}
\|y_1\|^2 \, \mu (dy_1 \times dy_2) \\
&\leq \frac{\sigma + c}{\sigma^d \volume_d}
\eta \int_{B_{\sigma + \eta}} \alpha (dy_1)
+ \frac{\sigma}{\sigma^d \volume_d}
\eta \int_{B_{\sigma + \eta}} \alpha (dy_1) \\
&+ \frac{1}{\sigma^d \volume_d}
\int_{A_{(\sigma-\eta)^+, 2\eta}} \|y_1\|^2 \, \alpha (dy_1) \,,
\end{split}
\end{equation}
where $(\sigma - \eta)^+ = \text{max} \{\sigma - \eta, 0\}$.
Since $\alpha \in \borel^\lambda_\infty (\real^d)$ and
$\eta \leq c$,
\begin{equation} \label{E:ball}
\int_{B_{\sigma + \eta}} \alpha (dy_1) \leq
\lambda \int_{B_{\sigma + \eta}} dy_1 =
\lambda (\sigma + \eta)^d \nu_d \leq
\lambda (\sigma + c)^d \nu_d
\end{equation}
and
\begin{equation} \label{E:inertia2}
\begin{split}
\int_{A_{\sigma-\eta, 2\eta}} \|y_1\|^2 \, \alpha (dy_1) & \leq
\lambda \int_{A_{(\sigma-\eta)^+, 2\eta}} \|y_1\|^2 \, dy_1 \\
&=\lambda s_d ((\sigma - \eta)^+, \sigma + \eta) \\
&\leq 2 \lambda \eta \frac{\omega_{d-1}}{d+2} 
\frac{(\sigma + c)^{d+2}}{\eta + c} \\
&\leq 2 \lambda \eta \frac{d}{d+2} \volume_d
\frac{(\sigma + c)^{d+2}}{c} \,,
\end{split}
\end{equation}
where we used \eqref{E:inertia} with $a= (\sigma - \eta)^+$,
$b = \sigma + \eta$ and $B = \sigma + c$.
Combining \eqref{E:diff6}, \eqref{E:ball} and \eqref{E:inertia2},
we obtain
\begin{equation} \label{E:diff7}
\|T_2\| \leq \frac{2 \sigma + c}{\sigma^d}
\lambda (\sigma + c)^d \eta 
+ \frac{2 \lambda}{\sigma^d}
\frac{d}{d+2} 
\frac{(\sigma + c)^{d+2}}{c} \eta
\end{equation}
From \eqref{E:diff1}, \eqref{E:diff3} and \eqref{E:diff7},
it follows that
\begin{equation}
\begin{split}
\|\Sigma_\alpha (x, \sigma) - \Sigma_\beta (x, \sigma) \|
&\leq \lambda \frac{ d}{d+2} \frac{(\sigma + c)^{d+2}}{c \sigma^d} 
\dwassinf (\alpha, \beta)  \\
&+ \lambda  \frac{(2 \sigma + c)(\sigma + c)^d}{\sigma^d}
\, \dwassinf (\alpha, \beta)  \\
&+ \lambda \frac{2d}{(d+2)}  \frac{(\sigma + c)^{d+2}}{c \sigma^d }
\, \dwassinf (\alpha, \beta) \\
&= \lambda A(\sigma, d, c) \dwassinf (\alpha, \beta) \,.
\end{split}
\end{equation}
Since $x \in \real^d$ is arbitrary, the claim follows.
\end{proof}

We now derive a consistency result and estimates for
the rate of convergence of empirical approximations to multiscale
covariance fields. The following result is a $W_\infty$-counterpart
to the theorem by Fournier and Guillin stated above.
\begin{theorem}[Garc\'{\i}a-Trillos and Slep\v{c}ev \cite{gts15}]
\label{T:slepcev}
Let $\Omega\subset \real^d$ be a bounded connected open subset with Lipschitz boundary. Let $\alpha$ be a probability measure on $\Omega$ with density $f_\alpha \colon \Omega \rightarrow(0,\infty)$ such that there exists $\lambda \geq 1$ with $\lambda^{-1} \leq f_\alpha(x) \leq \lambda$, for all $x\in \Omega$, and let $y_, \ldots, y_n$ be i.i.d. random variables with distribution $\alpha$. Then, there exist constants $c_1, C_1, C_2 > 0$, depending only on $\Omega$ and $\lambda$, such that for all $n \in \mathbb{N}$ and $p > 1$,
\begin{equation*}\mathbb{P} \big( W_\infty(\alpha,\alpha_n)
\leq (C_1+ C_2 \sqrt{p}) \,r_d(n) \big) \geq 1- c_1 n^{-p} \,,
\end{equation*} where $r_2(n) = \frac{\ln(n)^{3/4}}{n^{1/2}}$ and
$r_d(n) = \frac{\ln(n)^{1/d}}{n^{1/d}}$, for $d\geq 3.$
\end{theorem}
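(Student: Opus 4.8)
The plan is to run a multiscale transport argument, using that the two-sided density bound $\lambda^{-1}\le f_\alpha\le\lambda$ makes $\alpha$ comparable to Lebesgue measure, $\alpha(Q)\asymp |Q|$, on every cell of $\Omega$. First I would invoke Strassen's theorem, which identifies $\dwassinf(\alpha,\alpha_n)\le\epsilon$ with the existence of a coupling supported within the $\epsilon$-tube of the diagonal, equivalently with the mass-comparison inequalities $\alpha(A)\le\alpha_n(A^\epsilon)$ holding for all closed sets $A$ (together with the symmetric inequalities), where $A^\epsilon$ denotes the closed $\epsilon$-neighbourhood of $A$. A covering argument then reduces checking these over all closed sets to checking them, at a slightly smaller radius, over the cells of a fixed dyadic scale, so the entire problem becomes one of controlling $\alpha_n$ on a hierarchy of cells.

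Next I would build the coupling by a hierarchical (dyadic-type) transport scheme. Decompose $\Omega$ into cells at scales $2^{-k}$, $0\le k\le K$, with $2^{-K}$ of the order of the target rate $r_d(n)$, and transport mass down the tree: at each node one matches as much continuous $\alpha$-mass to empirical atoms as the local counts allow, and pushes the residual discrepancy $\alpha_n(Q)-\alpha(Q)$ to neighbouring cells. The $L^\infty$ displacement created at scale $k$ is of order $2^{-k}$ times the residual mass routed through that scale, so the total displacement telescopes into a sum over scales of the local imbalances. Connectedness of $\Omega$ and its Lipschitz boundary enter precisely here, to guarantee that neighbouring cells remain within distance $O(2^{-k})$ and that the adjacency graph admits short correction chains, so that each balancing step is cheap.

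The probabilistic input is a uniform control of the imbalances. For each cell $Q$, the count $n\,\alpha_n(Q)$ is $\mathrm{Binomial}(n,\alpha(Q))$, so Bernstein's inequality bounds $|\alpha_n(Q)-\alpha(Q)|$ by $O\!\big(\sqrt{\alpha(Q)\log(1/\delta)/n}+\log(1/\delta)/n\big)$ with probability at least $1-\delta$. Taking a union bound over the $O(2^{dk})$ cells at every scale $k\le K$ and choosing the per-cell failure probability of order $n^{-p}$ turns $\log(1/\delta)$ into $p\log n$, which produces the $\sqrt{p}$ dependence in the constant and feeds the logarithm into the rate. Summing the per-scale displacement bounds gives the result: for $d\ge 3$ the sum is dominated by the finest scale, chosen so that each cell carries $\Theta(p\log n)$ expected points (hence none is starved, with probability $1-O(n^{-p})$ after the union bound), which fixes the cell size at $r_d(n)=(\log n)^{1/d}n^{-1/d}$ and bounds the displacement by $O(\sqrt{p}\,r_d(n))$; for $d=2$ all scales contribute comparably and the summation is marginal, which is the origin of the anomalous logarithmic factor.

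The main obstacle is the two-dimensional case, and specifically the sharp exponent $3/4$ on the logarithm in $r_2(n)=(\log n)^{3/4}/n^{1/2}$. A crude term-by-term union bound across scales only yields logarithmic power $1$, and $d=2$ is the critical dimension for the matching problem, where the transport energy is logarithmically divergent; obtaining $(\log n)^{3/4}$ rather than $(\log n)$ requires the finer Ajtai--Koml\'{o}s--Tusn\'{a}dy-type analysis. One must optimally choose the crossover scale between greedy local matching and global correction and control the accumulated imbalances by a chaining or second-moment argument on the transport potential, equivalently by linearizing the Monge--Amp\`{e}re problem and estimating the solution of the associated Poisson equation driven by the fluctuation field, rather than by summing cell-by-cell deviations. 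Establishing this sharp exponent uniformly in $p$ is the technical heart of the argument.
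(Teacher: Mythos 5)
First, a point of order: the paper does not prove this statement at all --- it is quoted verbatim as a result of Garc\'{\i}a-Trillos and Slep\v{c}ev \cite{gts15} and used as a black box (as input to Corollaries \ref{C:ctf-infty} and \ref{C:trunc-stab}). So there is no internal proof to compare your proposal against; what you have written is an attempted reconstruction of the proof of the cited theorem itself.

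As such a reconstruction, your outline captures the standard architecture (the Strassen/Hall characterization of $W_\infty$, a dyadic partition, binomial concentration with a union bound producing the $\sqrt{p}$ dependence from a per-cell failure probability of order $n^{-p}$), but it has two genuine gaps. First, your cost accounting is of $W_1$ type, not $W_\infty$ type: you bound ``the $L^\infty$ displacement created at scale $k$'' by ``$2^{-k}$ times the residual mass routed through that scale,'' but $W_\infty$ is insensitive to how much mass moves --- it requires that no particle whatsoever move farther than $\epsilon$. In a naive top-down scheme the imbalance between the two halves of $\Omega$ at the coarsest scale is of order $n^{-1/2}$ in mass, and that mass must cross a distance of order $\mathrm{diam}(\Omega)$; unless the rebalancing is organized so that this long-range correction is realized by a chain of short moves (each particle displaced by only a cell width while the deficit propagates along the chain), the scheme gives no nontrivial $W_\infty$ bound at all. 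Making this work is precisely the content of the combinatorial matching theorems of Ajtai--Koml\'{o}s--Tusn\'{a}dy and Leighton--Shor ($d=2$) and Shor--Yukich ($d\geq 3$) on the cube, which \cite{gts15} takes as input; the actual contribution of that paper is the extension from the unit cube with uniform density to connected Lipschitz domains with densities bounded above and below, via a decomposition into bilipschitz images of cubes and a gluing/rebalancing lemma --- a step your sketch compresses into one sentence about ``short correction chains.'' Second, you explicitly defer the sharp exponent $(\ln n)^{3/4}$ in $d=2$, calling it ``the technical heart'' --- which it is; but since that exponent is exactly what distinguishes the theorem from a crude estimate, deferring it means the proposal does not prove the stated result. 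In short: the skeleton and the probabilistic input are right, but the two load-bearing components --- the genuinely $L^\infty$ matching argument and the domain/density extension --- are named rather than supplied.
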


\begin{corollary}[Consistency for the Truncation Kernel]  \label{C:ctf-infty}
Let $\alpha$ be a probabilty measure on $\real^d$ with density $f_\alpha$ and  let $\Omega_\alpha$ be the interior of the support of $\alpha$. Assume that $\Omega_\alpha$ is bounded and connected with Lipschitz boundary $\partial \Omega_\alpha$. Furthermore, assume that there exists $\lambda \geq 1$ such that $\lambda^{-1}\leq f_\alpha(z) \leq \lambda$, for all $z \in \Omega_\alpha$. If $y_i$,
$i \in \mathbb{N}$, are i.i.d. random variables with
distribution $\alpha$, then, for any $p > 1$, there are constants
$C = C (\Omega_\alpha, \lambda, p) > 0$ and
$c_1 = c_1 (\Omega_\alpha, \lambda) >0$ such that  
\begin{equation*}\mathbb{P} \left( \, \sup_{x\in\real^d}
\left\| \Sigma_{\alpha_n} (x, \sigma) - \Sigma_\alpha (x, \sigma) \right\|
\leq C \,r_d(n) \right) \geq 1- c_1 n^{-p} \,.
\end{equation*}
Here, $\alpha_n = \sum_{i=1}^n \delta_{y_i} /n$.
\end{corollary}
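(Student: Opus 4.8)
The plan is to obtain the corollary as a direct composition of two results already established: the stability theorem for the truncation kernel (Theorem \ref{T:stab-trunc}) applied to the pair $(\alpha, \alpha_n)$, and the $W_\infty$-concentration estimate of Garc\'ia-Trillos and Slep\v{c}ev (Theorem \ref{T:slepcev}) applied to $\alpha$. First I would check that the hypotheses of Theorem \ref{T:stab-trunc} hold for this pair. Since $f_\alpha \leq \lambda$ on $\Omega_\alpha$, the measure $\alpha$ is absolutely continuous with bounded density, so by the Example following Definition \ref{D:moments} we have $\alpha \in \borel_\infty^\lambda(\real^d)$, and its support is contained in the compact set $\Omega := \overline{\Omega_\alpha}$; put $c = \mathrm{diam}(\Omega)$. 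Because the $y_i$ are drawn from $\alpha$, almost surely each $y_i \in \mathrm{supp}\,[\alpha] \subset \Omega$, so $\alpha_n \in \borel_\infty(\real^d)$ also has support in $\Omega$. Note that $\alpha_n$ need only lie in $\borel_\infty$, not in $\borel_\infty^\lambda$, which is exactly what Theorem \ref{T:stab-trunc} requires of the second argument. Hence, for the fixed scale $\sigma$,
\begin{equation}
\sup_{x\in\real^d} \left\|\Sigma_{\alpha_n}(x,\sigma) - \Sigma_\alpha(x,\sigma)\right\| \leq \lambda\, A(\sigma,d,c)\, \dwassinf(\alpha,\alpha_n) \,,
\end{equation}
using symmetry of the tensor norm so that the roles of $\alpha$ and $\alpha_n$ may be exchanged.

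Next I would control $\dwassinf(\alpha,\alpha_n)$ probabilistically. Applying Theorem \ref{T:slepcev} with its domain taken to be $\Omega_\alpha$ (bounded, connected, with Lipschitz boundary by assumption, and carrying the density bound $\lambda^{-1} \leq f_\alpha \leq \lambda$) produces constants $c_1, C_1, C_2 > 0$ depending only on $\Omega_\alpha$ and $\lambda$ such that, for every $p > 1$,
\begin{equation}
\mathbb{P}\big( \dwassinf(\alpha,\alpha_n) \leq (C_1 + C_2\sqrt{p})\, r_d(n) \big) \geq 1 - c_1 n^{-p} \,.
\end{equation}
On the event appearing in this probability, the stability bound above yields $\sup_{x} \|\Sigma_{\alpha_n}(x,\sigma) - \Sigma_\alpha(x,\sigma)\| \leq \lambda A(\sigma,d,c)(C_1 + C_2\sqrt{p})\, r_d(n)$. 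Setting $C = \lambda\, A(\sigma,d,c)(C_1 + C_2\sqrt{p})$ — which, with $\sigma$ and $d$ fixed and $c = \mathrm{diam}(\Omega_\alpha)$, depends only on $\Omega_\alpha$, $\lambda$, and $p$ — gives the claimed inequality with probability at least $1 - c_1 n^{-p}$, and $c_1 = c_1(\Omega_\alpha,\lambda)$ as asserted.

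Since the argument is a composition of two results already in hand, no deep obstacle arises; the care is entirely in matching hypotheses. The one point genuinely worth checking is that $\alpha_n$ shares the compact support $\Omega$ with $\alpha$, which holds almost surely because the samples lie in $\mathrm{supp}\,[\alpha]$, ensuring the stability theorem applies to the pair $(\alpha,\alpha_n)$; the constant $A$ of Theorem \ref{T:stab-trunc} is then absorbed cleanly into $C$. I would also remark that $r_d(n)$ is defined in Theorem \ref{T:slepcev} only for $d \geq 2$, so the corollary is to be read for $d \geq 2$.
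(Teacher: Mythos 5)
Your proof is correct and takes essentially the same route as the paper's: it composes the stability theorem for the truncation kernel (Theorem \ref{T:stab-trunc}) applied to the pair $(\alpha,\alpha_n)$ with the $\dwassinf$-concentration bound of Theorem \ref{T:slepcev}, and sets $C$ equal to the product of the two resulting constants. Your extra hypothesis-checking (that $\alpha \in \borel_\infty^\lambda(\real^d)$ follows from the density bound, that $\alpha_n$ is almost surely supported in $\overline{\Omega_\alpha}$ so the stability theorem applies with $\alpha_n$ in the role of $\beta \in \borel_\infty(\real^d)$, and that $r_d(n)$ is only defined for $d \geq 2$) simply makes explicit what the paper leaves implicit.
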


\begin{proof}
We use Theorem \ref{T:slepcev} and write
$C' = C_1 + \sqrt{p} \, C_2$.
Theorem \ref{T:stab-trunc} implies that there is a constant
$C'' = C'' (\Omega_\alpha, \lambda) > 0$
such that
\begin{equation}
\sup_{x\in\real^d} \left\| \Sigma_{\alpha_n} (x, \sigma) -
\Sigma_\alpha (x, \sigma) \right\| \leq C'' \,
\dwassinf (\alpha, \alpha_n) \,.
\end{equation}
Thus,
\begin{equation}
\begin{split}
\mathbb{P} \, &\Big(\,\sup_{x\in\real^d}
\left\| \Sigma_{\alpha_n} (x, \sigma) - \Sigma_\alpha (x, \sigma) \right\|
\leq C' C''  r_d (n) \Big) \geq \\ &\geq \mathbb{P}
\big( \dwassinf (\alpha, \alpha_n) \leq C' r_d (n) \big) \geq 1 - c_1 n^{-p} \,.
\end{split}
\end{equation}
The claim follows by setting $C = C' C''$.
\end{proof}

\begin{corollary}\label{C:trunc-stab} 
Let $\sigma > 0$ and $p > 1$. Under the assumptions of Corollary \ref{C:ctf-infty}, for the truncation kernel, there exist $N=N(\sigma, \Omega_\alpha, \lambda) \in \mathbb{N}$ and a constant $A = A (\sigma, \Omega_\alpha,\lambda) > 0$ such that

\begin{equation*}
\mathbb{E} \left[ \, \sup_{x\in\real^d} \left\| \Sigma_{\alpha_n} (x, \sigma)
- \Sigma_\alpha (x, \sigma) \right\| \right] \leq A r_d(n),
\end{equation*} for all $n\geq N.$
\end{corollary}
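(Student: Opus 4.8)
The plan is to upgrade the high-probability estimate of Corollary \ref{C:ctf-infty} to a bound in expectation by a standard truncation-of-tails argument. Write $Z_n = \sup_{x\in\real^d}\|\Sigma_{\alpha_n}(x,\sigma)-\Sigma_\alpha(x,\sigma)\|$ for the random quantity we wish to control. Two ingredients drive the estimate: first, a deterministic almost-sure bound on $Z_n$ coming from Theorem \ref{T:stab-trunc} together with the fact that $\dwassinf(\alpha,\alpha_n)$ can never exceed $\mathrm{diam}(\overline{\Omega_\alpha})$; second, the polynomial-in-probability tail furnished by Corollary \ref{C:ctf-infty}.

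First I would fix any value $p>1$ (for instance $p=2$) and invoke Corollary \ref{C:ctf-infty} to obtain constants $C=C(\sigma,\Omega_\alpha,\lambda)$ and $c_1=c_1(\Omega_\alpha,\lambda)$ with $\mathbb{P}(Z_n\leq C\,r_d(n))\geq 1-c_1 n^{-p}$. Next, since $\alpha\in\borel_\infty^\lambda(\real^d)$ and $\alpha_n\in\borel_\infty(\real^d)$ both have support contained in the compact set $\overline{\Omega_\alpha}$, Theorem \ref{T:stab-trunc} gives a constant $C''=C''(\sigma,\Omega_\alpha,\lambda)$ with $Z_n\leq C''\,\dwassinf(\alpha,\alpha_n)$ for every realization. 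Because any coupling of $\alpha$ and $\alpha_n$ is supported in $\overline{\Omega_\alpha}\times\overline{\Omega_\alpha}$, we have $\dwassinf(\alpha,\alpha_n)\leq\mathrm{diam}(\overline{\Omega_\alpha})\leq c$, so $Z_n\leq D$ almost surely, where $D=C''c$ is a deterministic constant depending only on $\sigma,\Omega_\alpha,\lambda$. This uniform boundedness of $Z_n$ is exactly what lets the rare bad event contribute negligibly to the mean.

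With these in hand I would split the expectation over the event $E_n=\{Z_n\leq C\,r_d(n)\}$ and its complement: on $E_n$ one bounds $Z_n$ by $C\,r_d(n)$, while on $E_n^c$ one uses $Z_n\leq D$ and $\mathbb{P}(E_n^c)\leq c_1 n^{-p}$, yielding $\mathbb{E}[Z_n]\leq C\,r_d(n)+D\,c_1\,n^{-p}$. The only remaining point is to absorb the tail term $D c_1 n^{-p}$ into $r_d(n)$. Since $r_2(n)=\ln(n)^{3/4}/n^{1/2}$ and $r_d(n)=\ln(n)^{1/d}/n^{1/d}$ for $d\geq 3$, one checks that $n^{-p}/r_d(n)\to 0$ as $n\to\infty$ whenever $p>1$: indeed $p>1>1/d$, so the polynomial factor $n^{1/d-p}$ decays. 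Hence there is $N=N(\sigma,\Omega_\alpha,\lambda)$ with $D c_1 n^{-p}\leq C\,r_d(n)$ for all $n\geq N$, and the corollary follows with $A=2C$.

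I do not expect a genuine obstacle here; the argument is essentially the conversion of a concentration inequality into a first-moment bound. The one step requiring care is the comparison of decay rates at the end — verifying that the sub-polynomial tail $n^{-p}$ with $p>1$ decays strictly faster than $r_d(n)$ so that the bad-event contribution can be swallowed — but this is immediate from the explicit forms of $r_2(n)$ and $r_d(n)$ once the almost-sure bound $Z_n\leq D$ is in place.
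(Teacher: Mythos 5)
Your proof is correct and takes essentially the same route as the paper: both arguments upgrade the concentration bound of Theorem \ref{T:slepcev} to a first-moment bound by exploiting the almost-sure bound $\dwassinf(\alpha,\alpha_n)\leq \mathrm{diam}(\overline{\Omega_\alpha})$ together with the stability Theorem \ref{T:stab-trunc}, and then absorb the $n^{-p}$ tail into $r_d(n)$ for large $n$. The only cosmetic difference is that the paper performs the good-event/bad-event split at the level of $\mathbb{E}\left[\dwassinf(\alpha,\alpha_n)\right]$ via the tail-integral identity and applies Theorem \ref{T:stab-trunc} afterwards, whereas you apply the stability bound pathwise and split the expectation of the CTF deviation $Z_n$ directly.
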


\begin{proof}
We apply to $\dwassinf (\alpha, \alpha_n)$ the identity
$\mathbb{E}(Z) = \int_{0}^\infty\mathbb{P}(Z>t)\,dt$
that is valid for any non-negative random variable $Z$ with finite
first moment. Since
$W_\infty(\alpha,\alpha_n) \leq D =\mathrm{diam}(\Omega_\alpha)$,
we get
\begin{equation} \label{E:ewass1}
\expect{\dwassinf (\alpha, \alpha_n)} = \int_0^D
\mathbb{P} \left(\dwassinf (\alpha, \alpha_n) > t \right) \, dt \,.
\end{equation}
Theorem \ref{T:slepcev} implies that
\begin{equation} \label{E:ewass2}
\mathbb{P} \left( W_\infty(\alpha,\alpha_n) >
(C_1+ \sqrt{p} \, C_2) \,r_d(n) \right) \leq n^{-p} \,.
\end{equation}
Let $t_0 = \min \{D, (C_1+ \sqrt{p} \, C_2)\,r_d(n) \}$.
From \eqref{E:ewass1} and \eqref{E:ewass2},
\begin{equation} \label{E:ewass3}
\begin{split}
\hspace{-0.1in}
\expect{\dwassinf (\alpha, \alpha_n)} &= \int_0^{t_0}
\mathbb{P} \left(\dwassinf (\alpha, \alpha_n) > t \right) dt
+ \int_{t_0}^D \mathbb{P}
\left(\dwassinf (\alpha, \alpha_n) > t \right) dt \\
&\leq t_0 + D \,
\mathbb{P} \left(\dwassinf (\alpha, \alpha_n) >
(C_1+ \sqrt{p} \, C_2)\,r_d(n) \right) \\
&\leq (C_1+ \sqrt{p} \, C_2) \, r_d (n) + D n^{-p} \,.
\end{split}
\end{equation}
Fixing $p$, say $p=2$, for $n$ sufficiently large, the dominant term
on this last expression is the one involving $r_d (n)$. Thus,
the claim follows from \eqref{E:ewass3} and
Theorem \ref{T:stab-trunc} applied to $\alpha$ and
$\beta = \alpha_n$.
\end{proof}
\begin{remark}
We carry out an experiment to test the convergence
rates obtained in Corollary \ref{C:trunc-stab}. We consider
the probability measure $\alpha$ supported on the unit circle
$\sphere^1 \subset \real^2$ induced by the normalized arc length
element $(2 \pi)^{-1} ds$. In this case, for the truncation kernel,
$\Sigma_\alpha$ was calculated explicitly in Example \ref{E:circle}.
We consider sets of i.i.d.\ samples of size $n$,
$10 \leq n \leq 10^6$. For each $n$, thirty sets of samples
are taken. For each such set, we compute
$\Sigma_{\alpha_n}$ and estimate the ``error'' as
$\max \|\Sigma_{\alpha_n} (x,\sigma) -
\Sigma_\alpha (x, \sigma)\|$, for $\sigma = 0.6$, where the
maximum is taken over gridpoints on a $24 \times 24$ grid
on the square $[-1.5,1.5] \times [-1.5,1.5]$. We let
$\varepsilon_n$ be the average error over all thirty sets of samples.
Figure \ref{F:error} shows a plot (in blue) of $\varepsilon_n$ in
log-log scale. To compare $\varepsilon_n$ with the predicted rates,
we use a least-squares fit, in log-log scale, of the form 
$\varepsilon = C r_2 (n) = C \frac{\ln(n)^{3/4}}{n^{1/2}}$, also
shown in Figure \ref{F:error} (in red). The discrepancy between
the predicted and observed rates suggests that 
Corollary \ref{C:trunc-stab} might not be optimal. A curve of the
form $\varepsilon = C n^{-1/2}$, shown in green, produces a
tighter fit to the data, suggesting that the optimal bound
might be  $O(n^{-1/2})$.
\begin{figure}[ht]
\begin{center}
\includegraphics[width=0.5\linewidth]{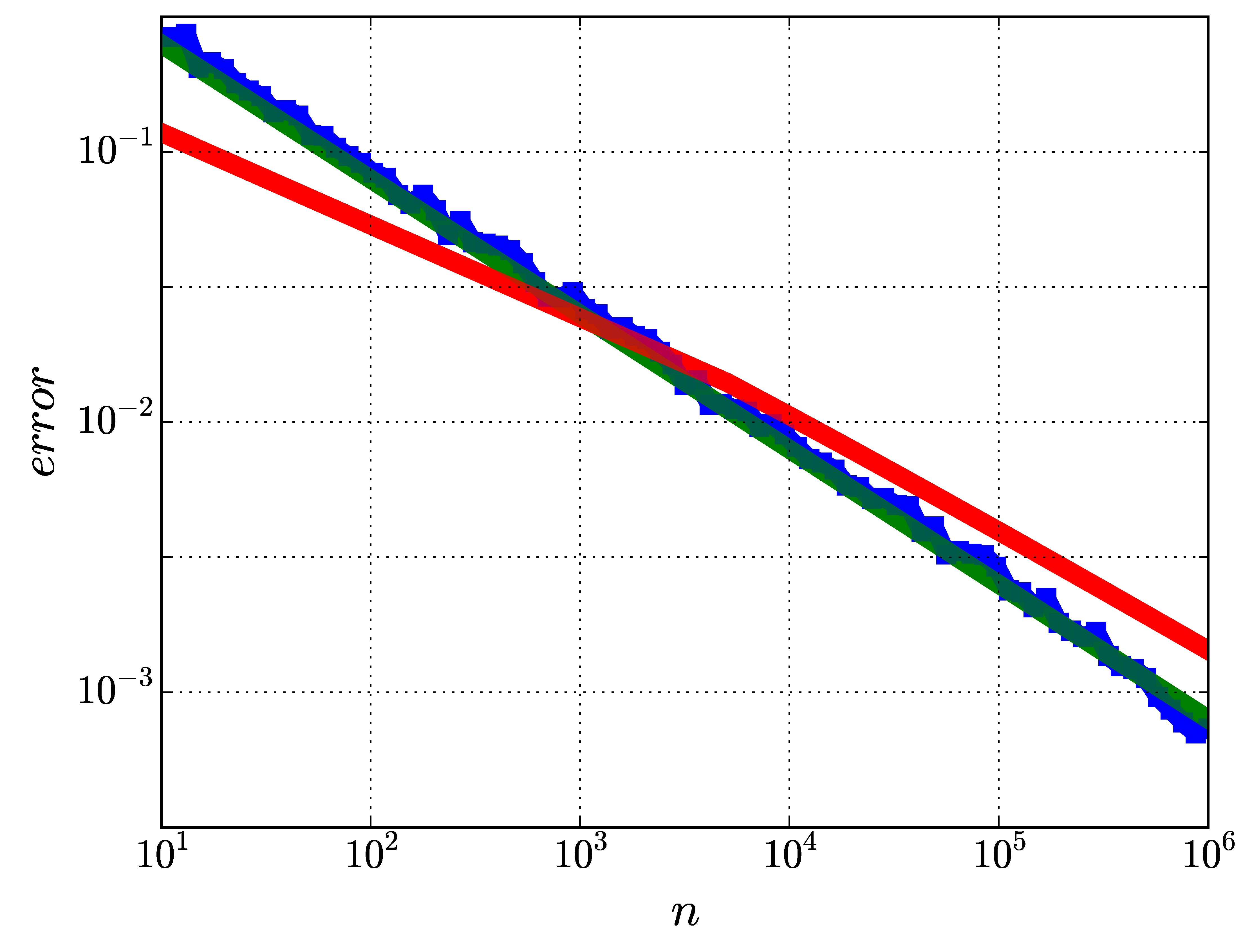} 
\end{center}
\caption{Log-log plots of experimental error rates (in blue) for
empirical covariance fields, rates predicted by Corollary
\ref{C:trunc-stab} (in red), and a least-squares fit of order
$n^{-1/2}$ (in green).}
\label{F:error}
\end{figure}
\end{remark}

\subsection{General Kernels}

We conclude the discussion of convergence of empirical
CTFs with a pointwise central limit theorem (CLT) that holds 
for kernels in the full generality of Definition \ref{D:kernel}.
One may think of it as a CLT for each entry of the matrix
$\Sigma_\alpha (x, \sigma)$.
If $e_1, \ldots, e_d$ is an orthonormal basis of $\real^d$, the
$(i,j)$-entry of the covariance matrix in this coordinate system
is given by $\Sigma_\alpha (x, \sigma) (e_i, e_j)$, the bilinear
form $\Sigma_\alpha (x, \sigma)$ evaluated at $(e_i, e_j)$.
In matrix notation, this is the same as
$\inner{e_i}{\Sigma_\alpha (x, \sigma) e_j}$.
More generally, for fixed $u, v, x \in \real^d$ and $\sigma > 0$,
we consider
\begin{equation}
\Sigma_\alpha (x, \sigma) (u,v) =
\int (y-x) \otimes (y-x) (u,v) K(x,y, \sigma) \, \alpha (d y).
\end{equation}
Consider the random variable
\begin{equation} \label{E:randomz}
z_{uv} (y) = (y-x) \otimes (y-x) \, (u,v) K(x,y,\sigma),
\end{equation}
where $y$ has distribution $\alpha$. Clearly, 
\begin{equation} \label{E:zmean}
\expect{z_{uv}} = \Sigma_\alpha (x, \sigma) (u,v) \,.
\end{equation}
\begin{theorem}[Central Limit] \label{T:clt}

If $f$ is as in Definition \ref{D:kernel}, then $z_{uv}$
has finite variance $\sigma^2_{uv}$. Moreover, if
$z_i$, $i \in \mathbb{N}$, are i.i.d. random variables with
the same distribution as $z_{uv}$, then
\[
\sqrt{n} \left( \frac{1}{n} \sum_{i=1}^n z_i -
\Sigma_\alpha (x, \sigma) (u,v) \right) \xrightarrow{d}
\mathcal{N} (0, \sigma_{uv}^2) \,,
\]
as $n \to \infty$, where convergence is in distribution and
$\mathcal{N} (0,\sigma_{uv}^2)$ is normally distributed with
mean zero and variance $\sigma_{uv}^2$.
\end{theorem}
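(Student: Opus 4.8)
The plan is to separate the two assertions. The convergence statement is, granting finite variance, exactly the classical Lindeberg--L\'{e}vy central limit theorem applied to the i.i.d.\ sequence $z_i$, whose common mean is $\Sigma_\alpha(x,\sigma)(u,v)$ by \eqref{E:zmean}. So the only real work is the first claim, that $z_{uv}$ has finite variance; in fact I would establish the stronger statement that $z_{uv}$ is a bounded random variable, whence all of its moments---and in particular its variance---are automatically finite.

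To prove boundedness, first I would expand $z_{uv}$ using \eqref{E:randomz} and the bilinear form identity \eqref{E:bilinear}, writing $(y-x)\otimes(y-x)(u,v) = \inner{y-x}{u}\,\inner{y-x}{v}$. The Cauchy--Schwarz inequality then gives $\left| \inner{y-x}{u}\,\inner{y-x}{v} \right| \leq \|y-x\|^2 \, \|u\|\,\|v\|$. Next I would substitute the kernel formula \eqref{E:kernel}, so that
\[
\left| z_{uv}(y) \right| \leq \frac{\|u\|\,\|v\|}{C_d(\sigma)} \, \|y-x\|^2 \, f\!\left( \frac{\|y-x\|^2}{\sigma^2} \right).
\]
Setting $r = \|y-x\|^2/\sigma^2$, the right-hand factor becomes $\sigma^2 \, r f(r)$, and condition (c) of Definition \ref{D:kernel} bounds $r f(r) \leq C$ uniformly in $r$. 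This yields the uniform bound $\left| z_{uv}(y) \right| \leq \sigma^2 C \,\|u\|\,\|v\| / C_d(\sigma)$, valid for every $y \in \real^d$.

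With $z_{uv}$ bounded, the finiteness of $\expect{z_{uv}^2}$ is immediate upon integrating against the probability measure $\alpha$, so $\sigma_{uv}^2 < \infty$. The remaining convergence in distribution is then a direct invocation of the classical central limit theorem. I do not anticipate a serious obstacle here: the entire substance lies in the single boundedness estimate, and even that reduces to one application of condition (c) after Cauchy--Schwarz. The only point requiring mild care is the change-of-variable bookkeeping needed to recognize $\|y-x\|^2 \, f(\|y-x\|^2/\sigma^2)$ as $\sigma^2 \, r f(r)$, so that condition (c) can be applied directly rather than through the weaker boundedness of $f$ alone.
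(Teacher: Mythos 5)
Your proposal is correct and follows essentially the same route as the paper: both arguments reduce to the pointwise estimate $|(y-x)\otimes(y-x)(u,v)|\,K(x,y,\sigma) \leq \|u\|\,\|v\|\,\sigma^2\, r f(r)/C_d(\sigma)$ with $r = \|y-x\|^2/\sigma^2$, bounded via condition (c) of Definition \ref{D:kernel}, followed by the classical CLT. The only cosmetic difference is that you first record the uniform bound on $z_{uv}$ and then deduce finite variance, whereas the paper applies the squared version of the same bound directly inside $\int z_{uv}^2\,\alpha(dy)$.
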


\begin{proof}

We show that $z_{uv}$ has finite second moment. From
\eqref{E:randomz} and \eqref{E:norm},
\begin{equation} \label{E:zvariance}
\begin{split}
\int z^2_{uv} \, \alpha (dy) &\leqslant \|u\| \|v\|
\int \|y-x\|^4 K^2 (x,y, \sigma) \, \alpha (dy)  \\
&\leq \frac{\sigma^4  \|u\| \|v\|}{C^2_d (\sigma)} 
 \int \frac{\|y-x\|^4}{\sigma^4}
f^2 \left(\frac{\|y-x\|^2}{\sigma^2} \right) \, \alpha (dy) \\
&\leq \frac{\sigma^4  \|u\| \|v\|}{C^2_d (\sigma)} \, C^2\,.
\end{split}
\end{equation}
The last inequality follows from condition (c) in Definition
\ref{D:kernel} that ensures that $r^2 f^2(r) < C^2$, for any
$r>0$. The theorem now follows from a direct application
of the classical CLT. 
\end{proof}

\begin{remark}
Note that \eqref{E:zvariance} implies that if
$\|u\| = \|v\| =1$, then
\begin{equation}
\sigma^2_{uv} = \int z^2_{uv} \, \alpha (dy)
- \left(\expect{z_{uv}} \right)^2
\leq \frac{C^2 \sigma^4}{C^2_d (\sigma)} \,,
\end{equation}
giving a uniform bound on the variance of $z_{uv}$ over
$x \in \real^d$ and $u,v \in \sphere^{d-1}$.
\end{remark}


\section{Multiscale Fr\'{e}chet Functions} \label{S:frechet}

The mean of a random vector $y \in \real^d$ is a simple
and yet oftentimes informative, ``one-element'' summary of
the distribution of $y$. If $y$ has finite second moment and
is distributed according to the probability measure $\alpha$,
then the mean may be characterized more
geometrically as the unique minimizer of the Fr\'{e}chet
function
\begin{equation}
F_\alpha (x) = \expect{\|y-x\|^2} = \int \|y-x\|^2 \, \alpha (dy) \,,
\end{equation}
which measures the spread of $y$ about
$x \in \real^d$.
The mean, however, is not as effective
for complex distributions of practical interest such as multimodal
distributions or those supported in
nonlinear subspaces. In this section, we introduce a multiscale
analogue of the Fr\'{e}chet function that is rich in information
about the shape of the distribution of $y$. At each fixed scale,
the local minima of the function may be viewed as localized
analogues of the
mean, as illustrated in examples below. However, instead of
just focusing on the local extrema, we take the view that it is
more informative to investigate the behavior of the
full multiscale Fr\'{e}chet function, as this lets us
uncover more information about the distribution of $y$.

\begin{definition} \label{D:frechet}
Let $f \colon [0, \infty) \to \real$ be as in Definition
\ref{D:kernel} with associated kernel
$K \colon \real^d \times \real^d \times (0, \infty) \to \real$.
The {\em multiscale Fr\'{e}chet function}
$V_\alpha \colon \real^d \times (0, \infty) \to \real$ is
defined as
\[
V_\alpha (x, \sigma) := \int \|y-x\|^2 K(x,y,\sigma) \, \alpha(d y)\,.
\] 
\end{definition}
\begin{proposition} \label{P:trace}
For each $\sigma > 0$, the multiscale Fr\'{e}chet
function satisfies
\[
V_\alpha (x, \sigma) = \mathrm{tr} \, \Sigma_\alpha (x, \sigma) \,.
\]
\end{proposition}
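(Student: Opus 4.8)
The plan is to reduce the identity to a single pointwise algebraic fact about pure tensors, and then to push the trace through the integral. The key observation is that for any vector $v \in \real^d$, the pure tensor $v \otimes v$ corresponds, under the matrix identification set up in the Preliminaries, to the rank-one matrix $v v^T$, whose trace is $\sum_{i} v_i^2 = \|v\|^2$. Thus $\mathrm{tr}(v \otimes v) = \|v\|^2$, and in particular, taking $v = y - x$, we have $\mathrm{tr}\big((y-x)\otimes(y-x)\big) = \|y-x\|^2$, which is precisely the integrand appearing in the definition of $V_\alpha(x,\sigma)$.

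Given this, I would finish by interchanging the trace with the tensor-valued integral that defines $\Sigma_\alpha(x,\sigma)$. Concretely, the step is
\[
\mathrm{tr}\,\Sigma_\alpha(x,\sigma)
= \mathrm{tr}\left( \int_{\real^d} (y-x)\otimes(y-x)\,K(x,y,\sigma)\,\alpha(dy)\right)
= \int_{\real^d} \mathrm{tr}\big((y-x)\otimes(y-x)\big)\,K(x,y,\sigma)\,\alpha(dy),
\]
after which the pointwise identity above turns the right-hand side into $\int_{\real^d} \|y-x\|^2\,K(x,y,\sigma)\,\alpha(dy) = V_\alpha(x,\sigma)$. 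The interchange is legitimate because the trace is a fixed linear functional on the finite-dimensional space $\real^d \otimes \real^d$, hence continuous, and the defining integral for $\Sigma_\alpha(x,\sigma)$ converges by the standing assumption in Definition \ref{D:ctf}; a linear functional commutes with a convergent vector-valued integral.

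I expect no real obstacle here, since the statement is essentially a bookkeeping identity between the trace and the squared norm. The only point that warrants a line of justification is the exchange of trace and integration, and even that can be sidestepped entirely by arguing coordinatewise: in an orthonormal basis the $(i,i)$-entry of $\Sigma_\alpha(x,\sigma)$ is $\int (y-x)_i^2\,K(x,y,\sigma)\,\alpha(dy)$, and summing over $i$ and using $\sum_i (y-x)_i^2 = \|y-x\|^2$ (with the monotone/absolute convergence needed to sum finitely many convergent integrals) recovers $V_\alpha(x,\sigma)$ directly. Either route is routine, so the proof should be short.
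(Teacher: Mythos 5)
Your proposal is correct and is essentially the paper's own argument: the paper expands $\|y-x\|^2 = \sum_i (y-x)\otimes(y-x)\,(e_i,e_i)$ in an orthonormal basis and interchanges the finite sum with the integral, which is exactly your coordinatewise ``sidestep.'' Your primary route---viewing $\mathrm{tr}$ as a continuous linear functional that commutes with the tensor-valued integral, together with $\mathrm{tr}\big(v \otimes v\big) = \|v\|^2$---is the same computation packaged slightly more abstractly, so there is no substantive difference.
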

\begin{proof}
Let $\{e_1, \ldots, e_d\} \subset \real^d$ be an orthonormal
basis. Then,
\begin{equation}
\|y-x\|^2 = \sum_{i=1}^d \inner{y-x}{e_i}^2 = 
\sum_{i=1}^d (y-x) \otimes (y -x) (e_i, e_i) \,.
\end{equation}
Hence,
\begin{equation}
\begin{split}
V_\alpha (x, \sigma) &= 
\sum_{i=1}^d \int  (y-x) \otimes (y -x) (e_i, e_i)
K(x,y,\sigma) \, \alpha (dy) \\
&= \sum_{i=1}^d \left( \int  (y-x) \otimes (y -x)
K(x,y,\sigma) \, \alpha (dy) \right) (e_i, e_i) \\
&= \sum_{i=1}^d \Sigma_\alpha (x, \sigma) (e_i, e_i) 
= \text{tr} \, \Sigma_\alpha (x, \sigma) \,,
\end{split}
\end{equation}
as claimed.
\end{proof}
\begin{corollary}[Stability] \label{C:stability}
Let $f \colon [0, \infty) \to \real$ be as in Definition \ref{D:kernel}
with multiscale kernel $K$. Suppose that $f$ is differentiable
and there exists a constant $A>0$ such that
$r^{3/2} \, |f'(r)| \leq A$, $\forall r \geq 0$. Then,
there is a constant $A_f >0$, that depends only on $f$, such
that
\[
\sup_{x\in\real^d} \left| V_\alpha (x, \sigma) -
V_\beta (x, \sigma) \right| \leq 
\frac{\sigma d A_f}{C_d(\sigma)} \, \dwassone (\alpha,\beta),,
\]
for any $\alpha,\beta\in\borel_1 (\real^d)$ and any $\sigma>0$.
\end{corollary}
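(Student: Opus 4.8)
The plan is to reduce the statement to the stability of the full covariance tensor field (Theorem \ref{T:stab}) by exploiting the trace identity of Proposition \ref{P:trace}. First I would observe that the hypotheses of the corollary — differentiability of $f$ together with $r^{3/2}\,|f'(r)| \leq A$ — are precisely the hypotheses of Theorem \ref{T:stab} (with $A$ in the role of $A_1$), so that theorem applies verbatim to $\alpha,\beta \in \borel_1(\real^d)$. Since the trace is a linear functional on $\real^d \otimes \real^d$, Proposition \ref{P:trace} gives, for each fixed $x \in \real^d$ and $\sigma > 0$,
\[
V_\alpha(x,\sigma) - V_\beta(x,\sigma) = \mathrm{tr}\,\big(\Sigma_\alpha(x,\sigma) - \Sigma_\beta(x,\sigma)\big).
\]

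Next I would control the trace by the tensor norm associated with \eqref{E:inner}, which in matrix representation is the Frobenius norm. Writing $M = \Sigma_\alpha(x,\sigma) - \Sigma_\beta(x,\sigma)$ and fixing any orthonormal basis, every entry of $M$ is dominated by $\|M\|$; in particular each of the $d$ diagonal entries satisfies $|M_{ii}| \leq \|M\|$, so summing yields $|\mathrm{tr}\,M| \leq d\,\|M\|$. (Pairing $M$ against the identity and using Cauchy--Schwarz would give the sharper constant $\sqrt{d}$, but the cruder bound is what matches the stated factor of $d$.) Combining this with the previous display gives, for every $x$,
\[
\left| V_\alpha(x,\sigma) - V_\beta(x,\sigma)\right| \leq d\,\|\Sigma_\alpha(x,\sigma) - \Sigma_\beta(x,\sigma)\|.
\]

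Finally I would invoke Theorem \ref{T:stab} to bound $\sup_{x} \|\Sigma_\alpha(x,\sigma) - \Sigma_\beta(x,\sigma)\|$ by $\tfrac{\sigma A_f}{C_d(\sigma)}\,\dwassone(\alpha,\beta)$, and then take the supremum over $x \in \real^d$ in the inequality above, obtaining
\[
\sup_{x\in\real^d} \left| V_\alpha(x,\sigma) - V_\beta(x,\sigma)\right| \leq d\,\sup_{x\in\real^d}\|\Sigma_\alpha(x,\sigma) - \Sigma_\beta(x,\sigma)\| \leq \frac{\sigma d A_f}{C_d(\sigma)}\,\dwassone(\alpha,\beta),
\]
which is the claimed estimate, with the same $A_f$ furnished by Theorem \ref{T:stab}. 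I do not expect a genuine obstacle here: the corollary is essentially a corollary in the literal sense, and the only point requiring care is the passage from the tensor norm to the trace, where one must select the trace--norm inequality deliberately so that the resulting dimensional factor is $d$ rather than $\sqrt{d}$, in agreement with the constant stated in Corollary \ref{C:stability}.
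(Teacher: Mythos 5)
Your proposal is correct and follows essentially the same route as the paper's own proof: the paper likewise combines Proposition \ref{P:trace}, Theorem \ref{T:stab}, and the bound $|\mathrm{tr}\,X| \leq d\,\|X\|$ for the Frobenius norm. Your remark about deliberately choosing the cruder trace bound (rather than the Cauchy--Schwarz constant $\sqrt{d}$) to match the stated factor of $d$ is exactly the choice the paper makes.
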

\begin{proof}
The result follows from Proposition \ref{P:trace},
Theorem \ref{T:stab} and the fact that for any $d \times d$
matrix $X$, $| \text{tr} \, X| \leq d \, \|X\|$, where
$\| X \|$ is the Frobenius norm of $X$.
\end{proof}

Similarly, Corollary \ref{C:consistency} and
Proposition \ref{P:trace} yield the following consistency
result for multiscale Fr\'{e}chet functions.
 
\begin{corollary}[Consistency]  
Suppose that $\alpha \in \borel_1 (\real^d)$. Let $y_i \in \real^d$,
$i \in \mathbb{N}$, be i.i.d. random variables with distribution
$\alpha$ and $K$ a multiscale kernel as in Theorem \ref{T:stab}.
Then, for each fixed $\sigma > 0$,
\begin{equation*}
\sup_{x \in \real^d}
\left| V_\alpha(x,\sigma) - V_{\alpha_n}(x,\sigma) \right|
\xrightarrow{n\uparrow\infty} 0
\end{equation*} almost surely.
\end{corollary}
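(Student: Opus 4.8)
The plan is to reduce this statement to the covariance-field consistency result already established in Corollary \ref{C:consistency}, using the identity from Proposition \ref{P:trace} that expresses the multiscale Fr\'{e}chet function as the trace of the multiscale covariance tensor field. Since both ingredients are in hand, the argument is essentially a transfer of the almost-sure convergence from $\Sigma_{\alpha_n}$ to $V_{\alpha_n}$ via an elementary deterministic inequality relating the trace to the Frobenius norm.

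First I would fix $\sigma > 0$ and apply Proposition \ref{P:trace} to both $\alpha$ and the empirical measure $\alpha_n$, writing for every $x \in \real^d$
\[
V_\alpha(x,\sigma) - V_{\alpha_n}(x,\sigma)
= \mathrm{tr}\,\Sigma_\alpha(x,\sigma) - \mathrm{tr}\,\Sigma_{\alpha_n}(x,\sigma)
= \mathrm{tr}\left(\Sigma_\alpha(x,\sigma) - \Sigma_{\alpha_n}(x,\sigma)\right),
\]
where the last equality uses linearity of the trace. Next I would invoke the same elementary bound used in the proof of Corollary \ref{C:stability}, namely $|\mathrm{tr}\,X| \leq d\,\|X\|$ for any $d \times d$ matrix $X$, with $\|\cdot\|$ the Frobenius norm. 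Applying this with $X = \Sigma_\alpha(x,\sigma) - \Sigma_{\alpha_n}(x,\sigma)$ and then taking the supremum over $x \in \real^d$ yields the pointwise deterministic estimate
\[
\sup_{x \in \real^d} \left| V_\alpha(x,\sigma) - V_{\alpha_n}(x,\sigma) \right|
\leq d \, \sup_{x \in \real^d} \left\| \Sigma_\alpha(x,\sigma) - \Sigma_{\alpha_n}(x,\sigma) \right\|.
\]
The final step is to observe that $\alpha \in \borel_1(\real^d)$ and $K$ is a smooth kernel as in Theorem \ref{T:stab}, so Corollary \ref{C:consistency} guarantees that the right-hand side tends to $0$ almost surely as $n \to \infty$; consequently the left-hand side does as well.

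I do not expect a genuine obstacle here, since the result is an immediate consequence of two earlier theorems. The only point requiring a little care is ensuring that the almost-sure convergence transfers to the same probability-one event: because the inequality above holds deterministically for every realization of the samples $y_i$, the event on which $\sup_x \|\Sigma_\alpha - \Sigma_{\alpha_n}\| \to 0$ is contained in the event on which $\sup_x |V_\alpha - V_{\alpha_n}| \to 0$, so no new measure-theoretic argument is needed. I would simply cite Corollary \ref{C:consistency} and Proposition \ref{P:trace} and conclude.
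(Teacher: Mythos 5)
Your proposal is correct and follows exactly the route the paper intends: the paper states that this corollary follows from Corollary \ref{C:consistency} together with Proposition \ref{P:trace}, which is precisely your reduction via the bound $|\mathrm{tr}\,X| \leq d\,\|X\|$ and the deterministic transfer of almost-sure convergence. Nothing is missing; you have merely spelled out the steps the paper leaves implicit.
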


The following result about convergence of multiscale
Fr\'{e}chet functions are immediate consequences of
Corollary \ref{C:rates} and Corollary \ref{C:stability}.

\begin{corollary} 
Let $f$ be as in Theorem \ref{T:stab} and $\sigma > 0$.
Suppose that $\alpha \in \borel_3 (\real^d)$ and
$y_i$, $i \in \mathbb{N}$, are i.i.d. random
variables with distribution $\alpha$.  Then, there is
a constant $\beta> 0$, that depends only on $d$,
such that
\begin{equation*}
\begin{split}
\expect{\, \sup_{x \in \real^d}\left| V_\alpha(x,\sigma)
- V_{\alpha_n}(x,\sigma) \right| }&\leq \\
\frac{\sigma d A_f \beta}{C_d (\sigma)} m_3 (\alpha) &\cdot 
\begin{cases}
n^{-\frac{2}{3}} + n^{-\frac{1}{2}} & \text{if $d=1$;}\\
n^{-{\frac{2}{3}}} + n^{-\frac{1}{2}}\log(1+n) & \text{if $d=2$;}\\
n^{-\frac{2}{3}} +n^{-\frac{1}{d}} & \text{if $d\geq 3$.}
\end{cases}
\end{split}
\end{equation*}
\end{corollary}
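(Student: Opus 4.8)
The plan is to reduce the statement to the tensor-field convergence rate of Corollary \ref{C:rates} by passing through the identity relating the Fr\'{e}chet function to the trace of the CTF. First I would invoke Proposition \ref{P:trace}, which asserts $V_\alpha(x,\sigma) = \mathrm{tr}\,\Sigma_\alpha(x,\sigma)$ and, applied to the empirical measure, $V_{\alpha_n}(x,\sigma) = \mathrm{tr}\,\Sigma_{\alpha_n}(x,\sigma)$, for every $x\in\real^d$ and every $\sigma>0$. By linearity of the trace, the pointwise difference becomes
\[
V_\alpha(x,\sigma) - V_{\alpha_n}(x,\sigma) = \mathrm{tr}\bigl(\Sigma_\alpha(x,\sigma) - \Sigma_{\alpha_n}(x,\sigma)\bigr).
\]

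The key step is the elementary bound $|\mathrm{tr}\,X| \leq d\,\|X\|$ comparing the trace of a $d\times d$ matrix with its Frobenius norm---precisely the inequality already used in the proof of Corollary \ref{C:stability}. Applying it with $X = \Sigma_\alpha(x,\sigma) - \Sigma_{\alpha_n}(x,\sigma)$ gives, for each fixed $x$,
\[
\bigl| V_\alpha(x,\sigma) - V_{\alpha_n}(x,\sigma) \bigr| \leq d\,\bigl\| \Sigma_\alpha(x,\sigma) - \Sigma_{\alpha_n}(x,\sigma) \bigr\|.
\]
Because the right-hand side dominates the left uniformly in $x$, I would take the supremum over $x\in\real^d$ on both sides and then the expectation, obtaining
\[
\expect{\sup_{x\in\real^d} \bigl| V_\alpha(x,\sigma) - V_{\alpha_n}(x,\sigma) \bigr|} \leq d\,\expect{\sup_{x\in\real^d} \bigl\| \Sigma_\alpha(x,\sigma) - \Sigma_{\alpha_n}(x,\sigma) \bigr\|}.
\]

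Finally I would apply Corollary \ref{C:rates} verbatim to the right-hand side. Since $\alpha\in\borel_3(\real^d)\subseteq\borel_1(\real^d)$, that corollary bounds the tensor-field expectation by $\frac{\sigma A_f b}{C_d(\sigma)}\,m_3(\alpha)$ times the stated dimension-dependent rate, where $b$ depends only on $d$. Carrying the extra factor $d$ through and setting $\beta = b$ produces exactly the claimed inequality. There is essentially no obstacle here---the result is a genuine corollary---so the only point deserving a word of care is the legitimacy of the interchange: the supremum over $x$ is an integrable random variable because Theorem \ref{T:stab} controls $\sup_{x}\|\Sigma_\alpha(x,\sigma)-\Sigma_{\alpha_n}(x,\sigma)\|$ by a constant multiple of $\dwassone(\alpha,\alpha_n)$, which has finite expectation.
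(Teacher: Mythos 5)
Your argument is correct and is essentially the paper's own route: the paper states this result as an immediate consequence of Corollary \ref{C:rates} and Corollary \ref{C:stability}, and your chain---Proposition \ref{P:trace}, the bound $|\mathrm{tr}\,X| \leq d\,\|X\|$, then Corollary \ref{C:rates}---is exactly that reduction spelled out, yielding the same constant $\beta = b$.
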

\begin{remark}
Analogous stability and consistency results for the truncation kernel
follow from Theorem \ref{T:stab-trunc}, Corollary \ref{C:ctf-infty}
and Corollary \ref{C:trunc-stab}.
\end{remark}
For more general kernels, the following pointwise
central limit theorem holds. For fixed $x \in \real^d$
and $\sigma > 0$, let
\begin{equation}
t (y) = \|y-x\|^2 K(x,y,\sigma),
\end{equation}
whose expected value is
$\expect{t} = V_\alpha (x, \sigma)$. As in
Theorem \ref{T:clt}, the variance of $t$ is finite and
denoted $\sigma^2_t$.
\begin{theorem}[Central Limit] \label{T:frechetclt}

Let $f$ be as in Definition \ref{D:kernel}. If $t_i \in \real$,
$i \in \mathbb{N}$, are i.i.d. random variables with the same
distribution as $t$, then
\[
\sqrt{n} \left( \frac{1}{n} \sum_{i=1}^n t_i -
V_\alpha (x, \sigma) \right) \xrightarrow{d}
\mathcal{N} (0, \sigma_t^2) \,,
\]
as $n \to \infty$, where convergence is in distribution and
$\mathcal{N} (0,\sigma_V^2)$ is normally distributed with
mean zero and variance $\sigma_t^2$.
\end{theorem}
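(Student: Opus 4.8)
The plan is to mirror exactly the two-step argument used in the proof of Theorem \ref{T:clt}: first verify that the scalar random variable $t$ has finite variance $\sigma_t^2$, and then invoke the classical (Lindeberg--L\'evy) central limit theorem for i.i.d.\ summands with finite second moment. Since $\expect{t} = V_\alpha (x, \sigma)$ holds by construction, once finiteness of $\sigma_t^2$ is secured the stated convergence in distribution is immediate.

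First I would bound the second moment of $t$. Writing $K$ in terms of $f$ as in \eqref{E:kernel}, I would compute
\begin{equation*}
\int t^2(y)\,\alpha(dy) = \int \|y-x\|^4 K^2(x,y,\sigma)\,\alpha(dy)
= \frac{\sigma^4}{C_d^2(\sigma)} \int \frac{\|y-x\|^4}{\sigma^4}\, f^2\!\left(\frac{\|y-x\|^2}{\sigma^2}\right)\alpha(dy).
\end{equation*}
Setting $r = \|y-x\|^2/\sigma^2$, the integrand equals $\bigl(r f(r)\bigr)^2$, and condition (c) of Definition \ref{D:kernel} gives $r f(r) \leq C$, so the integrand is bounded by $C^2$ uniformly in $y$. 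Because $\alpha$ is a probability measure, this yields $\int t^2\,\alpha(dy) \leq \sigma^4 C^2 / C_d^2(\sigma) < \infty$, precisely paralleling the estimate \eqref{E:zvariance}. Hence $t$ has finite second moment and therefore finite variance $\sigma_t^2$.

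With finite variance established, the $t_i$ are i.i.d.\ copies of $t$ sharing mean $V_\alpha(x,\sigma)$ and variance $\sigma_t^2$, so the classical CLT applies directly and delivers the claim. Alternatively, one may note that $t = \sum_{i=1}^d z_{e_i e_i}$ for any orthonormal basis $\{e_1, \ldots, e_d\}$ of $\real^d$, using \eqref{E:randomz} and $\sum_i \inner{y-x}{e_i}^2 = \|y-x\|^2$, so the result also follows from Theorem \ref{T:clt} combined with Proposition \ref{P:trace}; however, the direct second-moment bound above is cleaner, since the alternative would require a multivariate CLT and a Cram\'er--Wold reduction to handle the $d$ dependent summands.

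I do not expect a genuine obstacle here: the only quantitative input is the uniform bound $r f(r) \leq C$ furnished by condition (c), and this is exactly the substantive step, since it controls the tail of $t$ \emph{regardless} of how heavy the tails of $\alpha$ are. The windowing decay built into the kernel is what guarantees the finite second moment the CLT needs, so the argument goes through for kernels in the full generality of Definition \ref{D:kernel}, with no moment hypothesis on $\alpha$ beyond its being a probability measure.
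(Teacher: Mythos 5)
Your proposal is correct and follows essentially the same route as the paper: the paper's (implicit) proof of Theorem \ref{T:frechetclt} is precisely the argument of Theorem \ref{T:clt} applied to $t$, namely the second-moment bound $\int t^2\,\alpha(dy) \leq \sigma^4 C^2 / C_d^2(\sigma)$ obtained from condition (c) of Definition \ref{D:kernel}, followed by the classical CLT. Your main computation reproduces the estimate \eqref{E:zvariance} exactly (with $\|u\|\|v\|$ replaced by $1$), so there is nothing to add beyond noting that your side remark is slightly overcautious --- the alternative route via $t=\sum_i z_{e_ie_i}$ needs no Cram\'er--Wold device, since finite variance of $t$ already follows from finite second moments of the (dependent) summands and the univariate CLT then applies to $t$ directly.
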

Multiscale Fr\'{e}chet functions
not only give stable representations of probability measures,
but any probability measure $\alpha$ may be fully recovered
from its multiscale Fr\'{e}chet function associated with the
Gaussian kernel, as the following result shows.
\begin{proposition} \label{P:fourier}
Let $\sigma > 0$ be fixed.
Any probability measure $\alpha$ is completely determined by the
Fr\'{e}chet function $V_\alpha (\cdot, \sigma)$ associated
with the Gaussian kernel at scale $\sigma$.
\end{proposition}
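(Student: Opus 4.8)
The plan is to recognize the Fr\'{e}chet function as a convolution and then argue by injectivity of the Fourier transform. Writing $g(z) = \|z\|^2 (2\pi\sigma^2)^{-d/2}\exp(-\|z\|^2/(2\sigma^2))$, which is even, smooth, and rapidly decaying, Definition \ref{D:frechet} applied to the Gaussian kernel gives $V_\alpha(x,\sigma) = \int g(y-x)\,\alpha(dy) = (g * \alpha)(x)$, the last equality using that $g$ is radial. Since $g \in \mathbb{L}^1(\real^d)$ (its integral is the second moment $d\sigma^2$ of a Gaussian) and $\alpha$ is a probability measure, a Fubini estimate shows $V_\alpha(\cdot,\sigma) \in \mathbb{L}^1(\real^d)$, so it has a well-defined Fourier transform and the convolution theorem yields $\widehat{V_\alpha}(\xi) = \widehat{g}(\xi)\,\widehat{\alpha}(\xi)$, where $\widehat{\alpha}$ is the characteristic function of $\alpha$. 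Because a finite measure is uniquely determined by its characteristic function, it suffices to recover $\widehat{\alpha}$ from the known function $\widehat{V_\alpha}$.

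The next step is to compute $\widehat{g}$ explicitly. Writing $G_\sigma$ for the centered Gaussian density at scale $\sigma$, one has $\widehat{G_\sigma}(\xi) = \exp(-\sigma^2\|\xi\|^2/2)$; and since multiplication by $\|z\|^2$ on the spatial side corresponds to applying $-\Delta_\xi$ on the Fourier side, we obtain $\widehat{g}(\xi) = -\Delta_\xi \exp(-\sigma^2\|\xi\|^2/2) = \sigma^2\big(d - \sigma^2\|\xi\|^2\big)\exp(-\sigma^2\|\xi\|^2/2)$. This is a routine Laplacian computation on a radial Gaussian and I would not dwell on it.

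The main obstacle is that $\widehat{g}$ is \emph{not} everywhere nonzero: the factor $d - \sigma^2\|\xi\|^2$ forces it to vanish precisely on the sphere $\|\xi\| = \sqrt{d}/\sigma$, so one cannot simply divide by $\widehat{g}$ to isolate $\widehat{\alpha}$. The resolution exploits that this sphere has Lebesgue measure zero and, crucially, dense complement. On $\real^d \setminus \{\|\xi\| = \sqrt{d}/\sigma\}$ we recover $\widehat{\alpha}(\xi) = \widehat{V_\alpha}(\xi)/\widehat{g}(\xi)$ unambiguously; and since $\widehat{\alpha}$ is continuous (indeed uniformly continuous, being the characteristic function of a finite measure), its values on the dense complement determine it on the exceptional sphere as well. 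Hence $\widehat{\alpha}$, and therefore $\alpha$, is fully determined by $V_\alpha(\cdot,\sigma)$ at the single fixed scale $\sigma$, which is exactly the assertion.
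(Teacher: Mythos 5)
Your proof is correct and follows essentially the same route as the paper's: write $V_\alpha(\cdot,\sigma)$ as the convolution of $\alpha$ with $\|z\|^2$ times the Gaussian density, pass to Fourier transforms, observe that the transform of that kernel vanishes only on a sphere, divide off that sphere, and use continuity of the characteristic function plus its uniqueness theorem to recover $\alpha$. The only differences are cosmetic: a different Fourier normalization (so your exceptional sphere has radius $\sqrt{d}/\sigma$ rather than the paper's $\sqrt{\pi d}/\sigma$) and your cleaner derivation of $\widehat{g}$ via the correspondence between multiplication by $\|z\|^2$ and $-\Delta_\xi$, where the paper simply asserts the computation.
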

\begin{proof}

Let $h_\sigma \colon \real^d \to \real$ be given by
\begin{equation}
h_\sigma (x) = \frac{\|x\|^2}{(2 \pi \sigma^2)^{d/2}}
\exp{\left(- \frac{\|x\|^2}{2 \sigma^2}\right)} \,.
\end{equation}
Then, for the Gaussian kernel, we may express the
multiscale Fr\'{e}chet function as the convolution
$V_\alpha (x, \sigma) = (h_\sigma \ast \alpha) (x)$.
Under Fourier transform,  for each fixed $\sigma > 0$, we
obtain
\begin{equation}
\widehat{V}_\alpha (\xi, \sigma) =
\widehat{h}_\sigma (\xi) \, \phi_\alpha (-2 \pi \xi) \,,
\end{equation}
where $\phi_\alpha$ is the characteristic function of
$\alpha$ defined as $\phi_\alpha (\xi) =
\int e^{i \inner{x}{\xi}} \, \alpha (dx)$. Therefore,
\begin{equation} \label{E:phi}
\phi_\alpha (\xi) = \widehat{V}_\alpha
(-\xi / 2 \pi, \sigma) / \, \widehat{h}_\sigma (-\xi / 2 \pi)
\end{equation}
provided that $\widehat{h}_\sigma (-\xi / 2 \pi) \ne 0$.
A calculation shows that
\begin{equation}
\widehat{h}_\sigma (- \xi / 2 \pi) = 
\sigma^2 \left(d - \frac{\sigma^2 \|\xi\|^2}{\pi} \right)
\exp{\left(- \frac{\sigma^2 \|\xi\|^2}{2 \pi} \right)} \,,
\end{equation}
which only vanishes at points $\xi$ on the sphere of radius
$\rho_\sigma = \sqrt{\pi d}/ \sigma$ about the origin.
Thus, \eqref{E:phi} implies that we can recover
$\phi_\alpha (\xi)$ from  $\widehat{V}_\alpha (\cdot, \sigma)$,
if $\|\xi\| \ne \sqrt{\pi d}/ \sigma$. By continuity, we can recover
$\phi_\alpha (\xi)$, for any $\xi$. The claim now follows
from the fact that the characteristic function $\phi_\alpha$
determines $\alpha$ \cite{dudley}.
\end{proof}
The following examples illustrate how information about
the shape of data can be extracted from multiscale
Fr\'{e}chet functions.
\begin{example} \label{E:frechet1}
We consider $n=400$ data points distributed into two clusters
of 200 points, each sampled from a Gaussian of variance $0.36$
centered at different points. The data points are plotted in
blue in Fig.\,\ref{F:frechet1}(a), which also shows
the empirical Fr\'{e}chet function $V_n$ at scale $\sigma = 3$. The
local minima of $V_n$ captures what is perceived as the ``centers''
of the two clusters at that scale. However, more information about
the data distribution can be uncovered from $V_n$. For example,
the local minima may be viewed as attractors of the (negative)
gradient field $- \nabla V_n$, indicated by the arrows in the figure.
The stable manifold of each attractor, which comprises points that
move toward the attractors under the associated flow may be
viewed as clusters inferred from the data at that scale. These
clusters are delimited by the repellers of the system, which
correspond to the local maxima of $V_n$. Fig.\,\ref{F:frechet1}(b)
shows how $V_n$ varies across scales, highlighting the bifurcation
of the attractors (in red) and repellers (in green) as $\sigma$ changes.
\begin{figure}[h!]
\begin{center}
\begin{tabular}{cc}
\includegraphics[width=0.45\linewidth]{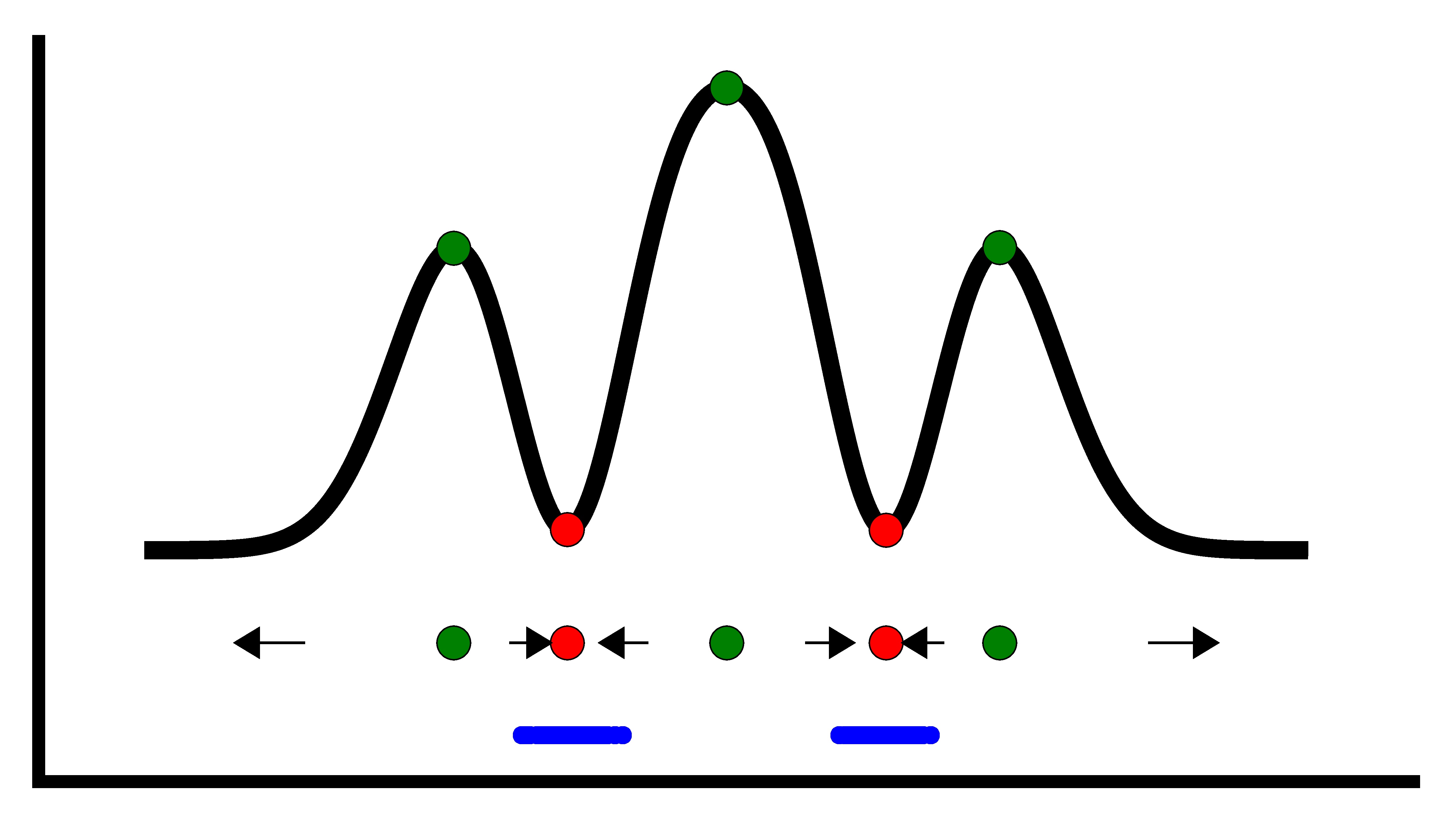} 
\qquad & \qquad
\includegraphics[width=0.3\linewidth]{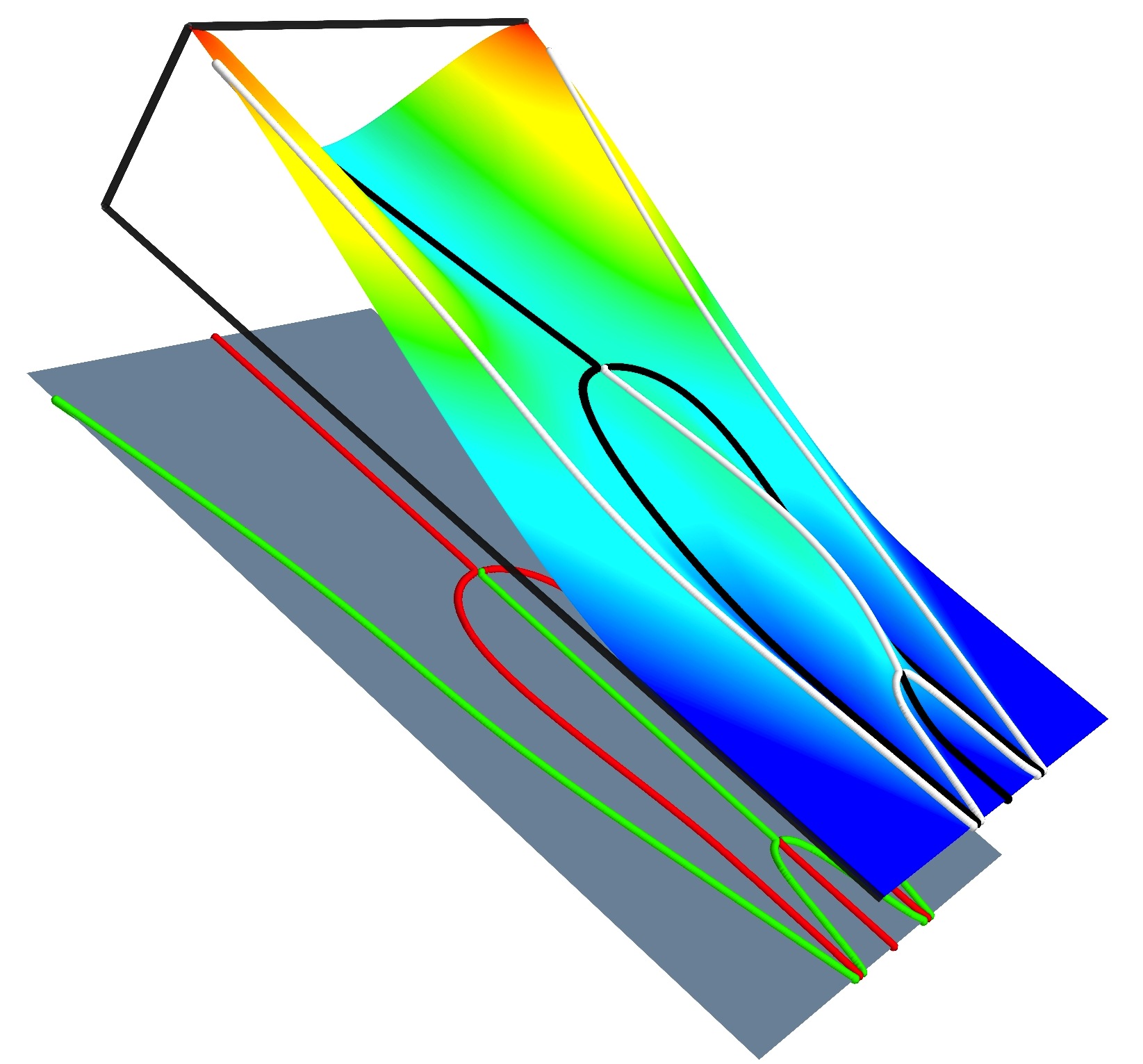} \\
(a) \qquad & \qquad (b)
\end{tabular}
\end{center}
\caption{(a) Fr\'{e}chet function for data on the line
(highlighted in blue) computed with the Gaussian kernel at
scale $\sigma = 3$; (b) Fr\'{e}chet function across scales.}
\label{F:frechet1}
\end{figure}
In data analysis, such bifurcation diagrams may find 
several applications. For example, if the data
represent the distribution of some phenotypic trait for
two species that have evolved from a single group, the
multiscale Fr\'{e}chet function and the associated
bifurcation diagram let us create an evolutionary model
for the trait from the observed data.
\end{example}

\begin{example} \label{E:frechet2}
Here we consider the dataset in $\real^2$ shown in panel (a)
of Fig.\,\ref{F:frechet2}. Panels (b)--(h) show the Fr\'{e}chet
function for the Gaussian kernel calculated at increasing scales.
The gradient field $- \nabla V_n$ at scale $\sigma = 2.25$ is
depicted in panel (a) of Fig.\,\ref{F:2dfield} along with the
two attractors $p_1$ and $p_2$, and their stable manifolds
that were estimated numerically. 
\begin{figure}[h!]
\begin{center}
\begin{tabular}{ccccc}
\begin{tabular}{c}
\includegraphics[width=0.14\linewidth]{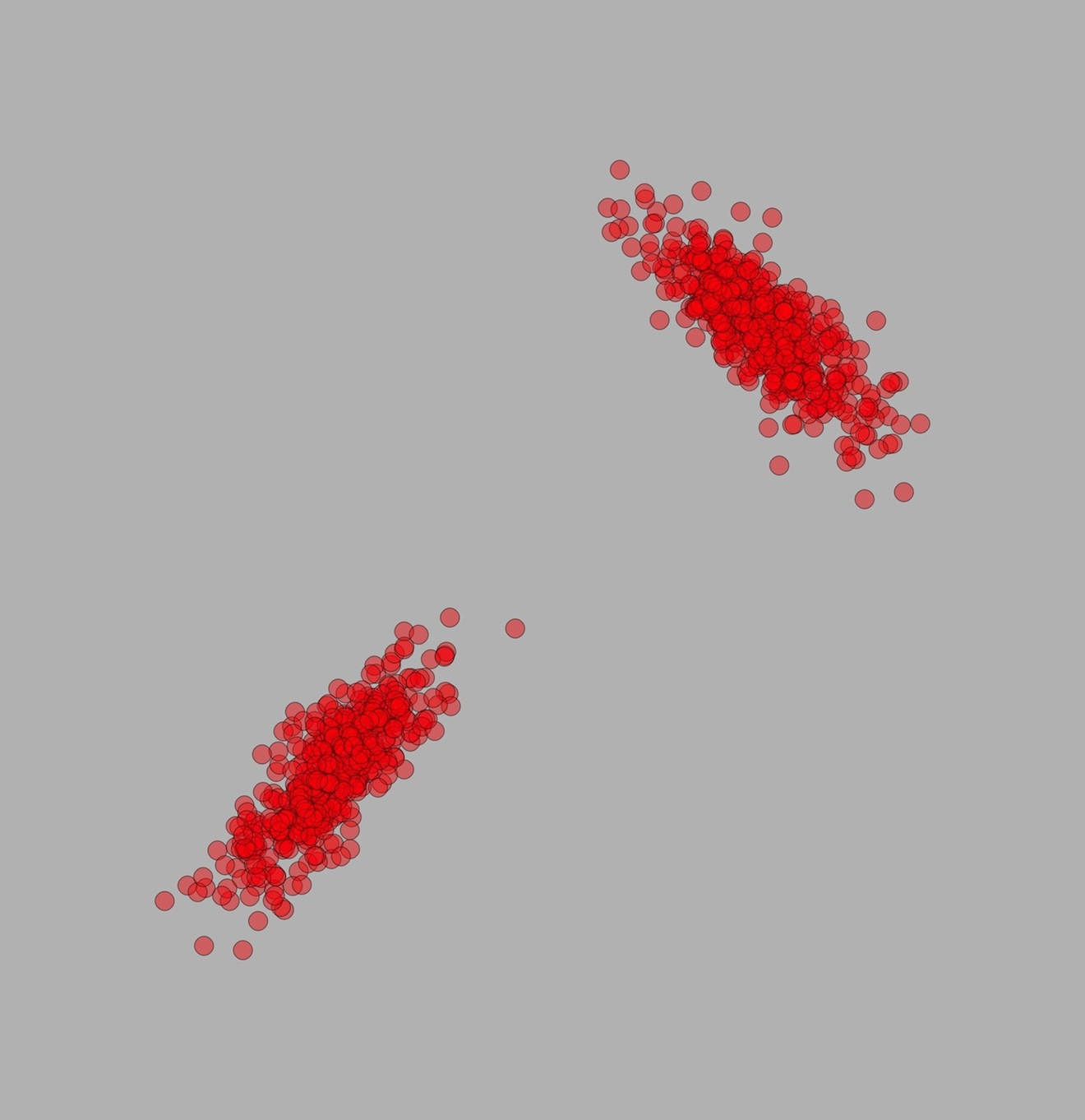}
\end{tabular} &
\begin{tabular}{c}
\includegraphics[width=0.14\linewidth]{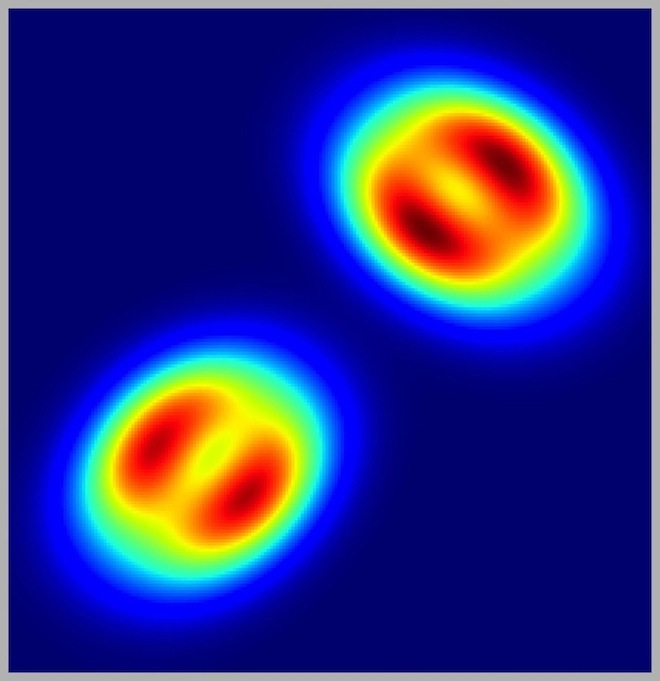} 
\end{tabular} & 
\begin{tabular}{c}
\includegraphics[width=0.14\linewidth]{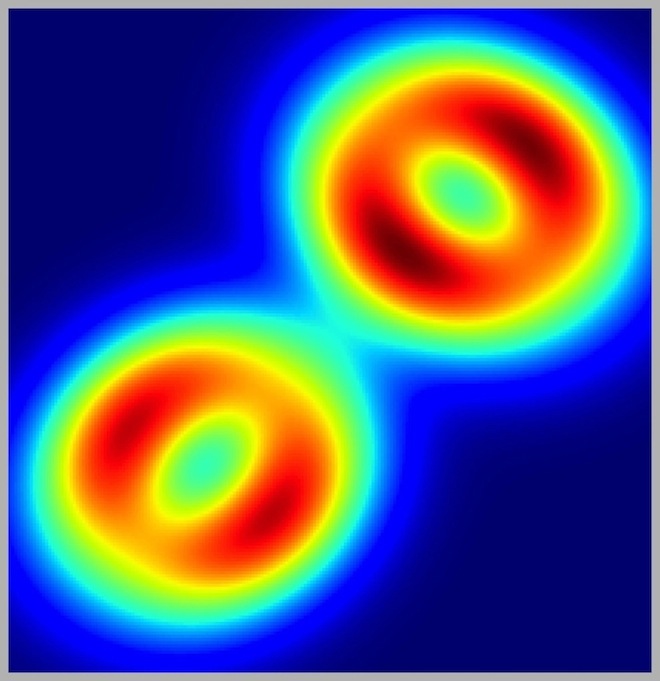} 
\end{tabular} & 
\begin{tabular}{c}
\includegraphics[width=0.14\linewidth]{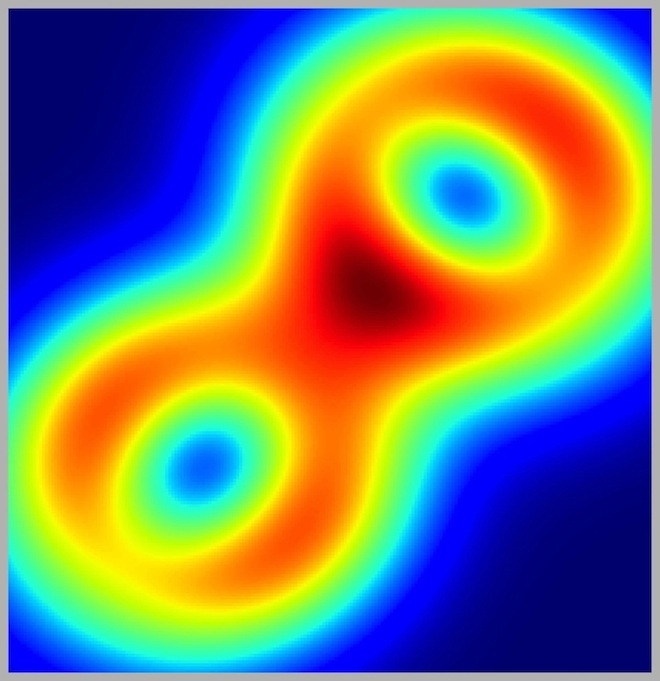} 
\end{tabular} & 
\begin{tabular}{c}
\includegraphics[width=0.14\linewidth]{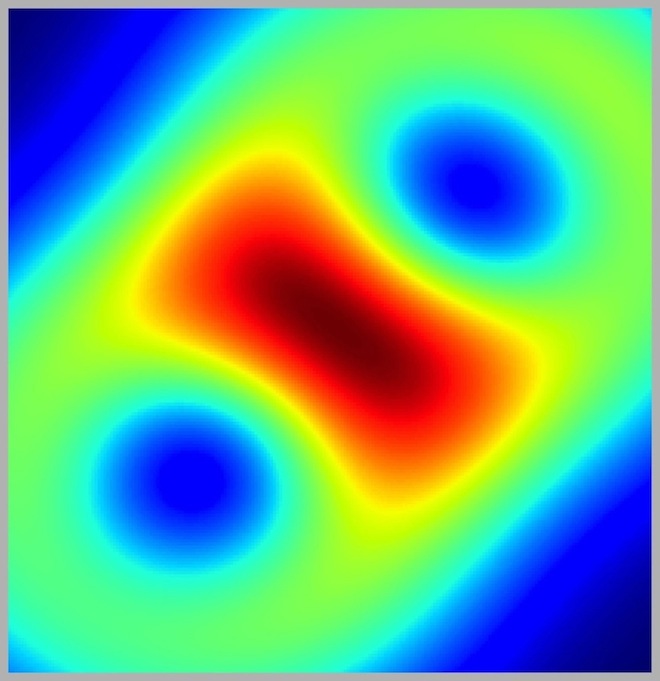} 
\end{tabular} \\
data & $\sigma = $1.0 & $\sigma = 1.5 $ & $\sigma = 2.0$ &
$\sigma = 3.0 $
\vspace{0.1in} \\
\begin{tabular}{c}
\includegraphics[width=0.14\linewidth]{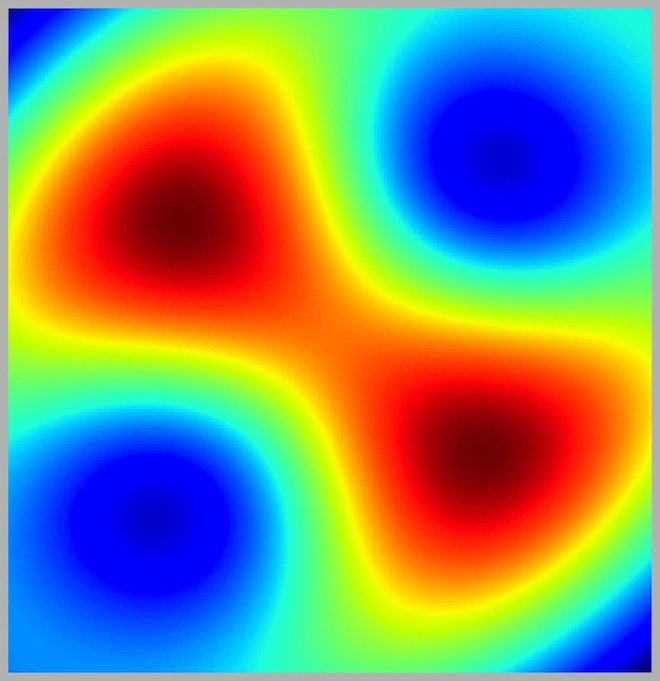} 
\end{tabular} & 
\begin{tabular}{c}
\includegraphics[width=0.14\linewidth]{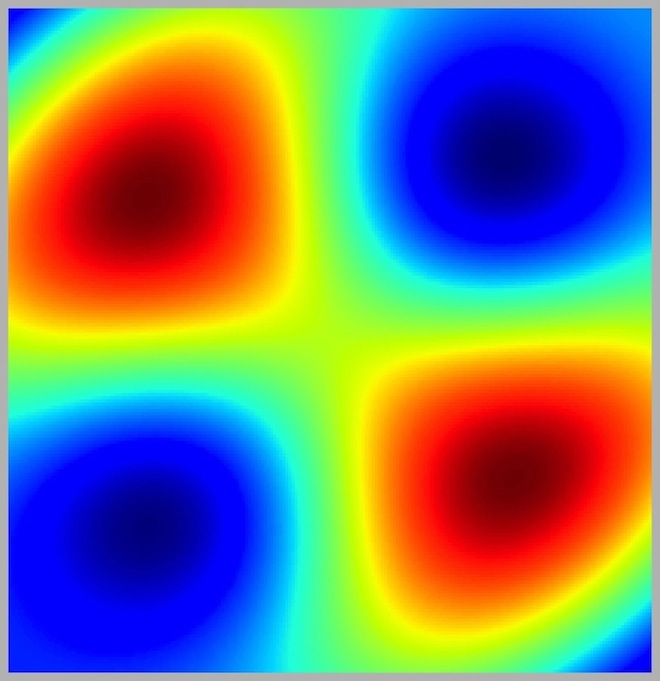} 
\end{tabular} & 
\begin{tabular}{c}
\includegraphics[width=0.14\linewidth]{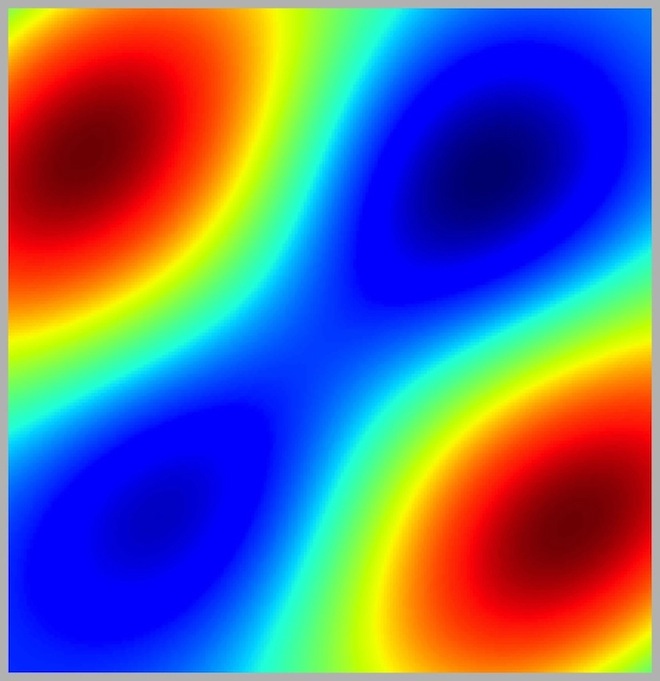} 
\end{tabular} & 
\begin{tabular}{c}
\includegraphics[width=0.14\linewidth]{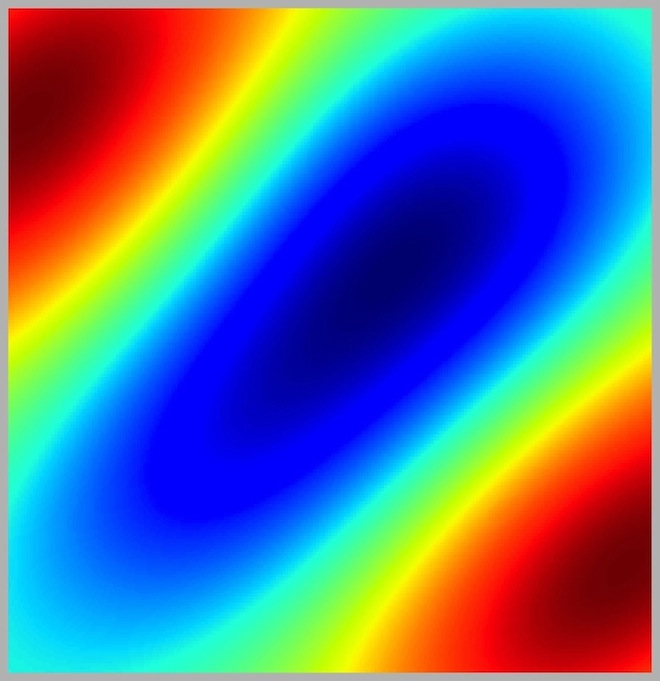} 
\end{tabular} & 
\begin{tabular}{c}
\includegraphics[width=0.14\linewidth]{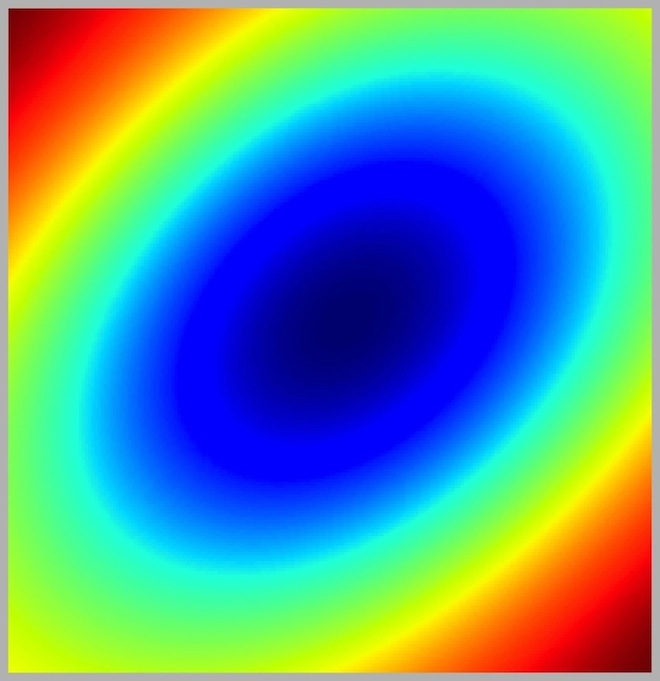} 
\end{tabular} \\
$\sigma = 4.5$ & $\sigma = 5.0 $ & $\sigma = 6.0 $ &
$\sigma = 7.0$ & $\sigma = 9.0$
\end{tabular}
\end{center}
\caption{Heat maps of the multiscale Fr\'{e}chet function
for 2D data at increasing scales computed with the
Gaussian kernel.}
\label{F:frechet2}
\end{figure}
\begin{figure}[h!]
\begin{center}
\begin{tabular}{cc}
\begin{tabular}{c}
\includegraphics[width=0.32\linewidth]{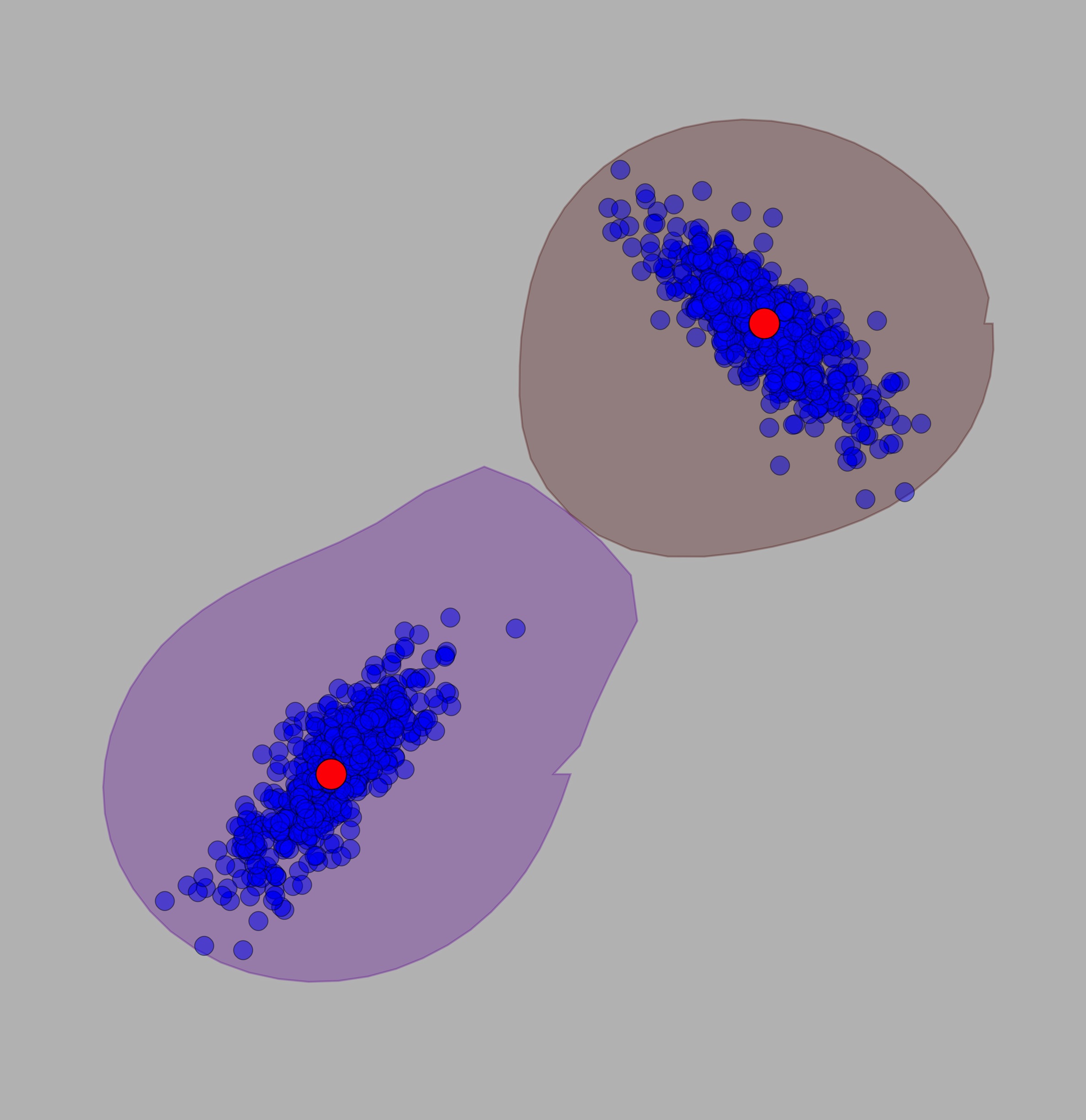}
\end{tabular}
\qquad & \qquad
\begin{tabular}{c}
\includegraphics[width=0.35\linewidth]{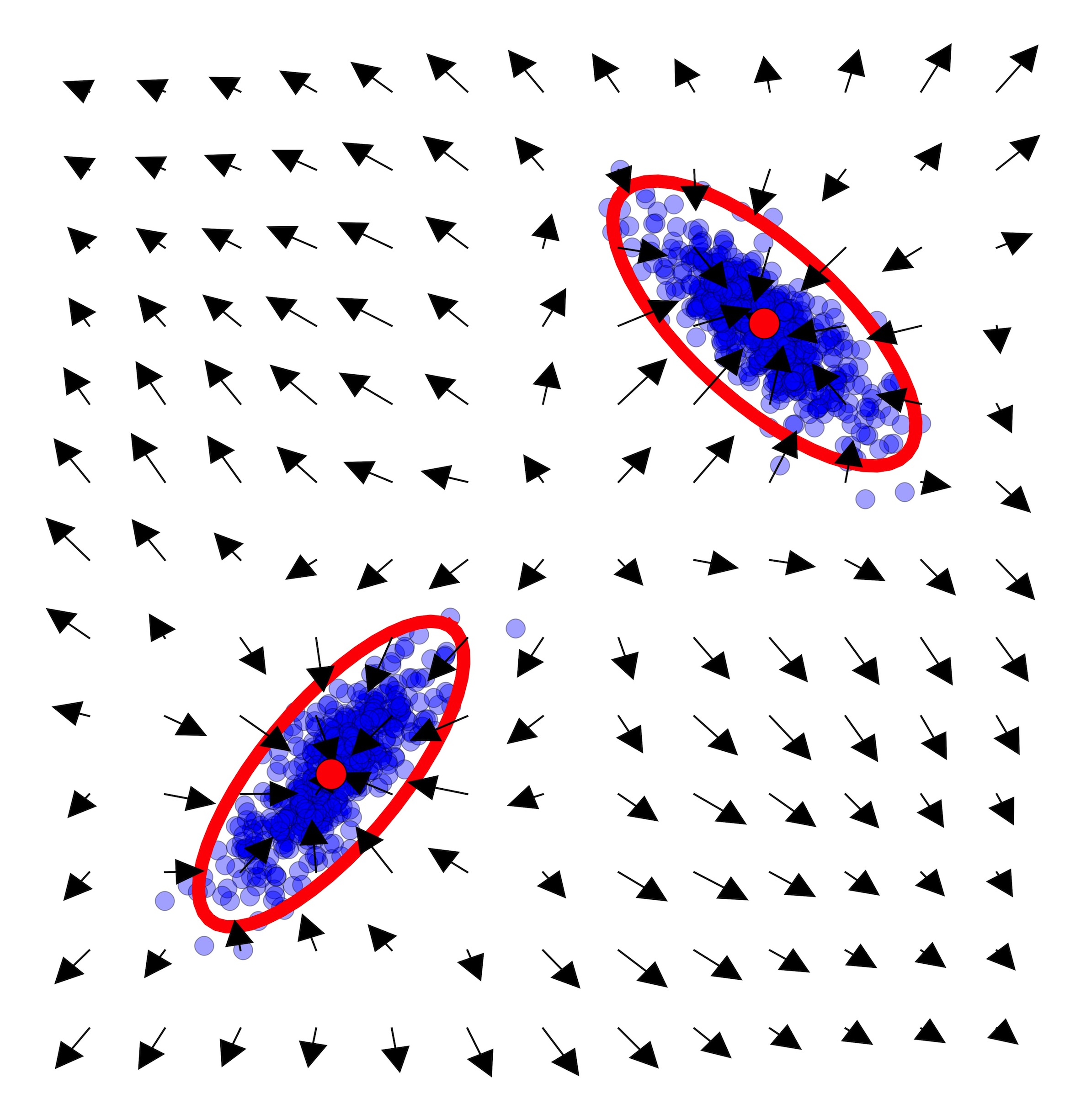}
\end{tabular}
\\
(a) \qquad & \qquad (b) 
\end{tabular}
\end{center}
\caption{(a) 2D data, attractors and their stable manifolds at
a fixed scale ($\sigma=2.25$); (b) gradient vector field and
covariance tensors at the attractors.}
\label{F:2dfield}
\end{figure}
The stable manifolds may be viewed as estimations
at scale $\sigma = 2.25$ of clusters of the underlying probability
measure $\alpha$ from which the data was sampled. Panel (b)
shows the gradient field and the covariance tensors at the attractors
depicted as ellipses with principal radii proportional to the
square root of the eigenvalues of the covariance matrix. This
may be viewed as a localized analogue of principal component
analysis (PCA) that is able to uncover geometry that is not
detectable with standard PCA. Analysis of the spectra
of $\Sigma_n (p_i, \sigma)$, $i =1,2$, suggests that the data
is organized around two one-dimensional clusters, whereas
standard PCA is not sensitive to the local dimensionality because
of the orientation  of the clusters.

\end{example}

These examples are intended as proof-of-concept
illustrations. Topological and other methods will
be explored in forthcoming work for extraction of
structural information from $V_n$.


\section{Hierarchical Manifold Clustering} \label{S:clustering}

Clustering is a central theme in pattern analysis with a rich
history; cf. \cite{jain}. One of the most studied forms of the problem
is that of partitioning a dataset into various subsets if there
is some form of spatial separation of the data into subgroups.
Motivated by problems in such areas as computer vision and
video analysis, cf.\,\cite{vidal07}, there has been a
growing interest in clustering data that are organized as a finite
union of possibly intersecting subspaces that have some
special geometric structure \cite{chen2009foundations,arias2011clustering,arias2013spectral,soltanolkotabi2014robust}. As illustrated in
Fig.\,\ref{F:clusters}, the data may
consist of noisy samples from an arrangement of (affine) linear
subspaces of a Euclidean space such as a collection of lines in
a plane, or an arrangement of lines and planes in $\real^3$. More
generally, the clusters may comprise a finite collection
of possibly non-linear, smooth submanifolds of a Euclidean space
that intersect transversely. Here we propose an approach to
manifold clustering based on CTFs. The basic idea is to use
covariance fields to incorporate directional information at
each data point. Formally, this is achieved via a section of the
tensor bundle $\real^d \times (\real^d \otimes \real^d)$ over
$\real^d$, as follows. Given a probability measure $\alpha$
and a multiscale kernel, let $\Sigma_\alpha (x, \sigma)$ be
the associated CTF. For each $\sigma > 0$, consider the section
$\iota_{\alpha; \sigma} \colon \real^d \to \real^d \times (\real^d
\otimes \real^d)$ given by
$x \mapsto \left( x, \Sigma_\alpha (x, \sigma) \right)$. 
On the total space of the tensor bundle, define the metric
\begin{equation}
\| (x, \Sigma) - (x', \Sigma')\|_\gamma = 
\left( \Vert \Sigma - \Sigma' \Vert^2
+ \gamma^2 \Vert x - x' \Vert^2 \right)^{1/2} \,,
\end{equation}
where $x, x' \in \real^d$, $\Sigma, \Sigma'
\in \real^d  \otimes \real^d$, and $\gamma \geq 0$
is a parameter that balances the contributions of the spatial
and tensor components. Note that  $\|\cdot\|_0$ only defines
a pseudo-metric since $\|\cdot\|_0$ disregards ``horizontal''
distances.

For any subset $X \subseteq \real^d$, we denote
by $\mathbb{X}_{\alpha; \gamma, \sigma}$ the metric
space $(X, d_{\alpha; \gamma, \sigma})$, where  
\begin{equation} \label{E:metric}
d_{\alpha; \gamma, \sigma} (x, x') = 
\left\| \iota_{\alpha; \sigma} (x) -
\iota_{\alpha; \sigma} (x') \right\|_\gamma \,.
\end{equation}
For a dataset $A = \{a_1, \ldots, a_n\} \subset \real^d$,
the proposed clustering method is based on the single-linkage
method \cite{sibson73} applied to the finite metric space
$\mathbb{A}_{\alpha_n; \gamma, \sigma}$ associated with
the empirical measure $\alpha_n = n^{-1} \sum_{i=1}^n \delta_{a_i}$.
Equivalently, clustering is based
on the $n \times n$ affinity matrix $D$ whose $(i,j)$-entry is
\begin{equation} \label{E:affinity}
d_{ij} = d_{\alpha_n; \gamma,\sigma} (a_i, a_j) \,.
\end{equation}
Recall that single linkage
on a finite metric space $\mathbb{A}=(A, d_A)$ starts from $n$ clusters,
each a singleton $\{a_i\}$, $1 \leqslant i \leqslant n$,
sequentially merging the closest clusters until all data points
coalesce into a single cluster. Closeness of two clusters, say
$A_1, A_2 \subset A$, is measured by the inter-cluster distance
\begin{equation}
d_{sl}(A_1, A_2)=\min_{a \in A_1 , a' \in A_2} d_A (a, a') \,,
\label{E:distance}
\end{equation}
We choose single linkage because it yields stable dendrograms,
as expounded below, under assumptions on the
probability measure from which the data is sampled that are not
very restrictive. Combined with our stability and consistency
results for covariance fields, this guarantees that the 
manifold clustering method is stable at all stages. 

\subsection{Dendrogram Stability}

We denote a metric space by $\mathbb{X} = (X,d_X)$.
An ultrametric space is a pair $(X,u_X)$, where $u_X \colon
X\times X \to \mathbb{R}^+$ is a metric on $X$ that satisfies
the \emph{strong triangle inequality}
\begin{equation}
u_X(x,x')\leq \max \left\{ u_X(x,x''),u_X(x'',x') \right\} \,,
\end{equation}
for all $x,x',x''\in X$. Any such function $u_X$  is called
an \emph{ultrametric} on $X$. 

As proved in \cite{memoli10}, dendrograms over a finite set
$X$ are in structure-preserving, bijective correspondence with
ultrametrics on $X$. In this formulation, a hierarchical clustering method
can be regarded as a map $\mathcal{H}:\mathcal{M}\rightarrow\mathcal{U}$
from finite metric spaces into finite ultrametric spaces. Henceforward,
$\mathcal{H}$ will denote the map given by single linkage hierarchical
clustering. It is known \cite{memoli10} that if $\mathbb{X} = (X,d_X)
\in \mathcal{M}$, then $\mathcal{H}(\mathbb{X})=(X,u_X)$ is given by 
\begin{equation}
u_X(x,x') = \min_{x=x_0,\ldots,x_r=x'}\max_{i}d_X(x_i,x_{i+1}) \,.
\end{equation}
The minimum above is taken over all finite ordered sequences
$x_0, x_1,\ldots,x_r$ of points in $X$ such that $x_0=x$ and $x_r=x'$.
If $x, x' \in X$, then $u_X (x,x')$ may be interpreted as the
dendrogram level at which the clusters containing $x$ and $x'$
first merge. This is known as the {\em cophenetic\/}
distance between $x$ and $x'$.

The main goal of this section is to formulate and prove
stability of the map $\mathcal{M} \ni
\mathbb{X}\mapsto\mathcal{H}(\mathbb{X})\in \mathcal{U}$.
The question of stability of single linkage clustering
can be approached using ideas related to the Gromov-Hausdorff
distance \cite{burago}, as follows. A correspondence $R$ between
two sets $X$ and $Y$ is a subset of $X\times Y$ such that
$\pi_1(R)=X$ and $\pi_2(R)=Y$, where $\pi_1$ and $\pi_2$
denote projections onto the first and second factors. Given
$X$ and $Y$, we denote by $\mathcal{R}(X,Y)$ the set of
all correspondences between $X$ and $Y$.
\begin{definition} Let $\mathbb{X}$ and $\mathbb{Y}$ be
compact metric spaces.
\begin{itemize}
\item[(i)]
The {\em distortion} of a correspondence $R$ between
$\mathbb{X}$ and $\mathbb{Y}$ is defined by 
\[
\text{dis}\, (R;\mathbb{X},\mathbb{Y}) := 
\max_{(x,y),(x',y')\in R} \left| d_X(x,x)-d_Y(y,y') \right|.
\]

\item[(ii)] The {\em Gromov-Hausdorff} distance between
$\mathbb{X}$ and $\mathbb{Y}$ is given by 
\[
d_{GH}(\mathbb{X},\mathbb{Y}):=\frac{1}{2}\inf_{R}
\text{dis}\,(R; \mathbb{X},\mathbb{Y}),
\]
where the infimum is taken over all correspondences between
$\mathbb{X}$ and $\mathbb{Y}$.
\end{itemize}
\end{definition}
The following stability result is a generalization of
\cite[Proposition 26]{memoli10}.
\begin{proposition} \label{P:dendro}
For any $\mathbb{X}, \mathbb{Y} \in \mathcal{M}$ and
any correspondence $R \in \mathcal{R}(X,Y)$, 
\[
\mathrm{dis}(R;\mathcal{H}(\mathbb{X}),\mathcal{H}
(\mathbb{Y}))\leq \mathrm{dis}(R;\mathbb{X},\mathbb{Y}) \,.
\]
As a consequence,
$d_{GH}(\mathcal{H}(\mathbb{X}),\mathcal{H}(\mathbb{Y}))
\leq d_{GH}(\mathbb{X},\mathbb{Y})$.
\end{proposition}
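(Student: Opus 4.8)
The plan is to reduce everything to a single pointwise inequality: for every pair $(x,y),(x',y') \in R$ I will show that $|u_X(x,x') - u_Y(y,y')| \le \mathrm{dis}(R;\mathbb{X},\mathbb{Y})$, where $u_X, u_Y$ are the ultrametrics produced by $\mathcal{H}$. Once this holds, taking the maximum over all such pairs gives $\mathrm{dis}(R;\mathcal{H}(\mathbb{X}),\mathcal{H}(\mathbb{Y})) \le \mathrm{dis}(R;\mathbb{X},\mathbb{Y})$, which is the main assertion; the consequence about $d_{GH}$ then follows by taking the infimum over all $R \in \mathcal{R}(X,Y)$ on both sides, since the inequality holds for each individual correspondence. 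Writing $\zeta = \mathrm{dis}(R;\mathbb{X},\mathbb{Y})$, and noting that the claim is symmetric in $X$ and $Y$, it suffices to prove the one-sided bound $u_Y(y,y') \le u_X(x,x') + \zeta$.

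First I would use the explicit single-linkage formula $u_X(x,x') = \min \max_i d_X(x_i,x_{i+1})$ recalled just above the statement to select a chain $x = x_0, x_1, \ldots, x_r = x'$ in $X$ that realizes the minimum, so that $u_X(x,x') = \max_i d_X(x_i,x_{i+1})$. The key step is to transport this chain across the correspondence: because $R$ surjects onto $Y$, for each interior vertex $x_i$ I may pick a point $y_i \in Y$ with $(x_i,y_i) \in R$, and I set $y_0 = y$, $y_r = y'$. This last choice is admissible precisely because $(x,y),(x',y') \in R$ together with $x_0 = x$, $x_r = x'$. The result is a genuine chain $y = y_0, y_1, \ldots, y_r = y'$ in $Y$ joining the prescribed endpoints.

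Next, the definition of distortion bounds each edge of the lifted chain: since $(x_i,y_i),(x_{i+1},y_{i+1}) \in R$, we have $d_Y(y_i,y_{i+1}) \le d_X(x_i,x_{i+1}) + \zeta$. Because the lifted chain is merely one candidate in the minimization defining $u_Y(y,y')$, the min-max formula yields $u_Y(y,y') \le \max_i d_Y(y_i,y_{i+1}) \le \max_i d_X(x_i,x_{i+1}) + \zeta = u_X(x,x') + \zeta$. Interchanging the roles of $X$ and $Y$ gives the reverse estimate, and the two together produce the desired pointwise inequality.

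I expect the only point requiring care to be the chain-lifting itself, namely ensuring that the lifted chain in $Y$ begins exactly at $y$ and ends exactly at $y'$ rather than at arbitrary partners of $x$ and $x'$; the surjectivity of $R$ supplies partners for the interior vertices, while the endpoint constraints are automatically consistent since the endpoint pairs $(x,y)$ and $(x',y')$ are given in $R$ from the outset. Notably, no optimality of the $Y$-chain and no completeness are needed — the argument uses only that the lifted chain is admissible in the minimization for $u_Y$ — so the rest is a direct application of the distortion definition and the min-max characterization of the cophenetic ultrametric.
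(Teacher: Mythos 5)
Your proof is correct and follows essentially the same route as the paper's: select a chain in $X$ realizing $u_X(x,x')$ via the min-max formula, lift it through $R$ to a chain in $Y$ with the prescribed endpoints $y$ and $y'$, bound each lifted edge by the distortion, and conclude by symmetry in $X$ and $Y$. One small slip in your justification: the existence of partners $y_i$ for the interior vertices $x_i$ relies on $\pi_1(R)=X$ (every point of $X$ has a partner in $Y$), not on $R$ surjecting onto $Y$; since both properties hold by the definition of a correspondence, nothing in the argument breaks.
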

\begin{remark}
The claim of the proposition may be written, equivalently,
as follows. If $u_X$ and $u_Y$ denote the ultrametrics
produced by single linkage hierarchical clustering on
$\mathbb{X}$ and $\mathbb{Y}$, then
\begin{equation}\label{E:stab-gh}
|u_X(x,x')-u_Y(y,y')|\leq \max_{(x,y),(x',y')\in
R}|d_X(x,x')-d_Y(y,y')| \,,
\end{equation}
for any correspondence $R$ between $X$ and $Y$ and
all $(x,y),(x',y')\in R$. 
\end{remark}
\begin{proof}[Proof of Proposition \ref{P:dendro}]
We prove (\ref{E:stab-gh}). Given a correspondence
$R \in \mathcal{R}(X,Y)$ and $(x,y),(x',y')\in R$, let
$x=x_0,x_1,\ldots,x_n=x'$ in $X$ be such that
$\max_i d_X(x_i,x_{i+1}) = u_X(x,x')$. Let $y_0=y$,
$y_n=y'$ and choose $y_1,\ldots,y_{n-1}\in Y$ such that
$(x_i,y_{i})\in R$ for all $i=1,\ldots,n-1$. This is possible since
any correspondence $R$ satisfies $\pi_1(R)=X$.
Notice that 
\begin{equation} \label{E:ubound}
\begin{split}
u_Y(y,y') &\leq \max_{i} d_Y(y_{i},y_{i+1}) \\
&\leq \max_{i} \left( d_X(x_{i},x_{i+1}) + |d_X(x_i,x_{i+1}) 
- d_Y(y_i,y_{i+1})| \right)\\
&\leq \max_i d_X(x_i,x_{i+1}) + \max_{(x,y),(x',y')\in R}
| d_X(x,x')-d_Y(y,y')| \\
&=u_X(x,x') + \max_{(x,y),(x',y')\in R} | d_X(x,x')-d_Y(y,y')|.
\end{split}
\end{equation}
The claim follows since \eqref{E:ubound} also holds
if we reverse the roles of $X$ and $Y$.
\end{proof}

\begin{lemma}\label{lemma:sigma-ab} 
Let $\alpha,\beta\in\mathcal{P}_\infty(\real^d)$ and $\sigma>0$.
If a kernel satisfies the conditions of Lemma \ref{L:kernel}, then 
\begin{equation*}
\sup_{(a,b) \in R_\mu}
\left\|\Sigma_\alpha(a,\sigma) - \Sigma_\beta(b,\sigma) \right\|
\leq \frac{2 A_f \sigma}{C_d(\sigma)} \,
\sup_{(a,b) \in R_\mu} \|a-b\|\,,
\end{equation*}
for any coupling $\mu\in\Gamma(\alpha,\beta)$, where
$R_\mu := \text{supp} \, [\mu]$ and $A_f>0$ is as in 
Lemma \ref{L:kernel}.
\end{lemma}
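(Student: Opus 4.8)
The plan is to adapt the argument used for Theorem \ref{T:stab}, the one new feature being that we now compare the two covariance tensors at \emph{distinct} base points $a$ and $b$ that are paired through the coupling. Fix $\mu \in \joint(\alpha,\beta)$ and a pair $(a,b) \in R_\mu$. Using the translation invariance of the kernel (Remark \ref{R:iso}), which gives $K(a,y,\sigma) = K_\sigma(y-a)$, together with the notation $Q_\sigma(z) = (z\otimes z)K_\sigma(z)$, I would first rewrite each field as an integral against $\mu$, exploiting that the marginals of $\mu$ are $\alpha$ and $\beta$:
\[
\Sigma_\alpha(a,\sigma) = \int Q_\sigma(z_1 - a)\,\mu(dz_1 \times dz_2), \qquad
\Sigma_\beta(b,\sigma) = \int Q_\sigma(z_2 - b)\,\mu(dz_1 \times dz_2).
\]
Subtracting and bringing the norm inside the integral yields
\[
\left\| \Sigma_\alpha(a,\sigma) - \Sigma_\beta(b,\sigma) \right\|
\leq \int \left\| Q_\sigma(z_1 - a) - Q_\sigma(z_2 - b) \right\| \, \mu(dz_1 \times dz_2).
\]

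Next I would apply Lemma \ref{L:kernel} to the integrand with the shifted arguments $z_1 - a$ and $z_2 - b$, which is legitimate since that lemma holds for all pairs of points in $\real^d$. This bounds the integrand by $\tfrac{A_f \sigma}{C_d(\sigma)}\,\|(z_1 - a) - (z_2 - b)\|$, and since the displacement rearranges as $(z_1 - z_2) - (a - b)$, the triangle inequality gives
\[
\left\| Q_\sigma(z_1 - a) - Q_\sigma(z_2 - b) \right\|
\leq \frac{A_f \sigma}{C_d(\sigma)} \left( \|z_1 - z_2\| + \|a - b\| \right).
\]

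The decisive observation is then that $\mu$-almost every pair $(z_1,z_2)$ lies in $R_\mu = \text{supp}\,[\mu]$, so that both $\|z_1 - z_2\|$ and $\|a-b\|$ are bounded by $\sup_{(a',b') \in R_\mu}\|a'-b'\|$; this is precisely where the factor of $2$ and the role of $R_\mu$ enter, and the hypothesis $\alpha,\beta \in \borel_\infty(\real^d)$ guarantees this supremum is finite. Since $\mu$ is a probability measure, integrating the resulting uniform bound gives
\[
\left\| \Sigma_\alpha(a,\sigma) - \Sigma_\beta(b,\sigma) \right\|
\leq \frac{2 A_f \sigma}{C_d(\sigma)} \sup_{(a',b') \in R_\mu} \|a' - b'\|,
\]
and taking the supremum over $(a,b) \in R_\mu$ on the left yields the claim. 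I expect the argument to be essentially routine once Lemma \ref{L:kernel} is in hand; the only point requiring care is keeping the two displacement contributions separate and recognizing that, unlike in Theorem \ref{T:stab}, it is the support (supremal) condition rather than a $\dwassone$-type integral that controls $\|z_1 - z_2\|$ uniformly — hence the appearance of a $\dwassinf$-flavored quantity on the right-hand side.
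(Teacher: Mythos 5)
Your proof is correct and follows essentially the same route as the paper's: both express the difference of the two fields as an integral of $\|Q_\sigma(z_1-a)-Q_\sigma(z_2-b)\|$ against the coupling $\mu$, apply Lemma \ref{L:kernel} to the shifted displacements, split $\|(z_1-a)-(z_2-b)\|$ by the triangle inequality, and bound both resulting terms by $\sup_{(a',b')\in R_\mu}\|a'-b'\|$ before integrating over the probability measure $\mu$. The only cosmetic difference is that the paper fixes the uniform bound on the integrand first (via the constant $\zeta$) and then integrates, whereas you integrate and then invoke the almost-everywhere support bound --- the two orderings are equivalent.
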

\begin{proof}
Set $\zeta = \sup_{(a,b) \in \text{supp} \, [\mu]}\|a-b\|$.
Let $\mu\in\Gamma(\alpha,\beta)$ and $(y,y'),(a,b) \in
\text{supp}\, [\mu]$. In the notation of Lemma \ref{L:kernel},
setting $z_1 = y-a$ and $z_2 = y' - b$, we have
\begin{equation}
\|Q_\sigma(y-a) - Q_\sigma(y'-b)\| 
\leq \frac{A_f \sigma}{C_d(\sigma)} \left(\|y-y'\| + \|a-b\|\right)
\leq \frac{2 A_f \sigma}{C_d(\sigma)} \zeta \,,
\end{equation}
where in the last inequality we used $\|y-y'\| \leq \zeta$ and
$\|a-b\| \leq \zeta$. Since
\begin{equation}
\left\|\Sigma_\alpha(a,\sigma) - \Sigma_\beta(b,\sigma) \right\|
\leq \iint \|Q_\sigma(y-a) - Q_\sigma(y'-b)\| \,\mu (dy\times dy') \,,
\end{equation}
the lemma follows.
\end{proof}


\begin{lemma}[Lemma 2.2 of \cite{dghlp}] \label{L:R-mu}
Let $\alpha, \beta \in \borel_\infty (\real^d)$.
Then, for any coupling $\mu \in \Gamma(\alpha,\beta)$,
$R_\mu = \text{supp}\, [\mu]$ gives a correspondence between
$A=\text{supp} \, [\alpha]$ and $B=\text{supp} \, [\beta]$.
\end{lemma}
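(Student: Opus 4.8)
The plan is to verify directly the two defining properties of a correspondence between $A = \mathrm{supp}\,[\alpha]$ and $B = \mathrm{supp}\,[\beta]$: namely that $R_\mu = \mathrm{supp}\,[\mu]$ is contained in $A \times B$, and that its projections onto the two factors surject onto $A$ and $B$ respectively. Throughout I will lean on the elementary characterization of the support of a Borel measure: a point lies in the support if and only if every open neighborhood of it receives positive mass, and, equivalently, the complement of the support is a null set.

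First I would show $R_\mu \subseteq A \times B$. Given $(a,b) \in \mathrm{supp}\,[\mu]$, suppose toward a contradiction that $a \notin A$. Then there is an open set $U \ni a$ with $\alpha(U) = 0$, whence $\mu(U \times \real^d) = ((\pi_1)_\ast \mu)(U) = \alpha(U) = 0$; but $U \times \real^d$ is an open neighborhood of $(a,b)$, contradicting $(a,b) \in \mathrm{supp}\,[\mu]$. Hence $a \in A$, and by the symmetric argument using $(\pi_2)_\ast \mu = \beta$ we get $b \in B$.

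Next I would establish $\pi_1(R_\mu) = A$; the identity $\pi_2(R_\mu) = B$ then follows by the same reasoning with the two coordinates interchanged. The inclusion $\pi_1(R_\mu) \subseteq A$ is immediate from the previous paragraph. For the reverse inclusion, fix $a \in A$ and let $U$ be an arbitrary open neighborhood of $a$. Since $a \in \mathrm{supp}\,[\alpha]$ we have $\alpha(U) > 0$, so $\mu(U \times \real^d) = \alpha(U) > 0$; because $\mu$ assigns measure zero to the complement of its support, the set $(U \times \real^d) \cap \mathrm{supp}\,[\mu]$ must be nonempty. Choosing a point of this intersection exhibits an element of $\pi_1(R_\mu)$ lying in $U$, so $a \in \overline{\pi_1(R_\mu)}$.

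To finish I need $\pi_1(R_\mu)$ to be closed, and this is precisely where the hypothesis $\alpha,\beta \in \borel_\infty(\real^d)$ is used: since $A$ and $B$ are closed and bounded, hence compact, and $R_\mu \subseteq A \times B$ is closed, $R_\mu$ is compact, so its continuous image $\pi_1(R_\mu)$ is compact and in particular closed. Thus $a \in \pi_1(R_\mu)$, giving $A \subseteq \pi_1(R_\mu)$ and hence equality. I do not expect a serious obstacle here, as the lemma is a standard measure-theoretic fact; the one point genuinely requiring care is this appeal to compactness of the supports to guarantee that the projection of $R_\mu$ is closed, which is exactly what the bounded-support assumption supplies and which would otherwise fail for noncompactly supported measures.
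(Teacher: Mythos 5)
Your proof is correct, and there is nothing in the paper to compare it against: the paper does not prove this lemma at all, but quotes it as Lemma 2.2 of \cite{dghlp}, so your write-up serves as a self-contained substitute for that citation. The three steps are sound: the inclusion $R_\mu \subseteq A \times B$ via pushforward of null neighborhoods, the density of $\pi_1(R_\mu)$ in $A$ using that $\mu$ vanishes off its support, and the compactness argument upgrading density to equality. Two points deserve to be made explicit. First, the assertion that the complement of $\mathrm{supp}\,[\mu]$ is $\mu$-null is not a tautology in arbitrary topological spaces; it holds here because $\real^d \times \real^d$ is second countable, so the complement of the support is a countable union of open null sets. Second, your closing remark is exactly right and is the mathematical heart of the matter: without the bounded-support hypothesis the statement is false. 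For instance, if $\mu$ places mass $2^{-n}$ at the point $(1/n,\, n) \in \real \times \real$ for each $n \geq 1$, then $0 \in \mathrm{supp}\,[(\pi_1)_\ast \mu]$ but $0 \notin \pi_1(\mathrm{supp}\,[\mu])$, because the projection of the closed but unbounded support fails to be closed. So the appeal to compactness of $A \times B$ is not pedantry; it is precisely where the hypothesis $\alpha, \beta \in \borel_\infty(\real^d)$ enters, and your proof identifies this correctly.
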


\begin{theorem}\label{T:stab-metric}
Let $\alpha, \beta \in \borel_\infty(\real^d)$, $A = \text{supp} \, [\alpha]$, $B=\text{supp}\,[\beta]$, $\sigma > 0$ and $\gamma \geq 0$. Then, for any kernel satisfying the conditions of Lemma \ref{L:kernel}, 
\begin{equation*}
d_{GH} \left((A,d_{\alpha;\sigma,\gamma}),(B,d_{\beta;\sigma,\gamma})
\right) \leq \left(\frac{2 A_f \sigma}{C_d(\sigma)}+\gamma\right) 
\dwassinf (\alpha,\beta)\,,
\end{equation*}
with $A_f>0$ as in Lemma \ref{L:kernel}.
\end{theorem}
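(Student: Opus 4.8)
The plan is to exhibit a single good correspondence between $A$ and $B$, bound its distortion, and then invoke the definition $d_{GH} = \frac{1}{2}\inf_R \mathrm{dis}(R;\cdot,\cdot)$. First I would fix a coupling $\mu \in \Gamma(\alpha,\beta)$ that realizes $W_\infty(\alpha,\beta)$, which exists by Remark \ref{R:wasserstein}(i), and set $R_\mu = \mathrm{supp}\,[\mu]$ together with $\zeta := \sup_{(a,b)\in R_\mu}\|a-b\| = W_\infty(\alpha,\beta)$. By Lemma \ref{L:R-mu}, $R_\mu$ is a correspondence between $A = \mathrm{supp}\,[\alpha]$ and $B = \mathrm{supp}\,[\beta]$, hence an admissible competitor in the infimum defining $d_{GH}$. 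It therefore suffices to show that $\mathrm{dis}(R_\mu) \leq 2\left(\frac{2A_f\sigma}{C_d(\sigma)}+\gamma\right)\zeta$, since halving this bound yields the stated inequality.

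The key observation is that each of the two distances entering the distortion is the norm of a vector in the product space $(\real^d\otimes\real^d)\times\real^d$ equipped with the weighted norm $\|(S,v)\|_\gamma=(\|S\|^2+\gamma^2\|v\|^2)^{1/2}$. Concretely, for $(a,b),(a',b')\in R_\mu$ I would set $u = (\Sigma_\alpha(a,\sigma)-\Sigma_\alpha(a',\sigma),\,a-a')$ and $w = (\Sigma_\beta(b,\sigma)-\Sigma_\beta(b',\sigma),\,b-b')$, so that $d_{\alpha;\sigma,\gamma}(a,a')=\|u\|_\gamma$ and $d_{\beta;\sigma,\gamma}(b,b')=\|w\|_\gamma$. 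The reverse triangle inequality for $\|\cdot\|_\gamma$ then gives $|d_{\alpha;\sigma,\gamma}(a,a')-d_{\beta;\sigma,\gamma}(b,b')|\leq \|u-w\|_\gamma$, reducing the whole estimate to bounding the single quantity $\|u-w\|_\gamma$ and sidestepping any direct comparison of the two square roots.

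Next I would estimate the two components of $u-w$ separately. For the tensor component I regroup the difference $(\Sigma_\alpha(a,\sigma)-\Sigma_\alpha(a',\sigma))-(\Sigma_\beta(b,\sigma)-\Sigma_\beta(b',\sigma))$ as $(\Sigma_\alpha(a,\sigma)-\Sigma_\beta(b,\sigma))-(\Sigma_\alpha(a',\sigma)-\Sigma_\beta(b',\sigma))$ and apply the triangle inequality together with Lemma \ref{lemma:sigma-ab}, which bounds each of the two terms by $\frac{2A_f\sigma}{C_d(\sigma)}\zeta$; hence the tensor component has norm at most $\frac{4A_f\sigma}{C_d(\sigma)}\zeta$. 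For the spatial component I write $(a-a')-(b-b')=(a-b)-(a'-b')$ and bound its norm by $\|a-b\|+\|a'-b'\|\leq 2\zeta$. Feeding these into the definition of $\|\cdot\|_\gamma$ and using the elementary inequality $\sqrt{p^2+q^2}\leq p+q$ for $p,q\geq 0$ yields $\|u-w\|_\gamma \leq \frac{4A_f\sigma}{C_d(\sigma)}\zeta + 2\gamma\zeta = 2\left(\frac{2A_f\sigma}{C_d(\sigma)}+\gamma\right)\zeta$. Taking the maximum over pairs in $R_\mu$ bounds $\mathrm{dis}(R_\mu)$, and the factor $\frac{1}{2}$ from $d_{GH}$ delivers the claim.

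The argument is essentially routine once this setup is in place, so I do not expect a serious analytic obstacle; the one point that requires care is the bookkeeping of constants. Specifically, it is important to split the tensor difference so that Lemma \ref{lemma:sigma-ab} applies to each half (producing the factor $4$ rather than $2$), and then to collapse the resulting $\sqrt{p^2+q^2}$ into a sum via $\sqrt{p^2+q^2}\leq p+q$, so that the $\frac{1}{2}$ in the definition of $d_{GH}$ recovers exactly the constant $\frac{2A_f\sigma}{C_d(\sigma)}+\gamma$. Attempting instead to compare the two square roots $d_{\alpha;\sigma,\gamma}$ and $d_{\beta;\sigma,\gamma}$ termwise would make the estimate considerably messier; viewing them as norms in the product space is what keeps the proof clean.
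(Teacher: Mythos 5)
Your proposal is correct and follows essentially the same route as the paper: the coupling realizing $W_\infty$, the correspondence $R_\mu = \mathrm{supp}\,[\mu]$ from Lemma \ref{L:R-mu}, the reverse triangle inequality for the product norm $\|\cdot\|_\gamma$, the regrouping of the tensor difference so that Lemma \ref{lemma:sigma-ab} applies to each of the two cross-terms, and the collapse $\sqrt{p^2+q^2}\le p+q$, with the factor $\tfrac12$ in $d_{GH}$ absorbing the doubling of constants. The only cosmetic difference is that you bound the distortion first and halve at the end, whereas the paper chains the inequalities directly from $d_{GH}\le\tfrac12\,\mathrm{dis}(R_\mu)$ and takes suprema before invoking Lemma \ref{lemma:sigma-ab}; your writeup actually spells out the four-term intermediate inequality that the paper states without justification.
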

\begin{proof}
Let $\mu\in\Gamma(\alpha,\beta)$ be a coupling that realizes
$\dwassinf (\alpha, \beta)$. By Lemma \ref{L:R-mu},
$R_\mu = \text{supp}\, [\mu]$ is a correspondence between
$A$ and $B$. Thus,
\begin{equation}
\begin{split}
d_{GH} \big(\mathbb{A}_{\alpha;\sigma,\gamma},\,
&\mathbb{B}_{\beta;\sigma,\gamma} \big) \leq 
\frac{1}{2} \, \text{dis} (R_\mu;A,B) \\
&=\frac{1}{2} \sup_{(a,b),(a',b') \in R_\mu} 
\left| d_{\alpha;\sigma,\gamma}(a,a') - 
d_{\beta;\sigma,\gamma} (b, b') \right| \\
&\leq \frac{1}{2}\sup_{(a,b),(a',b')\in R_\mu}
\Big( \left\| \Sigma_\alpha(a,\sigma) - \Sigma_\beta (b,\sigma) \right\|  + \\
&+ \big\|\Sigma_\alpha(a',\sigma) - \Sigma_\beta (b',\sigma) \big\| +
\gamma\|a-b\| + \gamma \|a'-b'\| \Big) \\
&\leq \sup_{(a,b )\in R_\mu} \left\|\Sigma_\alpha(a,\sigma) -
\Sigma_\beta (b,\sigma) \right\| +\gamma \sup_{(a,b)\in R_\mu}\|a-b\| \\
&\leq \left(\frac{2 A_f \sigma}{C_d(\sigma)}+\gamma\right)
\sup_{(a,b)\in R_\mu}\|a-b\|,
\end{split}
\end{equation}
where the last step follows from Lemma \ref{lemma:sigma-ab}.
The conclusion follows since $\dwassinf (\alpha,\beta) =
\sup_{(a,b)\in R_\mu}\|a-b\|.$
\end{proof}

Combining Proposition \ref{P:dendro} and
Theorem \ref{T:stab-metric}, we obtain: 

\begin{corollary}[Stability of Hierarchical Manifold Clustering] 
Let $\alpha, \beta \in \borel_\infty (\real^d)$ be probability
measures with finite support, $A = \text{supp} \, [\alpha]$,
$B=\text{supp}\,[\beta]$, $\sigma > 0$ and $\gamma \geq 0$. Then,

\begin{equation*}
d_{GH} \left(\mathcal{H}(\mathbb{A}_{\alpha; \gamma, \sigma}),
\mathcal{H}(\mathbb{B}_{\alpha; \gamma, \sigma}) \right) \leq
\left(\frac{2 A_f \sigma}{C_d(\sigma)}+\gamma\right)
W_\infty(\alpha,\beta).\end{equation*}
\end{corollary}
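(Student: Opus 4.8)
The plan is to obtain the estimate as an immediate composition of the two stability results that precede the statement, namely Proposition \ref{P:dendro} (single linkage is $1$-Lipschitz for the Gromov--Hausdorff distance) and Theorem \ref{T:stab-metric} (stability of the tensorized metric spaces). The first thing I would check is that the inputs to $\mathcal{H}$ are legitimate: since $\alpha$ and $\beta$ have finite support, the sets $A = \mathrm{supp}\,[\alpha]$ and $B = \mathrm{supp}\,[\beta]$ are finite, so $\mathbb{A}_{\alpha;\gamma,\sigma} = (A, d_{\alpha;\gamma,\sigma})$ and $\mathbb{B}_{\beta;\gamma,\sigma} = (B, d_{\beta;\gamma,\sigma})$ are finite (pseudo-)metric spaces lying in the domain $\mathcal{M}$ of the single-linkage map $\mathcal{H}$.

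Next I would apply Proposition \ref{P:dendro} to the pair $\mathbb{A}_{\alpha;\gamma,\sigma}, \mathbb{B}_{\beta;\gamma,\sigma}$ to get
\[
d_{GH}\big(\mathcal{H}(\mathbb{A}_{\alpha;\gamma,\sigma}),\,\mathcal{H}(\mathbb{B}_{\beta;\gamma,\sigma})\big) \leq d_{GH}\big(\mathbb{A}_{\alpha;\gamma,\sigma},\,\mathbb{B}_{\beta;\gamma,\sigma}\big),
\]
and then bound the right-hand side by Theorem \ref{T:stab-metric}, which yields
\[
d_{GH}\big(\mathbb{A}_{\alpha;\gamma,\sigma},\,\mathbb{B}_{\beta;\gamma,\sigma}\big) \leq \left(\frac{2 A_f \sigma}{C_d(\sigma)}+\gamma\right)\dwassinf(\alpha,\beta).
\]
Concatenating the two displays gives exactly the claimed inequality, so the corollary is essentially functoriality of $d_{GH}$ under the $1$-Lipschitz map $\mathcal{H}$, precomposed with the tensor-field stability estimate.

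Because the argument is a composition, there is no genuine analytic obstacle; the only points needing care are the transfer of hypotheses. Theorem \ref{T:stab-metric} is stated for $\alpha,\beta \in \borel_\infty(\real^d)$ and a kernel meeting the conditions of Lemma \ref{L:kernel}, and finitely supported measures belong to $\borel_\infty(\real^d)$, so its hypotheses hold verbatim. The subtler point, which I regard as the main thing to verify, is the degenerate case $\gamma = 0$: there $\|\cdot\|_0$ is only a pseudo-metric (as flagged in the text), so strictly the inputs fall just outside the metric setting of Proposition \ref{P:dendro}. However, the proof of that proposition uses only the chain-of-points characterization of the single-linkage ultrametric together with the bound $|d_X - d_Y|$, neither of which invokes point separation; hence the argument extends without change to pseudo-metric inputs, and the chain of inequalities remains valid for all $\gamma \geq 0$.
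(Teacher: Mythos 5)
Your proof is correct and is essentially the paper's own argument: the paper obtains this corollary precisely by composing Proposition \ref{P:dendro} (single linkage is $1$-Lipschitz with respect to $d_{GH}$) with Theorem \ref{T:stab-metric}, exactly as you do. Your extra verification that the chain-of-points argument in Proposition \ref{P:dendro} survives the pseudo-metric case $\gamma = 0$ is a sound point of care that the paper leaves implicit.
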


\subsection{Comments About Consistency of Hierarchical Manifold Clustering}
A question that our paper leaves open is whether, under a reasonable generative model for the sampling from a collection of intersecting manifolds one may be able to prove that the empirical dendrogram converges in probability to a dendrogram that represents the spatial organization of the underlying manifolds. In the simpler context of clustering $n$ i.i.d. samples $\{x_1,\ldots,x_n\}$ from a compact metric space $(X,d_X)$ endowed with a Borel probability measure $\alpha_X$, it has been established in \cite{memoli10} that single linkage hierarchical clustering converges to a dendrogram whose hierarchical structure depends on the support of $\alpha_X$ in a precise way. 

In the case of flat clustering, i.e. when the goal is to obtain a single partition of the dataset, consistency results for some multi-manifold clustering methods based on local PCA are given in \cite{arias2011clustering,arias2011spectral,arias2013spectral}.



\subsection{Examples and Applications}

Let $X = \{x_1, \ldots, x_n\}$ be a dataset in $\real^d$.
For $\gamma, \sigma > 0$, we apply the single linkage method to
the metric space $\mathbb{X}_{\alpha_n; \gamma,\sigma} =
(X, d_{\alpha_n; \gamma,\sigma})$, where $\alpha_n$ is
the empirical measure associated with $X$ and
$d_{\alpha_n; \gamma,\sigma}$ is the distance defined
in \eqref{E:metric}.  The ultrametric associated with
$\mathcal{H} (\mathbb{X}_{\alpha_n; \gamma,\sigma})$
is abbreviated $u_{\alpha_n; \gamma,\sigma}$.

In this setting, analyzing informative dendrogram cutoff levels
often is an important task, which can be approached in
different ways, depending on the nature of the problem.
For example, a cutoff level $h$ may be based on a
pre-assigned number of clusters, be learned from
training data, or be more exploratory. We give examples
that illustrate all three viewpoints.
\begin{example}[Lines and planes]

In this experiment we consider the unlabeled point cloud in
Fig.\,\ref{F:clusters}(a) that represents an arrangement of two
parallel planes and two lines that intersect the planes transversely.
Each plane contains $225$ points on a uniform grid and each
line contains $30$ equally spaced points. Cutting the dendrogram
at four clusters, our method finds the four affine
linear subspaces accurately with the Gaussian kernel at
$\sigma = 0.6$. In this case, it is important to choose
$\gamma \ne 0$ since the spatial component
of \eqref{E:affinity} is needed to discriminate the parallel planes.
In Fig.\,\ref{F:clusters}(a), the points are colored according to
cluster membership. In this case, the covariance tensors at
data points on the planes that are away 'from the cluster intersections
have two dominating eigenvalues, whereas for points on the lines
they have only one such eigenvalue. Thus, an
analysis of the spectrum of the covariance tensors at data
points let us infer the dimension of each cluster. 
\end{example}
\begin{example}[A Line Arrangement] \label{S:3lines}
In this example, the point-cloud data represents
three intersecting lines in $\real^2$, as shown in Fig.\,\ref{F:3lines}(a).
Each line segment is sampled at $200$ equally spaced points.
Since the slopes of the lines are different, we expect the covariance
matrices to be able to cluster the points without the aid of additional
spatial information. Thus, we set $\gamma = 0$ in \eqref{E:affinity} and $\sigma = 0.4$. 
The number of clusters was set to six to test the ability of the
algorithm to detect not only the lines, but
also the three intersections. Fig.\,\ref{F:3lines}(b) shows the
single-linkage dendrogram, highlighting each of the six clusters.
The data points are colored according to cluster
membership.
\begin{figure}[h!]
\begin{center}
\begin{tabular}{cccc}
\includegraphics[width=0.22\linewidth]{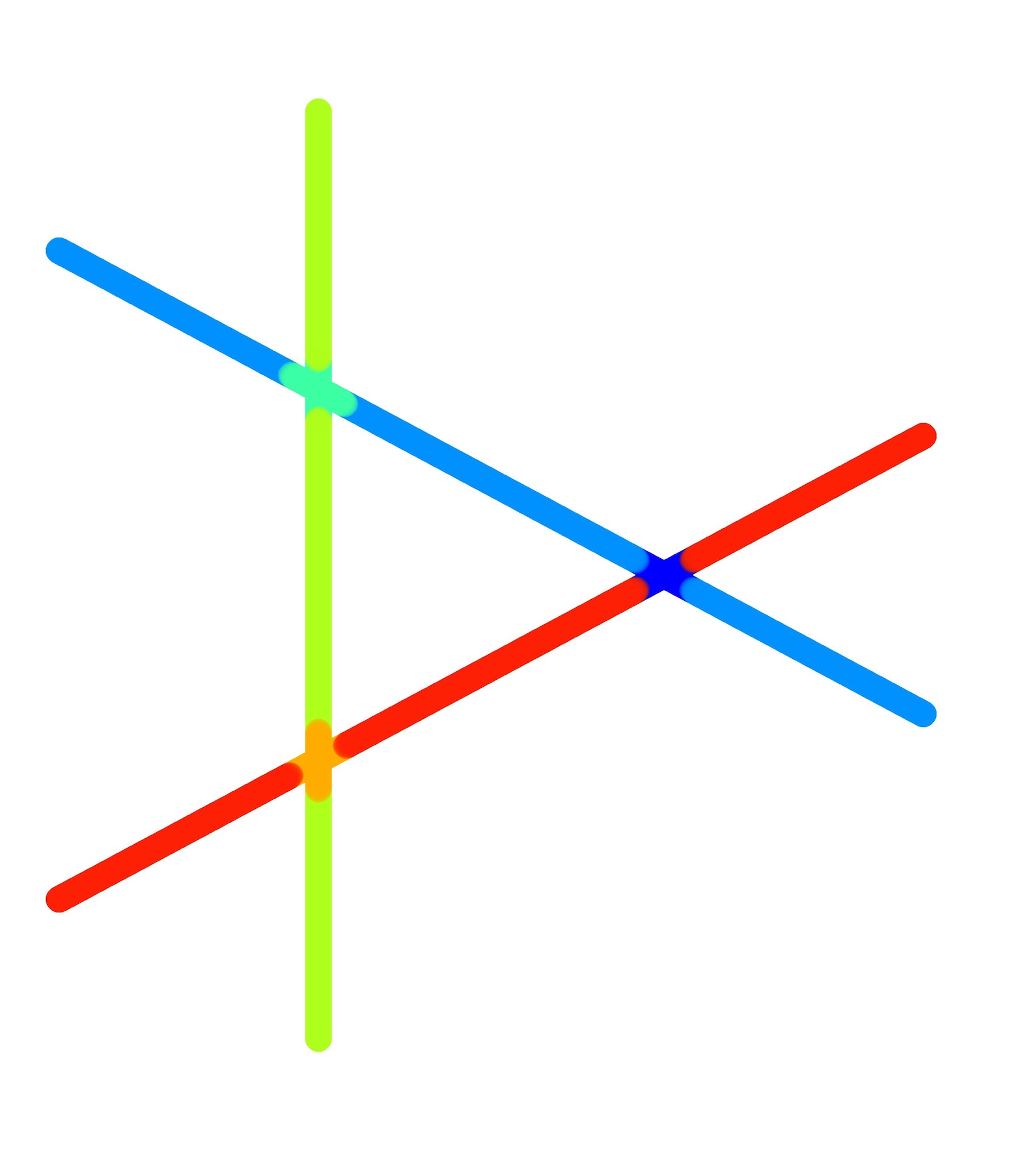} &
\includegraphics[width=0.2\linewidth]{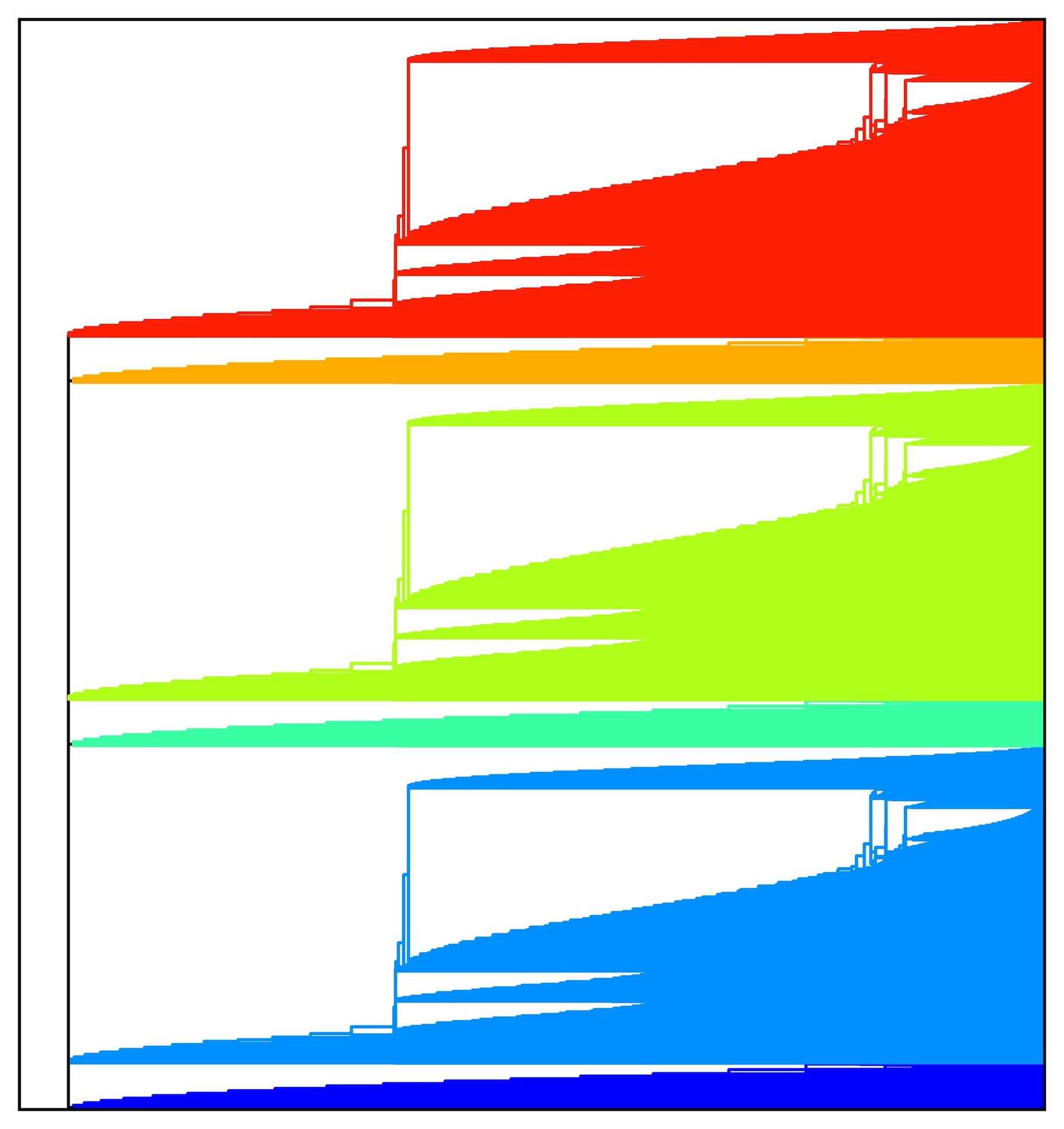} 
\quad & \quad
\includegraphics[width=0.22\linewidth]{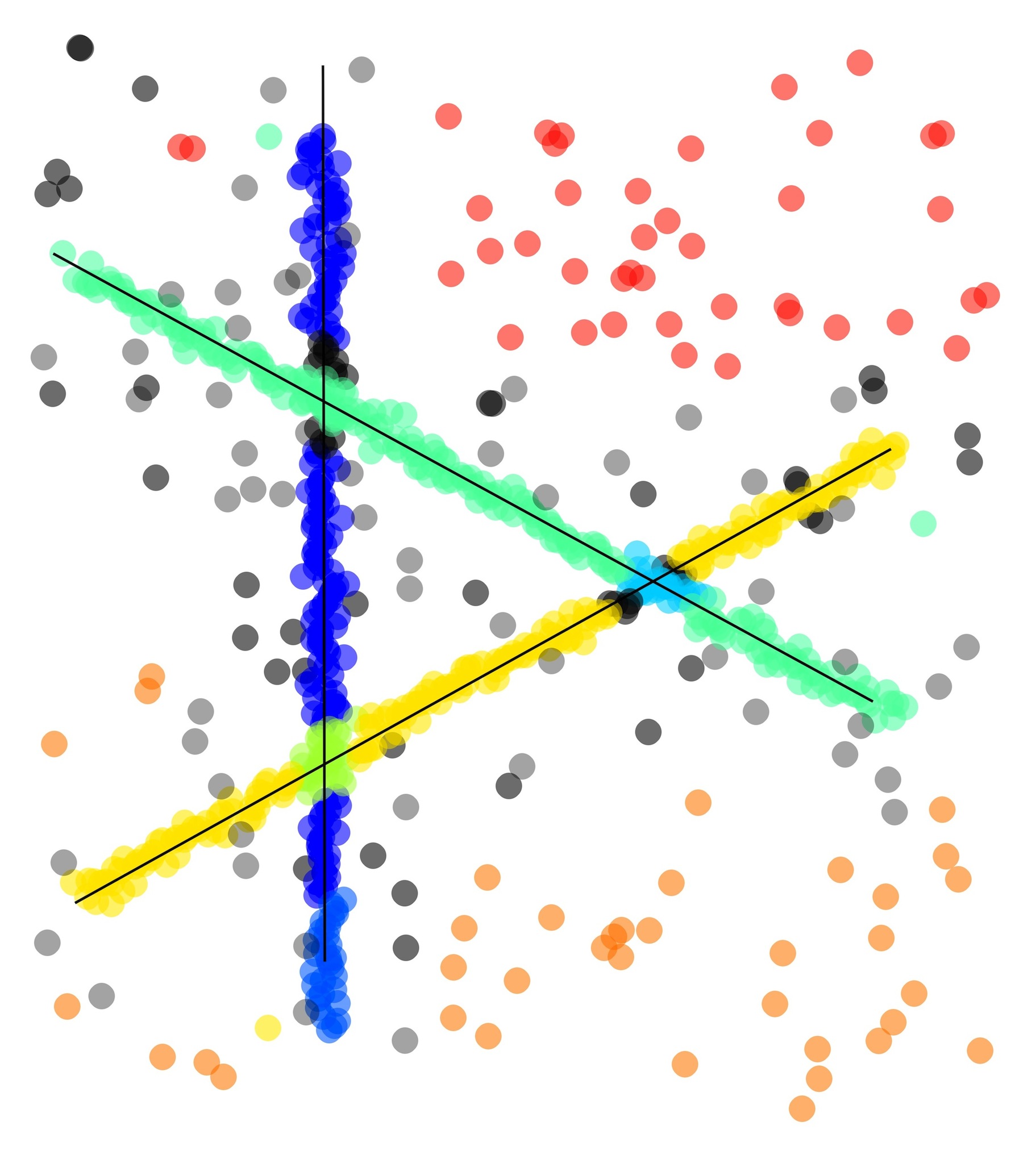} &
\includegraphics[width=0.24\linewidth]{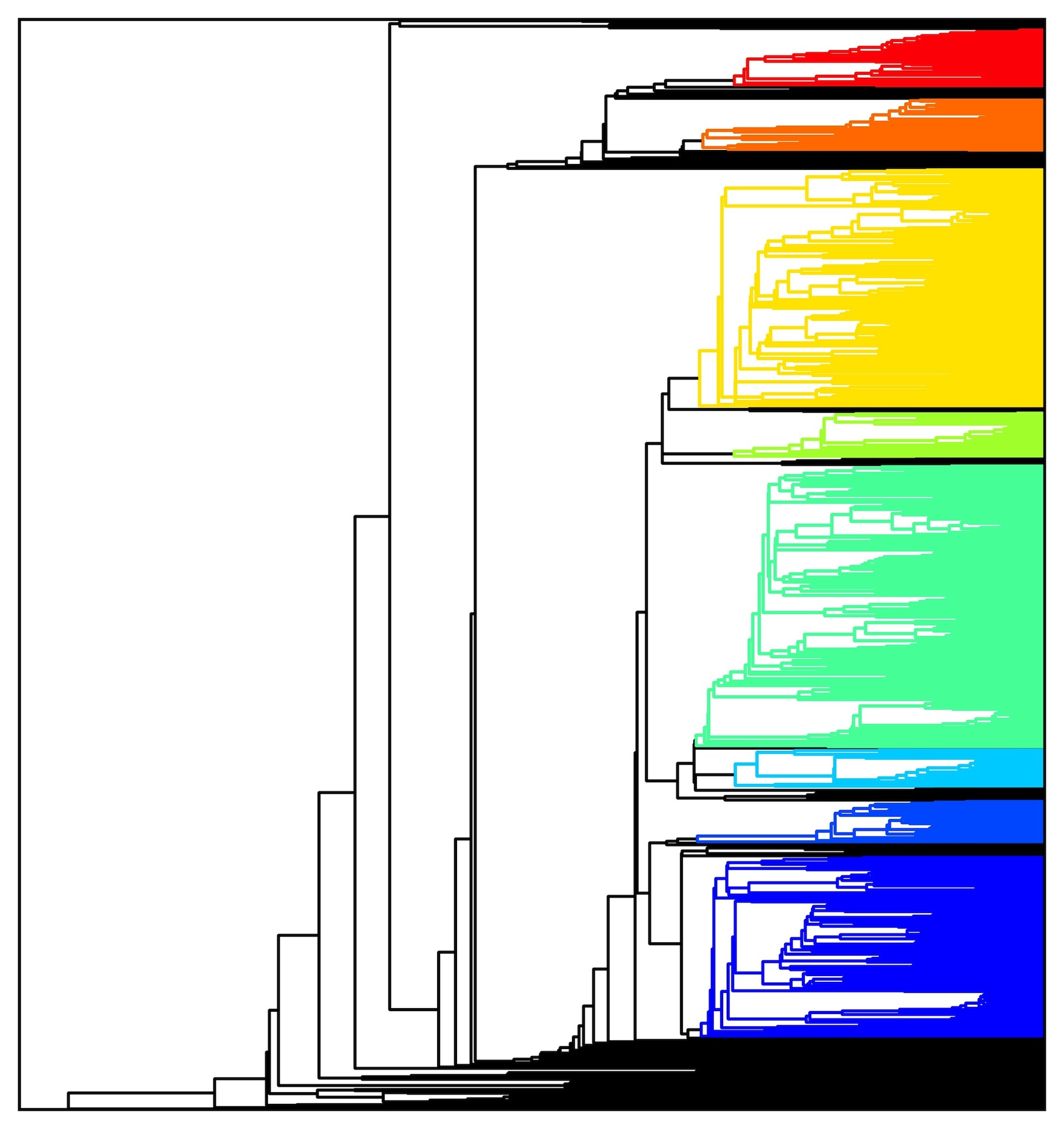} \\
(a) & (b) \quad & \quad (c) & (d)
\end{tabular}
\end{center}
\caption{(a) an arrangement of three lines and (b) clustering
dendrogram; (c) noisy lines with outliers and (d) clustering
dendrogram.}
\label{F:3lines}
\end{figure}
As expected, well delineated clusters are detected away from the
intersection points because the covariance matrices are highly anisotropic
with principal axes that align well with the corresponding line
segments. Although the covariance matrices are not as anisotropic
near the intersection points, there are enough differences in their
behavior near the three intersection loci for the algorithm to be able to
place them into different clusters.
\end{example}
The next two examples are of a more exploratory nature in
that dendrogram cutoff was chosen through experimentation with
the data.
\begin{example}[Noisy Lines with Outliers] \label{E:noisylines}

This is a noisy version of Example \ref{S:3lines}, as shown
in Fig.\,\ref{F:3lines}(c). As before, each line is represented by
200 points, but we have added Gaussian noise of width $0.015$
to the data, as well as 180 outliers sampled from the uniform distribution
on a rectangle containing the lines. Because of the nature of the data, the
number of clusters was set to $m=80$ so that the three main clusters
did not get merged because of the outliers. The figure also shows
a line fitted to each of the three largest clusters using principal component
analysis. The method was able to sharply recover the three lines, even in
the presence of noise and outliers. The majority of the 80 clusters
are singletons of outliers and these are colored black in the figure.
We remark that the choice of $\sigma = 0.51$ is crucial when dealing with
data contaminated by noise. In this case, it was also important to set
$\gamma \ne 0$ to better cope with noise.
\end{example}

\begin{example}[Floor cracks]
We apply the clustering method to segmentation of two images of concrete
floor cracks. Panel (a) of Fig.\,\ref{F:cracks} shows the original images,
whereas panel (b) shows binary images obtained from an edge detection
algorithm. We cluster the foreground pixels of the binary
images. As in Example \ref{E:noisylines}, it is important to allow a fairly
large number of clusters so that the clusters that detect the main
cracks do not get merged because of the noisy pixels.
Panels (c) and (d) show the outputs (not to scale) of the clustering
algorithm. 
\begin{figure}[h!]
\begin{center}
\begin{tabular}{cccc}
\begin{tabular}{c}
\includegraphics[width=0.17\linewidth]{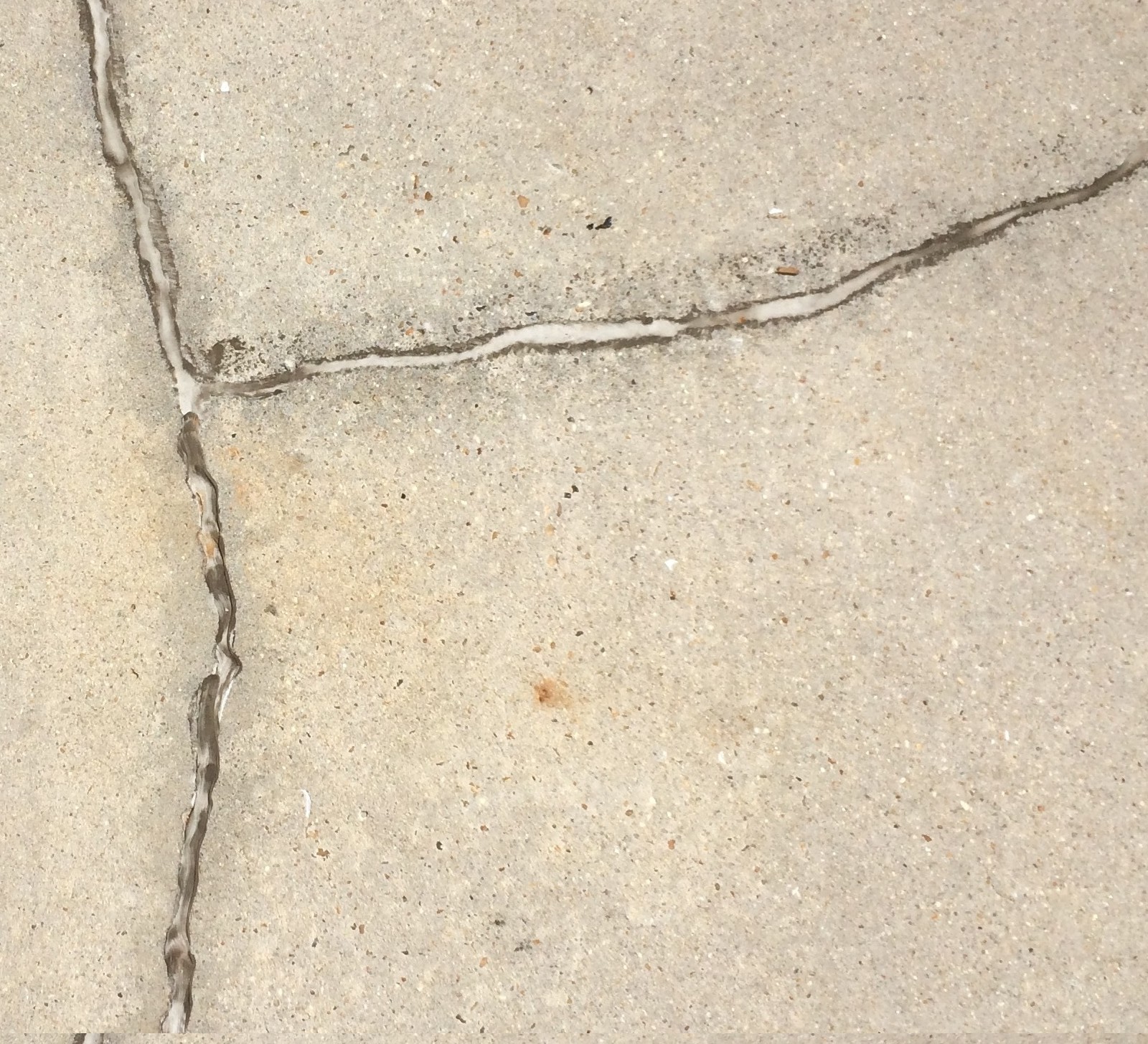} \\
\includegraphics[width=0.17\linewidth]{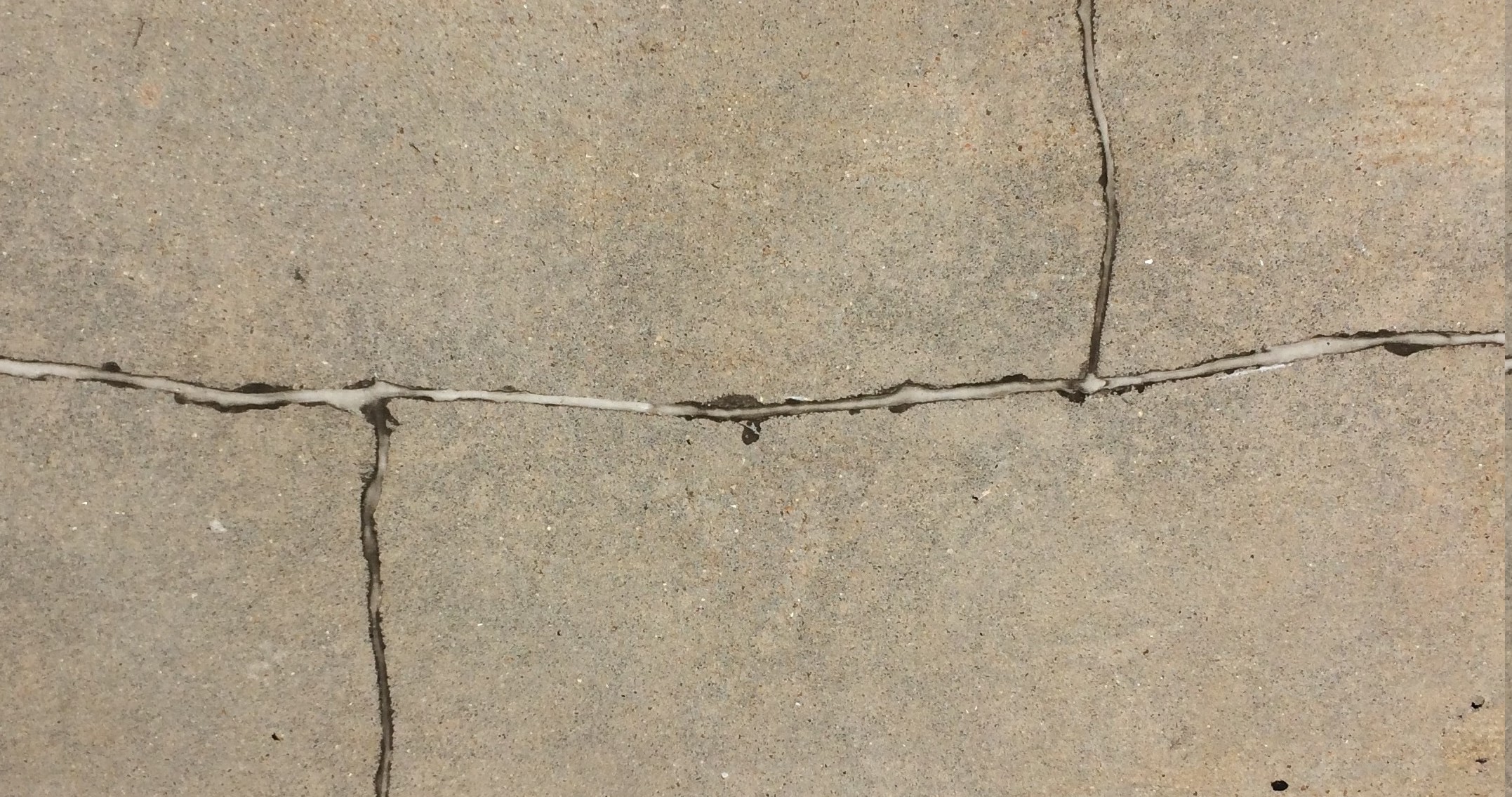}
\end{tabular}
&
\begin{tabular}{c}
\includegraphics[width=0.17\linewidth]{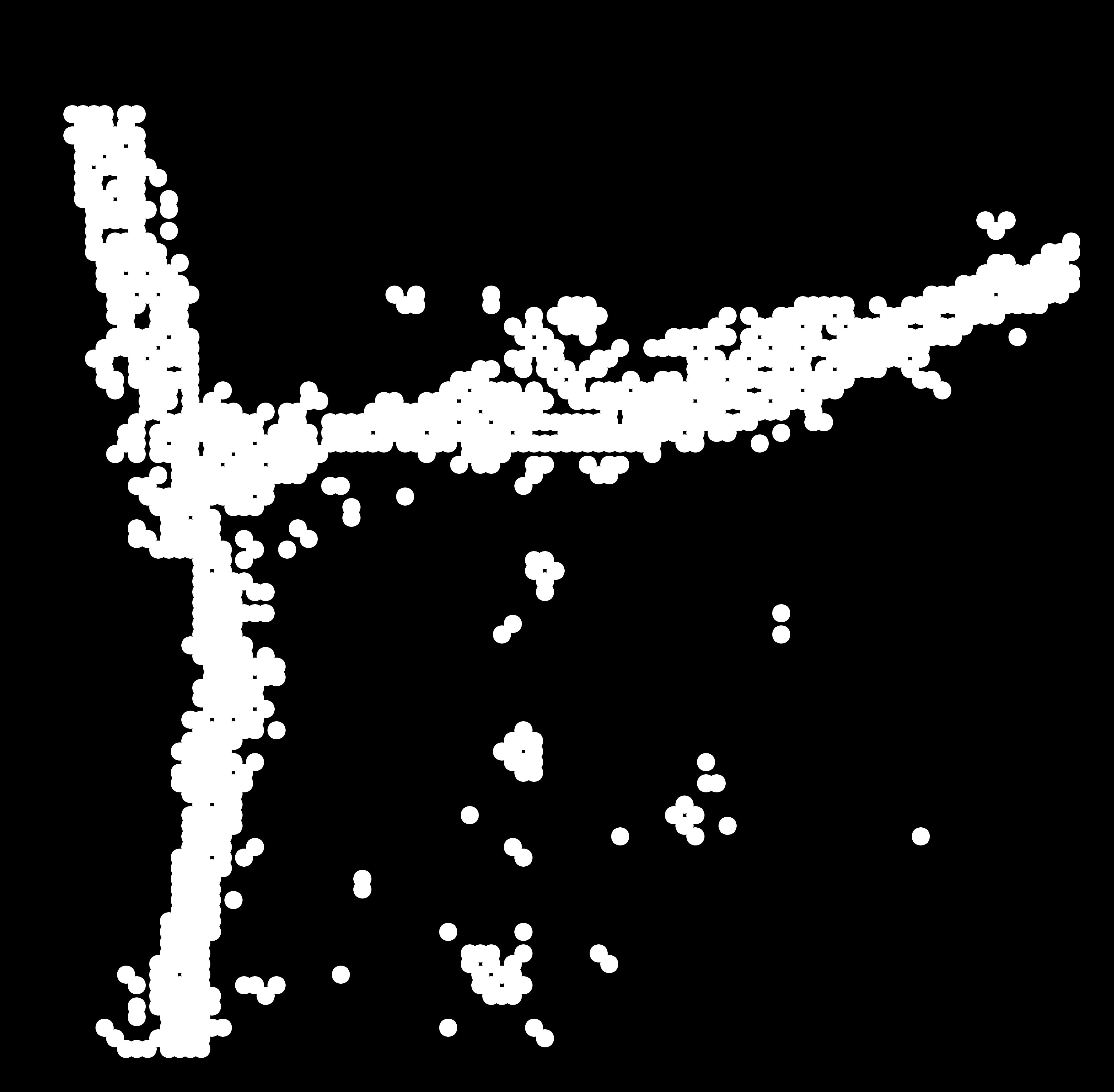} \\
\includegraphics[width=0.17\linewidth]{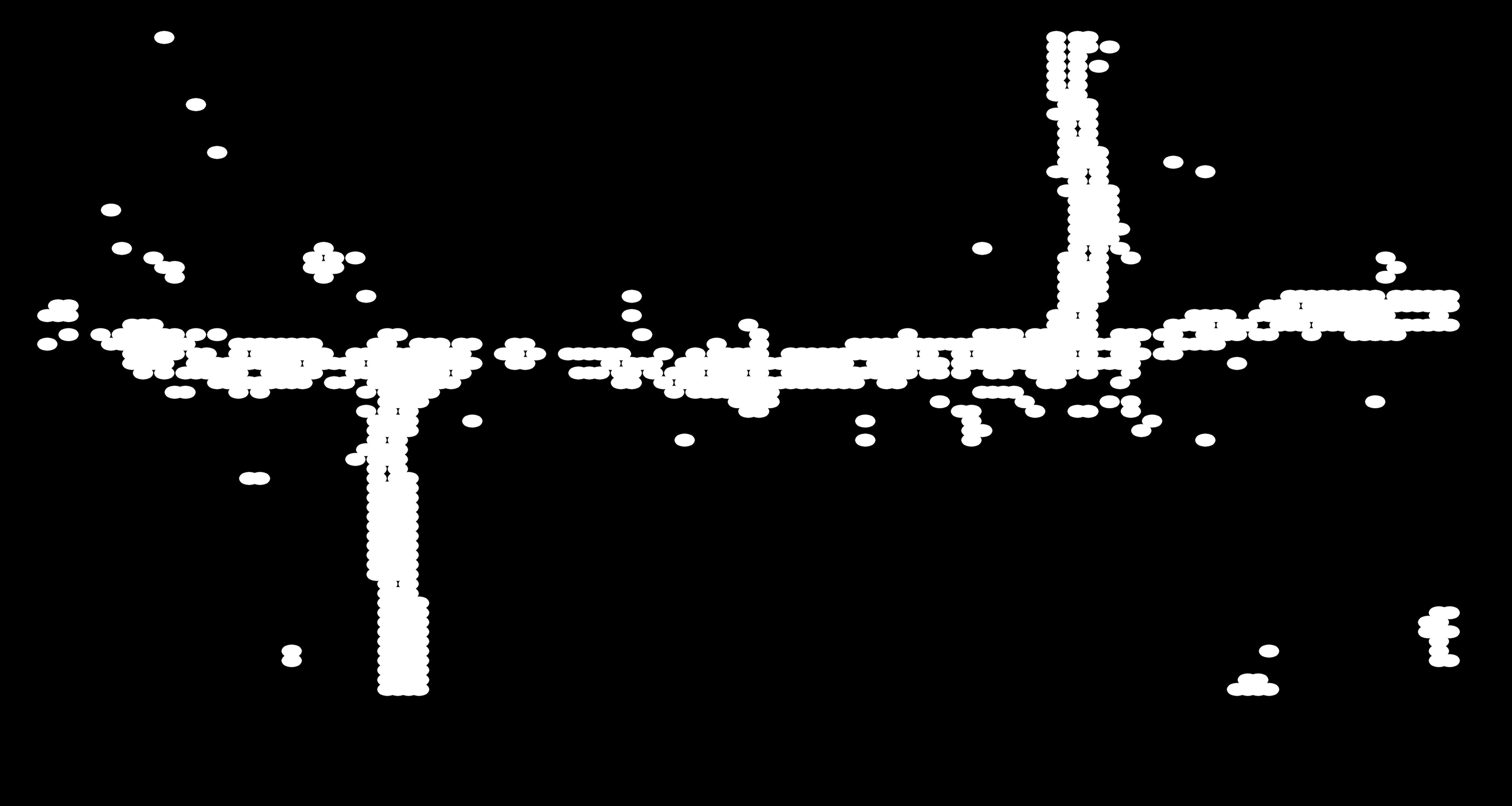}
\end{tabular}
\quad & \quad
\begin{tabular}{c}
\includegraphics[width=0.2\linewidth]{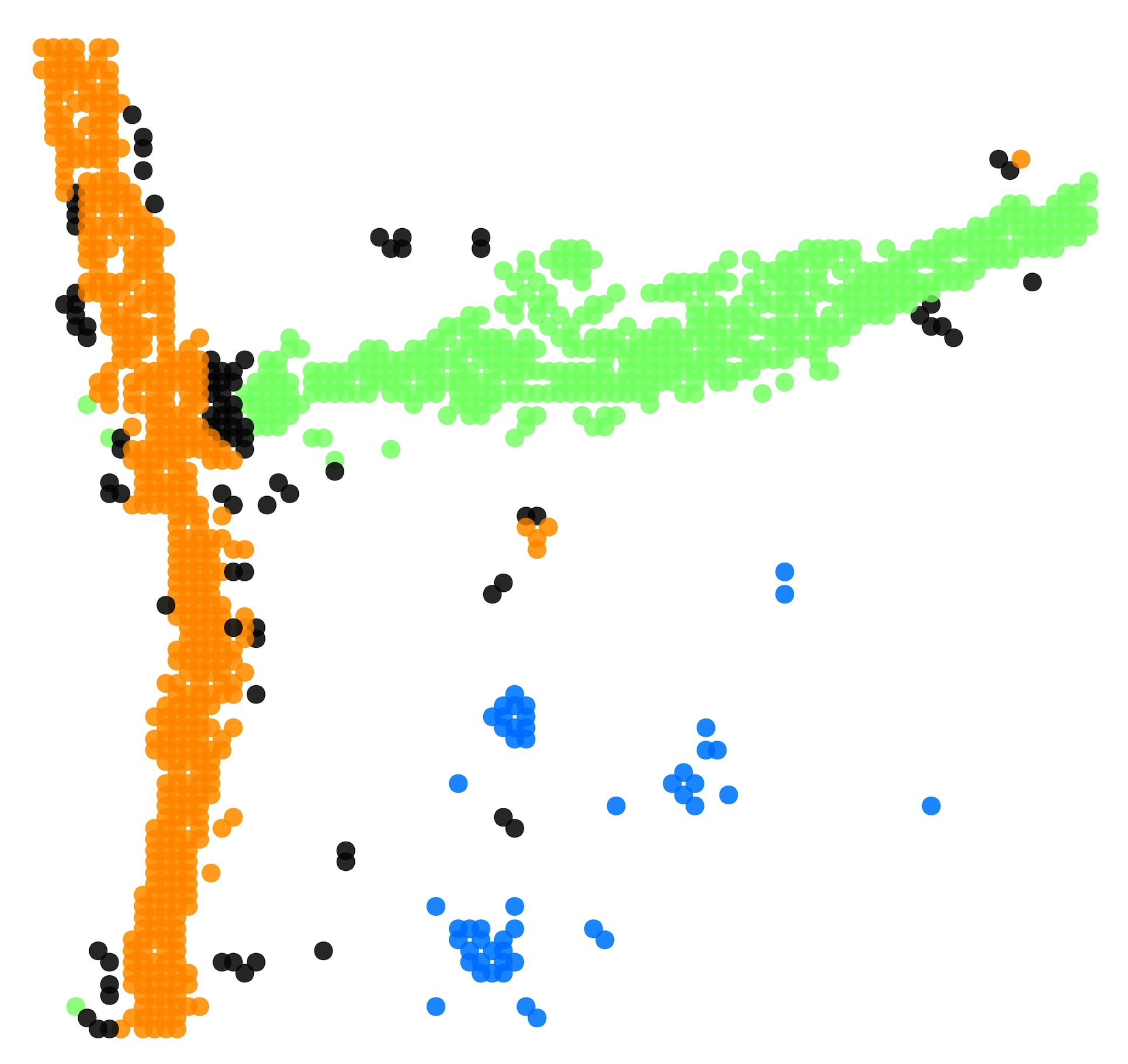}
\end{tabular}
&
\begin{tabular}{c}
\includegraphics[width=0.2\linewidth]{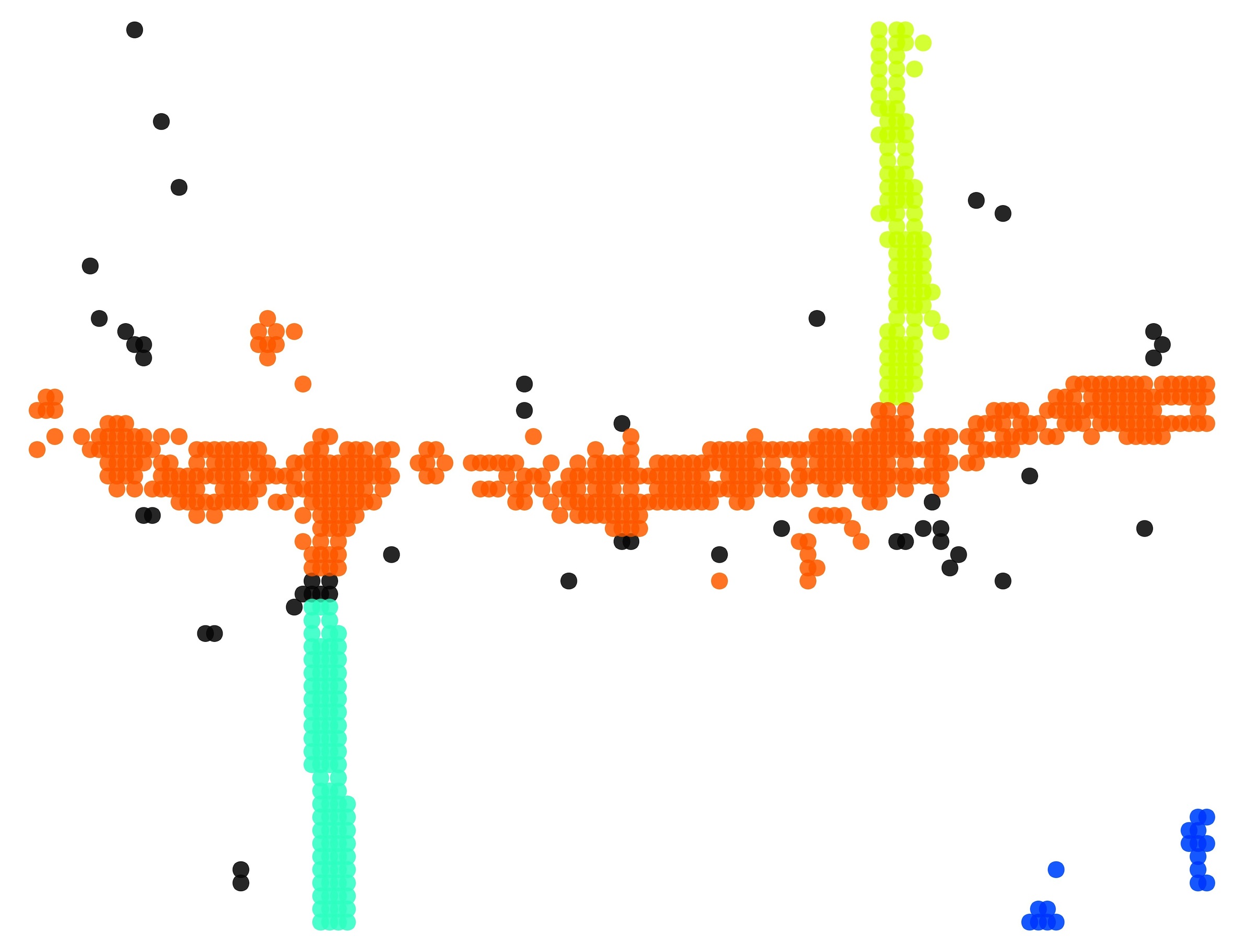}
\end{tabular} \\
(a) & (b) \quad & \quad (c) & (d)
\end{tabular}
\end{center}
\caption{(a) original and (b) processed images of floor cracks;
(c) and (d) show clustering based on CTFs.}
\label{F:cracks}
\end{figure}
\end{example}
To further test the ability of the method to cluster intersecting
manifolds, we experimented with synthetic data comprising
multiple arrangements such as the intersecting lines in
Fig.\,\ref{F:clusters}(b). 
\begin{example}
We consider three syntehtic datasets of point clouds
representing random arrangements of: (i) three line segments
in $\real^2$; (ii) four curves in $\real^2$ that are either
line segments or arcs of parabolas; and (iii) three
patches of planes in $\real^3$. Each of these datasets
contains a total of 250 point clouds, 50 used for training
the algorithm and 200 test samples. The points in each
point cloud are labeled to allow quantification of the
accuracy of the output of the algorithm.
Fig.\,\ref{F:arrangements} shows a few samples from
each of these datasets.
\begin{figure}[h!]
\begin{center}
   \includegraphics[width=0.15\linewidth]{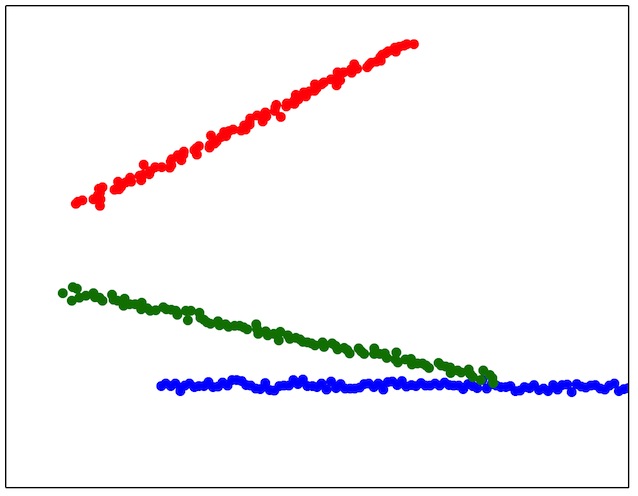}
   \includegraphics[width=0.15\linewidth]{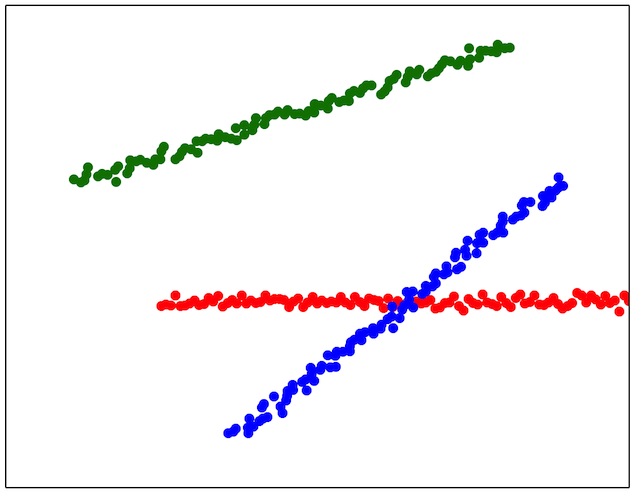}
   \includegraphics[width=0.15\linewidth]{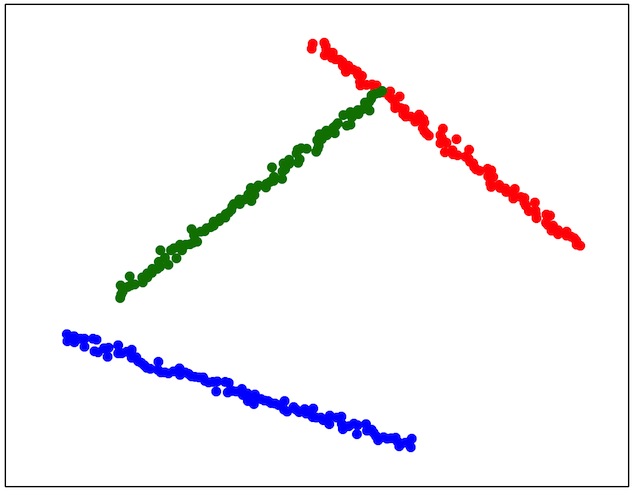}
   \includegraphics[width=0.15\linewidth]{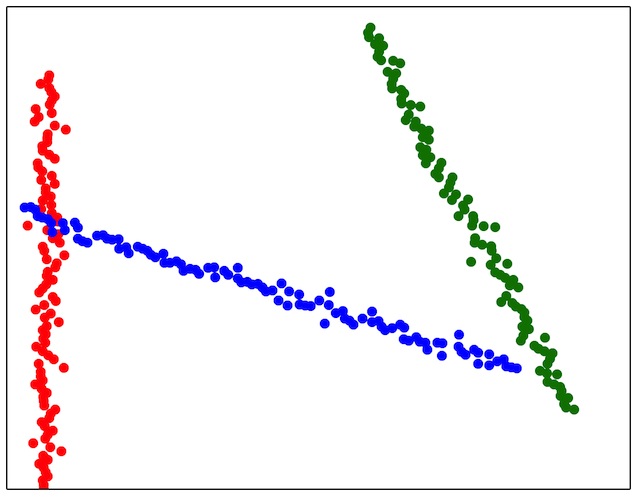}
   \includegraphics[width=0.15\linewidth]{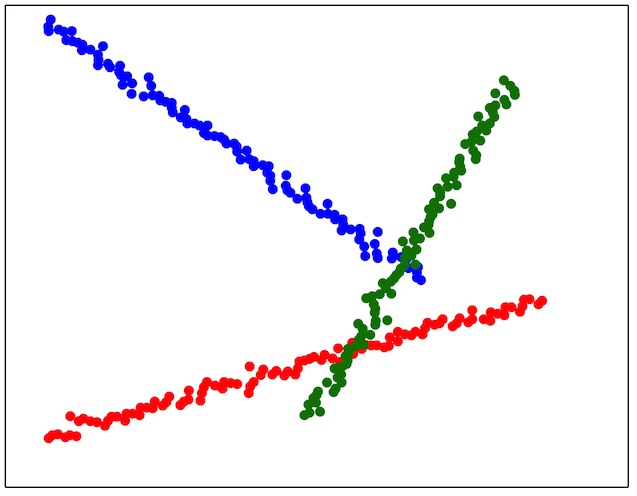}
   \includegraphics[width=0.15\linewidth]{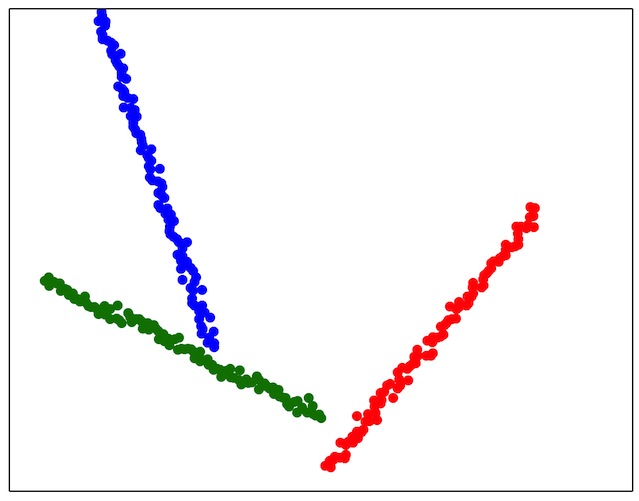}
\vspace{0.1in}\\
   \includegraphics[width=0.15\linewidth]{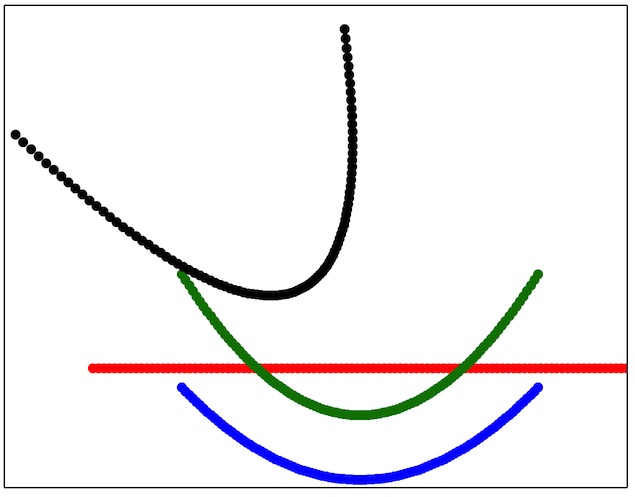}
   \includegraphics[width=0.15\linewidth]{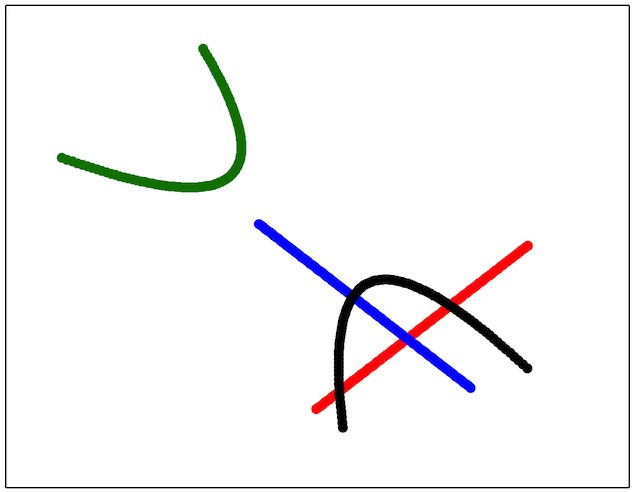}
   \includegraphics[width=0.15\linewidth]{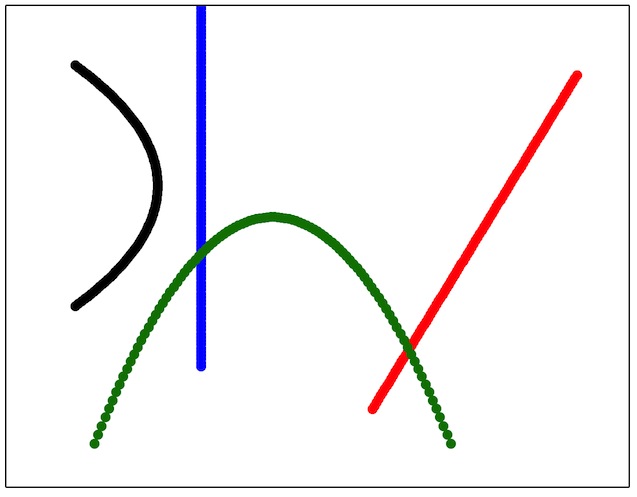}
   \includegraphics[width=0.15\linewidth]{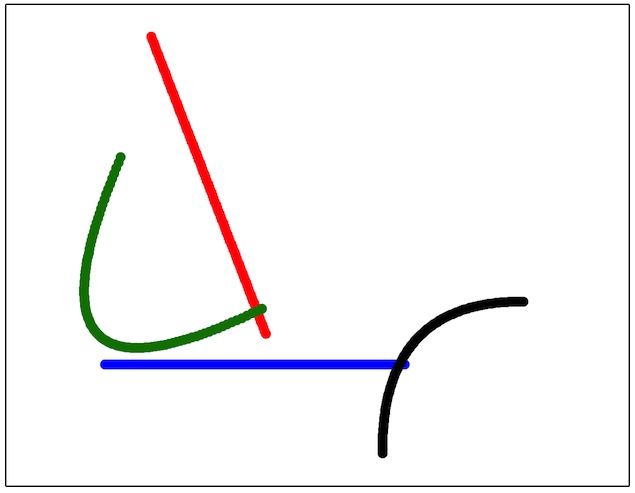}
   \includegraphics[width=0.15\linewidth]{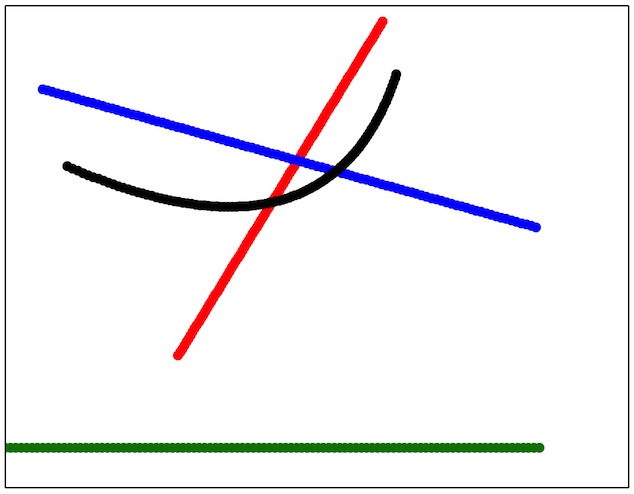}
   \includegraphics[width=0.15\linewidth]{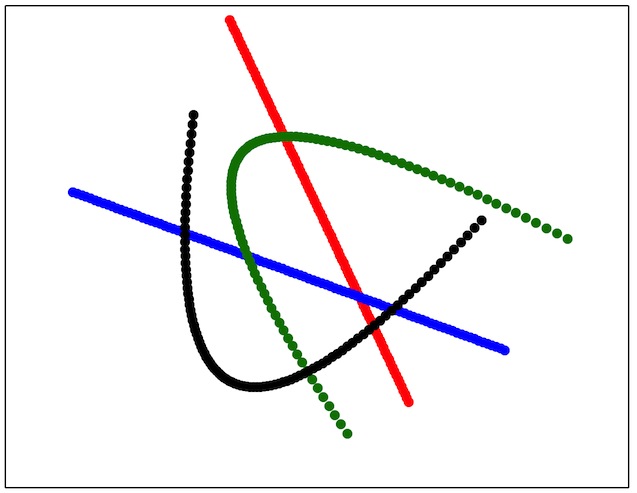}
\vspace{0.1in}\\
   \includegraphics[width=0.18\linewidth]{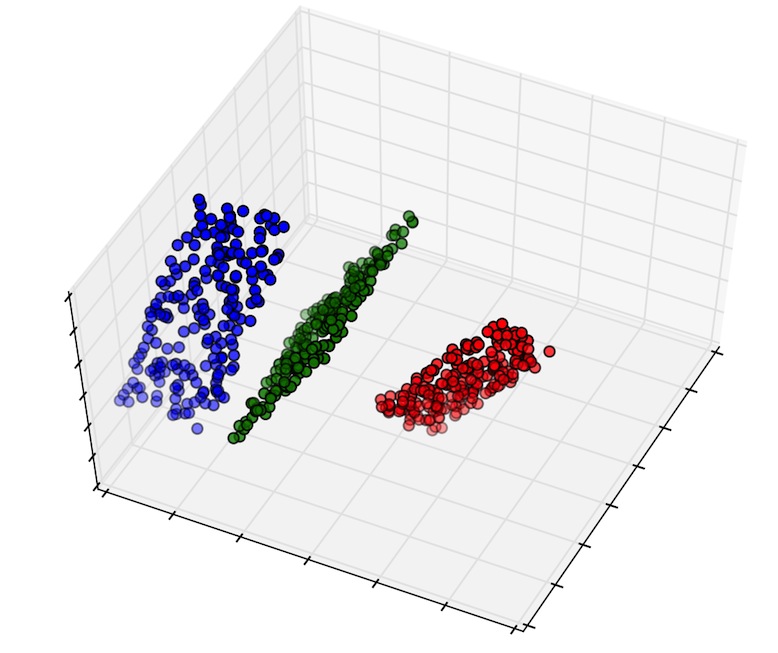}
   \includegraphics[width=0.18\linewidth]{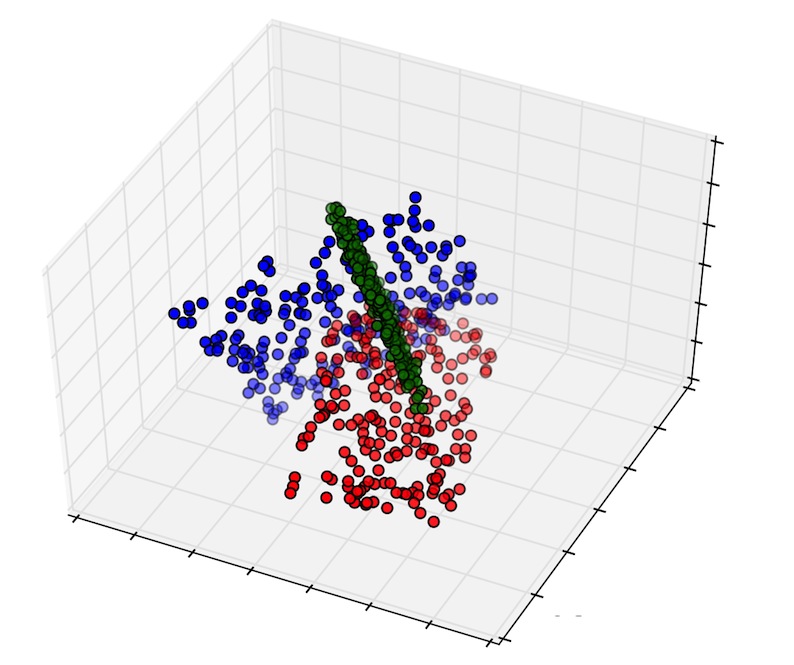}
   \includegraphics[width=0.18\linewidth]{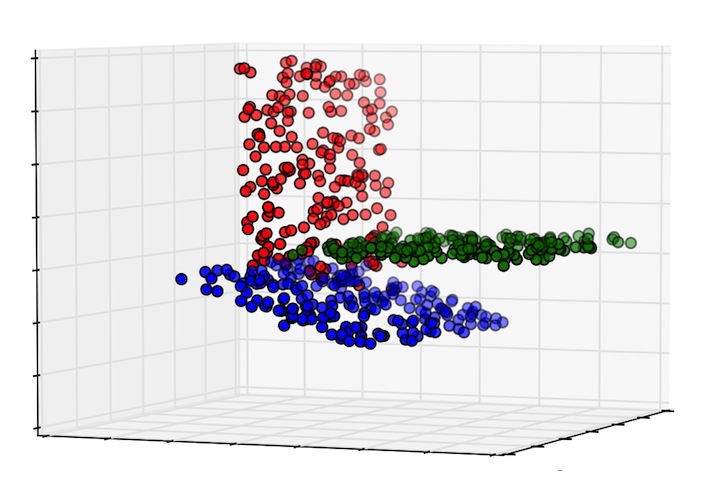}
   \includegraphics[width=0.18\linewidth]{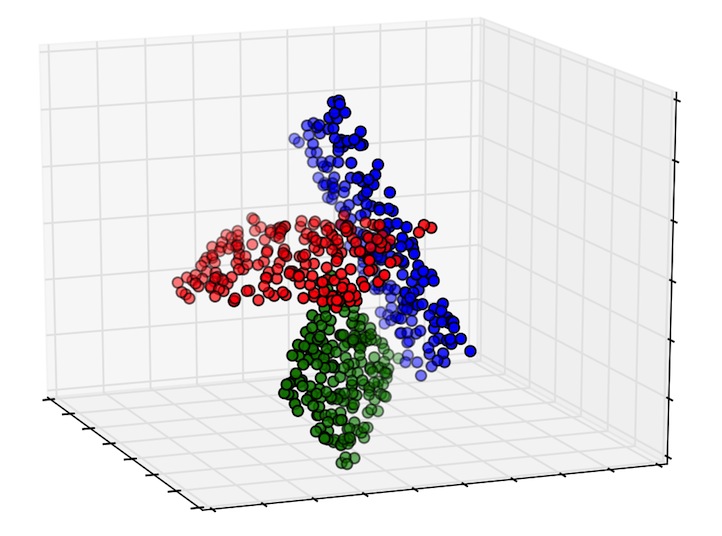}
   \includegraphics[width=0.18\linewidth]{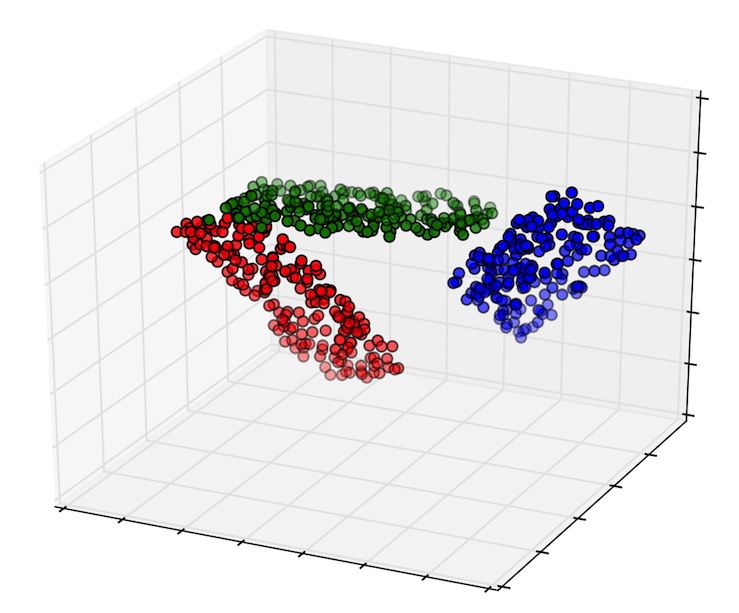}
\end{center}
\caption{Random arrangements of line segments (row 1),
segments of lines and parabolas (row 2), and plane
patches (row 3).}
\label{F:arrangements}
\end{figure}
Parameter values that optimize classification performance
are learned from the training samples. Note, however,
that even though the number $k$ of clusters is known, specifying a
height $h$ that yields precisely $k$ clusters may yield
undesirable results. For example, important clusters representing
different components of an arrangement of manifolds may
get merged due to the presence of outliers or the behavior near
the intersections. Thus, it is often
preferable to choose a lower cutoff level before this phenomenon
occurs at the expense of getting a larger number $\ell$ of clusters.
In such situations, we select the largest $k$ clusters and
assign each point in the remaining $(\ell - k)$ clusters
to the closest of the top $k$ clusters. Experiments indicate
that a good baseline for the cutoff level $h$ is the mean
cophenetic distance, which for a point cloud $\{x_1, \ldots,
x_n\} \subset \real^d$ is given by
\begin{equation}
h_0 = \frac{2}{n (n-1)} \sum_{i<j}
u_{\alpha_n; \gamma,\sigma} (x_i, x_j) \,.
\end{equation}
In the learning phase, we
typically search for $h$ in a neighborhood of $h_0$
whose width is determined by the variance of the
distribution of the cophenetic distances.

With the learned parameter values, the algorithm performs
well in all three cases. For each point cloud
we count the number of misclassified points and
calculate the average error (AE) and the mean error (ME) rates
over all test samples obtaining:
\begin{itemize}
\item[(i)] arrangements of lines: 9.59\% (AE) and 4.17\% (ME);
\item[(ii)] lines and parabolas: 9.93\% (AE) and 3.38\% (ME);
\item[(iii)] arrangements of planes: 7.00\% (AE) and 2.42\% (ME).  
\end{itemize}
As expected, a closer inspection of the results reveals that
most of the errors occur at points near the intersections
of the clusters, where the covariance tensors are not as
informative for clustering purposes.

\end{example}


\section{Concluding Remarks} \label{S:summary}

We introduced the notion of multiscale covariance tensor
fields associated with Euclidean random variables and developed
a framework for the systematic study of the shape of data
using localized covariance tensors. We investigated foundational
questions such as stability and consistency of multiscale
CTFs, provided illustrations of how CTFs let us uncover
geometry underlying data, and applied the methods to manifold
clustering. We also introduced multiscale Fr\'{e}chet functions,
which are scalar fields derived from CTFs that fully capture
the distributions of random vectors. Multiscale Fr\'{e}chet
functions are particularly well suited for extension
of the methods of this paper to non-Euclidean random variables,
a problem that is receiving ever increasing attention in
data science. In this setting, the goal is to devise methods that
can cope with random variables taking values in spaces such
as Riemannian manifolds and more general metric spaces.
Unless restrictive assumptions are imposed on the sample
space and the distributions, CTFs may be difficult to define
in this nonlinear realm. In contrast, the Fr\'{e}chet function
formulation can be easily extended to metric spaces supporting
a diffusion kernel \cite{coifman06}. In forthcoming work, we will
investigate theoretical and computational aspects of such extensions,
including the accessibility of information residing
in multiscale Fr\'{e}chet functions, a problem that poses
computational challenges even in the case of high-dimensional
Euclidean random variables.

In this paper, we only considered radial basis kernels;
however, many results extend easily to more general kernels.
We emphasized the multiscale formulation largely because
of the questions that motivated this work. Nevertheless, the
majority of the results apply to kernels that are not 
scale dependent.

Covariance tensor fields also suggest ways of formalizing
the notion of shape of Euclidean data and probability measures.
For example, for a distribution $\alpha$ with the property
that  the covariance tensor field $\Sigma_\alpha (\cdot, \sigma)$
associated with a smooth kernel  (such as the Gaussian kernel) is
non-singular for every $x \in \real^d$,
$\Sigma_\alpha^{-1}(\cdot, \sigma)$ defines a metric tensor
with close ties to $\alpha$. This poses the problem
of uncovering relationships between Riemannian metrics
derived from CTFs, such as $\Sigma_\alpha^{-1}(\cdot, \sigma)$,
and the shape of $\alpha$. 

In a different direction, for a fixed point $x\in \real^d$, an interesting problem
is that of capturing the values of $\sigma$ for which $\Sigma_\alpha(x,\sigma)$
exhibits a ``jump'' in behavior.  This study, in the context of images, gives
rise to notions of {\em local scales}. Knowledge of local scales for each
point $x$ leads to criteria for selecting important, salient points in the
spirit of SIFT \cite{lowe2004,mmm13}. The concept of local scales arose
first in the context of images \cite{jones-le} and was later extended to
probabilty distributions \cite{le-scales}. The notions of local scales in
\cite{jones-le,le-scales} were isotropic. Thus, future developments
related to characterizing shape using CTFs are suggested by the
possibility of  constructing notions of local scales on general shapes
\cite{le-scales,mmm13} which ---by exploiting the tensorial nature
of $\Sigma_\alpha(x,\sigma)$--- become sensitive to direction.



\section*{Data Accessibility}
The synthetic data used in the manifold clustering experiments
is available at \url{https://bitbucket.org/diegodiaz-math/ctf-files/}.  


\section*{Acknowledgements}
This research was supported in part by NSF grants IIS-1422400,
DMS-1418007 and DBI-1262351, and by the Erwin Schr\"{o}dinger Institute
in Vienna. We thank Dejan Slep\v{c}ev for a discussion about
the results of \cite{gts15}.



\bibliographystyle{elsarticle-num} 
\bibliography{ctf}






\end{document}